\definecolor{dblue}{RGB}{98, 140, 190}
\definecolor{dlblue}{RGB}{216, 235, 255}
\definecolor{dgreen}{RGB}{124, 155, 127}
\definecolor{dpink}{RGB}{207, 166, 208}
\definecolor{dyellow}{RGB}{255, 248, 199}
\definecolor{dgray}{RGB}{46, 49, 49}
\newcommand{\durl}[1]{\textcolor{dblue}{\underline{\url{#1}}}}
\newmdenv[
  topline=false,
  bottomline=false,
  rightline = false,
  leftmargin=10pt,
  rightmargin=0pt,
  innertopmargin=0pt,
  innerbottommargin=0pt
]{innerproof}
\newcounter{KDefCounter}
\newcommand{\ddef}[2]
{
\vspace{1mm}
\refstepcounter{KDefCounter}
{\bf Definition \theKDefCounter} (#1): {\it #2}
}
\newtheorem{assumption}{Assumption}
\newtheorem{corollary}{Corollary}
\newtheorem{lemma}{Lemma}
\newtheorem{remark}{Remark}
\newtheorem{theorem}{Theorem}
\newcommand{\AF}{{{\mathcal{A} \mathcal{F}}_{{\cal I}^{\rightarrow}}}}
\newcommand{\AFnoarrow}{{{\mathcal{A} \mathcal{F}}_{{\cal I}}}}
\newcommand{\TEI}{{\rm I}^{\rightarrow}_o}
\newcommand{\AFI}{{{\mathcal{A} \mathcal{F}}_{{\cal I}}}}
\newcommand{\TEISet}{{\cal I}^{\rightarrow}}
\definecolor{friendlygreen}{RGB}{77, 175, 74}
\definecolor{friendlyblue}{RGB}{55, 126, 184}
\definecolor{friendlyred}{RGB}{228, 26, 28}
\newenvironment{enumerate*}%
  {\begin{enumerate}%
    \vskip 0.1in%
    \setlength{\itemsep}{0pt}%
    \setlength{\parskip}{0pt}}%
  {\end{enumerate}%
   \vskip -0.1in}
\title{Temporally Abstract Partial Models}
\author{Khimya Khetarpal \thanks{Correspondence to khimya.khetarpal@mail.mcgill.ca} \textsuperscript{ \rm 1,\rm 2}, Zafarali Ahmed \textsuperscript{\rm 3}, Gheorghe Comanici \textsuperscript{\rm 3}, Doina Precup\textsuperscript{\rm 1,\rm 2,\rm 3} \\   
\textsuperscript{\rm 1}McGill University, \textsuperscript{\rm 2}Mila, $^3$DeepMind}
\begin{document}

\maketitle

\begin{abstract}
 Humans and animals have the ability to reason and make predictions about different courses of action at many time scales. In reinforcement learning, option models (Sutton, Precup \& Singh, 1999; Precup, 2000) provide the framework for this kind of temporally abstract prediction and reasoning. Natural intelligent agents are also able to focus their attention on courses of action that are relevant or feasible in a given situation, sometimes termed affordable actions. In this paper, we define a notion of affordances for options, and develop temporally abstract partial option models, that take into account the fact that an option might be affordable only in certain situations. We analyze the trade-offs between estimation and approximation error in planning and learning when using such models, and identify some interesting special cases. Additionally, we demonstrate empirically the potential impact of partial option models on the efficiency of planning.
\end{abstract}

\section{Introduction}
\label{sec:introduction}
Intelligent agents flexibly reason about the applicability and effects of their actions over different time scales, which in turn allows them to consider different courses of action. Yet modeling the entire complexity of a realistic environment is quite difficult and requires a lot of data~\citep{kakade2003sample}. Animals and people exhibit a powerful ability to control the modelling process by understanding which actions deserve any consideration at all in a situation. By anticipating only certain aspects of their effects over different horizons may make models more predictable or easier to learn. In this paper we develop the theoretical underpinnings of how such an ability could be defined and studied in sequential decision making. We work in the context of model-based reinforcement learning (MBRL)~\citep{sutton2018introduction} and temporal abstraction in the framework of options~\cite{sutton1999between}.   
Theories of embodied cognition and perception suggest that humans are able to represent the world knowledge in the form of {\it internal models} across different time scales~\citep{pezzulo2016navigating}. Option models provide a framework for RL agents to exhibit the same capability. Options define a way of behaving, including a set of states in which an option can start, an internal policy that is used to make decisions while the option is executing, and a stochastic, state-dependent termination condition. Models of options predict the (discounted) reward that an option would receive over time and the (discounted) probability distribution over the states attained at termination~\citep{sutton1999between}. Consequently, option models enable the extension of dynamic programming and many other RL planning methods in order to achieve temporal abstraction, i.e. to be able to consider seamlessly different time scales of decision-making. 

Much of the work on learning and planning with options considers that they apply everywhere~\citep{bacon2017option, harb2017waiting, harutyunyan2019per, harutyunyan2019termination}, with some notable recent exceptions which generalize the notion of initiation sets in the context of function approximation~\citep{khetarpal2020options}. Having options that are partially defined is very important in order to control the complexity of the planning and exploration process. However, the notion of partially defined option models, which  make predictions only from a subset of states, has not yet been explored. This is the focus of our paper.  

In natural intelligence, the ability to make predictions across different scales is linked with the ability to understand the {\it action possibilities} (i.e. affordances)~\citep{gibson1977theory} which arise at the interface of an agent and an environment and are a key component of successful adaptive control~\citep{Fikes1972, Korf1983, Drescher1991, cisek2010neural}. 
Recent work~\citep{khetarpal2020i} has described a way to implement affordances in RL agents, by formalizing a notion of {\em intent} over state space, and then defining an affordance as the set of state-action pairs that {\em achieve} that intent to a certain degree. One can then plan with partial, approximate models that map affordances to intents, incurring a quantifiable amount of error at the benefit of faster learning and deliberation.
In this paper, we generalize the notion of intents and affordances to option models. As we will see in Sec.~\ref{sec:affordances}, this is non-trivial and requires carefully inspecting the definition of option models.
The resulting temporally abstract models are partial, in the sense that they apply only in certain states and options.

\textbf{Key Contributions.} We present a framework defining temporally extended intents, affordances and abstract partial option models (Sec.~\ref{sec:affordances}). We derive theoretical results quantifying the loss incurred when using such models for planning, exposing trade-offs between single-step models and full option models (Sec.~\ref{sec:valueandplanningloss}). Our theoretical guarantees provide insights and decouple the role of affordances from temporal abstraction. Empirically, we demonstrate end-to-end learning of affordances and partial option models, showcasing significant improvement in final performance and sample efficiency when used for planning in the Taxi domain (Sec.~\ref{sec:experiments}). %

\section{Background}
\label{sec:background}
In RL, a decision-making agent interacts with an environment through a sequence of actions, in order to learn a way of behaving (aka policy) that maximizes its value, i.e. long-term expected return~\citep{sutton2018introduction}. This process is typically formalized as a Markov Decision Process (MDP). A finite MDP is a tuple $M=\langle {\cal S}, {\cal A}, r, P, \gamma \rangle $, where ${\cal S}$ is a finite set of states, ${\cal A}$ is a  finite set of actions, $r: {\cal S} \times {\cal A}\rightarrow [0, R_{\max}$] is the reward function, $P:{\cal S} \times {\cal A} \rightarrow Dist({\cal S})$ is the  transition dynamics, mapping state-action pairs to a distribution over next states, and $\gamma \in [0,1)$ is the discount factor. At each time step $t$, the agent observes a state $s_t \in {\cal S}$ and takes an action $a_t \in {\cal A}$ drawn from its policy $\pi : {\cal S} \rightarrow Dist({\cal A})$ and, with probability $P(s_{t+1}|s_t,a_t)$, enters the next state $s_{t+1}\in{\cal S}$ while receiving a numerical reward $r(s_t, a_t)$. The value function of policy $\pi$ in state $s$ is the expectation of the long-term return obtained by executing  $\pi$ from $s$, defined as: $V_\pi(s) = E\left[\sum_{t=0}^{\infty} \gamma^{t} r(S_t, A_t) \big| S_0 = s, A_t \sim \pi(\cdot|S_t), S_{t+1} \sim P(\cdot|S_t, A_t) \; \forall t\right]$.

The goal of the agent is to find an optimal policy, $\pi^*=\arg\max_{\pi} V^{\pi}$. If the model of the MDP, consisting of $r$ and $P$, is given, the value iteration algorithm can be used to obtain the optimal value function, $V^*$, by computing the fixed-point of the Bellman equations~\citep{bellmann1957dynamic}: $V^*(s) = \max_a \Big( r(s,a) + \gamma \sum_{s'} P(s'|s,a) V^*(s') \Big), \forall s.$ The optimal policy $\pi^*$ can be obtained by acting greedily with respect to $V^*$.

\textbf{Semi-Markov Decision Process (SMDP).}
An SMDP~\citep{puterman1994markov} is a generalization of MDPs, in which the amount of time between two decision points is a random variable. The transition model of the environment is therefore a joint distribution over the next decision state and the time, conditioned on the current state and action. SMDPs obey Bellman equations similar to those for MDPs.

\textbf{Options.} Options~\citep{sutton1999between} provide a framework for temporal abstraction which builds on SMDPs, but also leverages the fact that the agent acts in an underlying MDP. A Markovian option $o$ is composed of an \textit{intra-option policy} $\pi_o$, a termination condition $\beta_o: {\cal S} \rightarrow Dist({\cal S})$, where $\beta_o(s)$ is the probability of terminating the option upon entering $s$, and an initiation set $I_o \subseteq {\cal S}$. Let $\Omega$ be the set of all options and $\Omega(s) = \{ o | s \in I_o \}$ denote the set of available options at state $s$. In \textit{call-and-return} option execution, when an agent is at a decision point, it examines its current state $s$, chooses $o \in \Omega(s)$ according to a policy over options $\pi_\Omega(s)$, then follows the internal policy $\pi_o$, until the option terminates according to $\beta_o$. Termination yields a new decision point, where this process is repeated.

\textbf{Option Models.} The model of an option $o$ predicts its reward and transition dynamics following a state $s\in I_o$, as follows: $r(s,o) \doteq  E[ R_{t+1} + \gamma R_{t+2} + \cdots  + \gamma^{k -1} R_{t+k} | S_t=s, O_t=o],$ and $  p(s'|s,o) \doteq \sum_{k=1}^{\infty}  Pr(S_k=s', T_k=1, T_{0<i<k}=0 | S_0=s, A_{0:k-1} \sim \pi_o, T_{0:k-1} \sim \beta_o) =  \sum_{k=1}^{\infty} \gamma^{k} p(s',k|s,o) $, where $T_i$ is an indicator variable equal to $1$ if the option terminates upon entering state $i$, and $0$ otherwise. $p(s',k|s,o)$ is the probability that option $o$ terminates in $s'$ after exactly $k$ time-steps, given that it started at $s$. Bellman optimality equations can then be expressed in terms of option models. The optimal state value function and state-option value function, $V^{*}_\Omega$ and $Q^{*}_O$, are defined as follows:
\begin{equation*}
\label{option-optimal-valuefunctions}
    V^{*}_\Omega(s) = \max_{o \in \Omega(s)}  Q^*(s,o) \mbox{ and }   Q^{*}_\Omega(s,o) =  r(s,o)  + \sum_{s'} p(s'|s,o)  \max_{o' \in \Omega(s')} Q^{*}_O(s',o').
\end{equation*}
\textbf{Partial Models.} MBRL methods build reward and transition models from data, which are then used to plan, e.g. by using the Bellman equations. However, learning an accurate model can be quite difficult, requiring a lot of data. Moreover, the model does not need to be accurate everywhere, as long as it is accurate in relevant places, and/or it provides useful information for identifying good actions. A useful approach is to build {\em partial models}~\citep{talvitie2009simple}, which only make predictions for specific parts of the observation-action space. Partial models come in two flavors: predicting only the outcome of a subset of state-action pairs, or making predictions only about certain parts of the observation space. Option models can be interpreted as partial models, of the first type, because they are defined only on states where the option applies.

\textbf{Affordances.}
\cite{gibson1977theory} coined the term ``affordances'' to describe the fact that certain states enable certain actions, in the context of embodied agents. For instance, a chair ``affords'' sitting for humans, water ``affords'' swimming for fish, etc. As a result, affordances are a function of the environment as well as the agent, and {\it emerge} out of their interaction. In the context of Object Oriented-MDPs~\citep{diuk2008object}, \citet{abel2014toward, abel2015goal} define affordances as propositional functions on states, which assume the existence of \textit{objects} and \textit{object class} descriptions. We build on a more general notion of affordances in MDPs~\citep{khetarpal2020i}, defined as a relation between states and actions, where an action is affordable in a state if its desired outcome (i.e. {\it  intent}) is likely to be achieved.

\section{Affordances for Temporal Abstractions}
\label{sec:affordances}
\begin{figure}[t]
    \centering
    \includegraphics[width=0.8\textwidth]{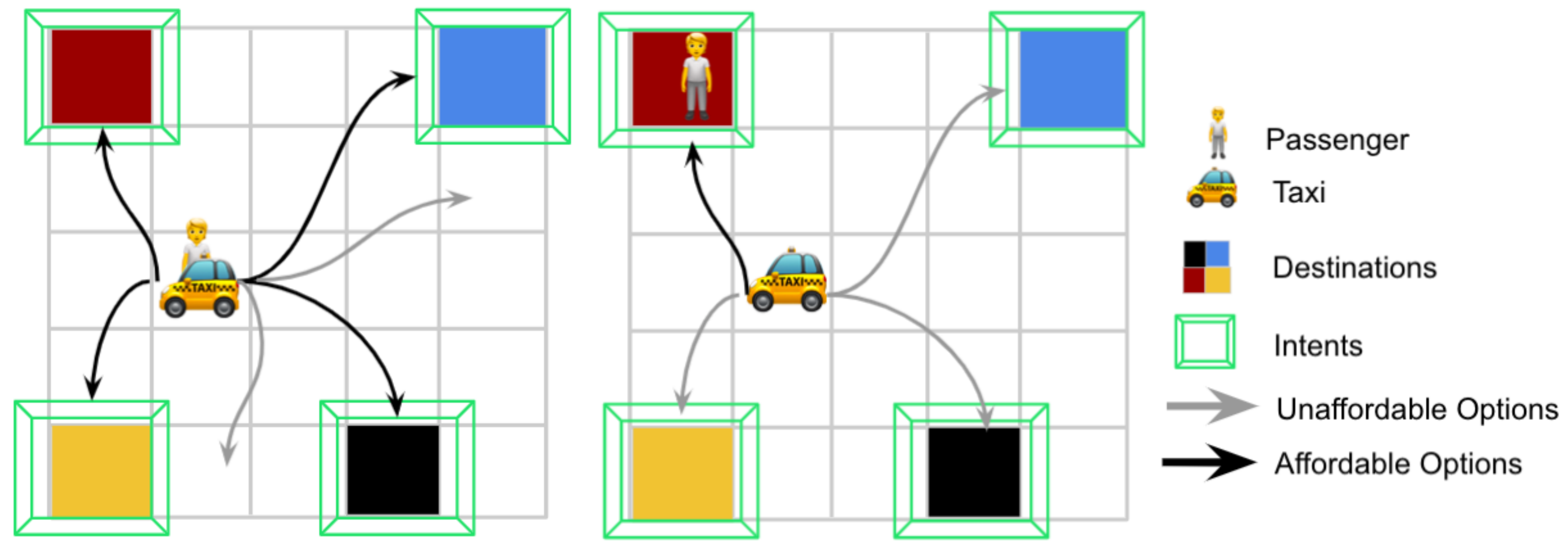}
    \caption{\label{fig:illustration_taxi}\textbf{Illustration:} Intents and affordances in a simple navigation task. Intents include navigation to a particular location to pick up or drop off a passenger. Affordances can indicate e.g. if a passenger can be dropped off (in the case where the passenger is already in the taxi) or if an option to pickup the passenger can succeed or fail (in the case when there is no passenger at the given location). Experiments in this domain are included in Sec.~\ref{sec:experiments}.}
\end{figure}
We seek to reduce both the planning complexity when using option models, and the sample complexity of learning such models, by actively eliminating from consideration choices that are unlikely to improve the planning outcome. In particular, we build temporally abstract partial models informed by affordances. Previous work~\citep{khetarpal2020i} has formalized affordances in RL by considering the desired outcome of a primitive action, i.e. the intent associated with the action. We will now generalize this notion to intents for options, which can be achieved over the duration of the option. 
To make this idea concrete, consider the example of a taxicab, which needs to pick up passengers from given locations and drop them off at a desired destination. As discussed in~\cite{dietterich2000hierarchical}, the use of abstraction, in both state space and time, can help solve this problem. In this context, an option could be to navigate at a particular grid location and an {\em intent} would be to pick up a passenger, or to drop off the passenger currently in the car at the desired destination. Such an intent limits the space of possible options under consideration to those that have desired consequences. These intents capture long-term desired consequences of executing options.

Given the generalization of intents to temporal abstraction, the notion of affordance can still be defined similarly to the primitive action case in ~\cite{khetarpal2020i}, by including state-option pairs which achieve the intent to a certain degree. Indeed, primitive affordances will be a special case of option affordances. Some examples of affordances for our illustration are depicted in Fig.~\ref{fig:illustration_taxi}. %
An agent  can then build partial models of only affordable options enabling it to not only {\it ``navigate in the affordance landscape''}~\citep{pezzulo2016navigating}, but also to better gauge action choices~\citep{cisek2010neural}.

\subsection{Trajectory Based Option Models}

In order to justify the upcoming definitions, we will start with a slight re-writing of the option models in terms of trajectories. %
A trajectory $\tau(t)$ is a random variable, denoting a state-action sequence of length $t\geq 1$, $\tau(t)=\langle S_0, A_0, \dots S_{t-1}, A_{t-1}, S_t\rangle$. 
Overloading notation, let $\tau(s,t)$ denote a trajectory of length $t$ for which $S_0=s$. Further, let $\tau(s,t,s')$ be a trajectory of length $t$ with $S_0=s$ and $S_t=s'$ and $\tau(s,s')$ a trajectory of {\em any length} $t$ for which $S_0=s$ and $S_t=s'$. 
The return is then a deterministic function of a trajectory:
$G(\tau)=\sum_{k=0}^{|\tau|-1} \gamma^k r(S_k, A_k)$, where $|\tau|$ is the length of the trajectory.
The probability of observing a given trajectory $\langle s, a_0 \dots s_t\rangle$, $s\in I_o$, under option $o$ is: 
\medmuskip=0mu \thinmuskip=0mu \thickmuskip=0mu
{\small\[
P(\tau=\langle s_0, a_0 \dots s_t\rangle|o) = \left(\prod_{k=0}^{t-1} \pi_o(A_k=a_k|S_k=s_k)  P(S_{k+1}=s_{k+1}|S_k=s_k,A_k=a_k) (1-\beta_o(s_{k+1}))\right) \frac{\beta_o(s_t)}{1-\beta_o(s_t)}
\]}
\medmuskip=4mu \thinmuskip=4mu \thickmuskip=4mu
where the last fraction is there just to capture correctly termination at $t$. To simplify notation, we denote this by $P_o(\tau(s,t))$. We can define analogously the probability of a trajectory being generated by $o$ starting at state $s\in I_o$ and ending at a given state $s'$ after $t$ steps by $P_o(\tau(s,t,s'))$.
The probability of a trajectory of any length $\tau(s,s')$ under $o$ is then: 
$
P_o(\tau(s,s'))=\sum_{t=1}^\infty P_o(\tau(s,t,s'))
$ 
Let ${\cal T}(s,t,s')$ denote the set of all trajectories starting at $s$, ending at $s'$ and of length $t$ and ${\cal T}(s,s')=\cup_t {\cal T}(s,t,s')$.
We can write the undiscounted transition model of an option $o$ as:
\[
P(s'|s,o) = \textstyle \sum_{\tau(s,s') \in {\cal T}(s,s')} P_o(\tau(s,s'))
\]
The discount on a trajectory $\tau$ will be denoted  $\gamma(\tau)$. If the discount factor is fixed per time step, this will simply be $\gamma^{|\tau|}$; 
all trajectories of the same length will have the same discount, which will allow us to factor it out of products.

The reward model of an option is:
\begin{align*}
    r(s,o,s') = \textstyle  \sum_{t=1}^{\infty} \sum_{\tau(s,t,s')\in {\cal T}(s,t,s')}P_{o}(\tau(s,t,s')) G(\tau(s,t,s'))
\end{align*}
The expected discount for option $o$ on a trajectory going from $s$ to $s'$ is defined as:
\[
\gamma_o(s,s') = \textstyle  \sum_{t=1}^{\infty} \sum_{\tau(s,t,s')\in {\cal T}(s,t,s')}P_o(\tau(s,t,s')) \gamma(\tau(s,t,s'))
\]
Note that when the action is a primitive action, then $\gamma_o(s,s') = \sum_{s'} P(s'|s,o) \gamma$
We can re-write the optimal value function of an option as:
\[
Q^*(s,o) = \sum_{s' \in \mathcal{S}} \sum_{t=1}^\infty\sum_{\tau(s,t,s')} P(\tau(s,t,s')|o)[ G(\tau(s,t,s')) + \gamma(\tau(s,t,s')) \max_{o'} Q^*(s', o')]
\]
Note that the order of the two outer sums can be reversed.
This form is equivalent to the one in~\cite{sutton1999between}, but will be more useful for our results.

\subsection{Option Affordances}

We will now define an intent through a desired probability distribution in the space of all possible trajectories of an option. 
The goal will be to obtain a strict generalization of the results established in~\cite{khetarpal2020i} for primitive actions, in the case where each action is an option and $\beta(s)=1, \forall s$.

\ddef{Temporally Extended Intent $\TEI$}{  A temporally extended intent of option $o\in\Omega$, $\TEI: \mathcal{S} \to \text{Dist}(\mathcal{T})$ specifies for each state $s$ a probability distribution over the space of trajectories ${\cal T}$, describing the \textit{intended} result of executing $o$ in $s$. The associated intent model will be denoted by $P_I( \tau | s,o )= \TEI(s, \tau)$. A temporally extended intent $\TEI$ is satisfied to a degree, $\zeta_{s,o}$ at state $s\in \cal \mathcal{S}$ and option $o \in \Omega$ if and only if:
\begin{equation}
    \label{eq:generalintent_option}
   d(P_I(\tau| s, o), P_o(\tau(s)))\leq \zeta_{s,o},
\end{equation}
where $d$ is a metric between probability distributions\footnote{In this work, we use $d$ to be the total variation.}, and $\tau(s)$ denotes the trajectory starting in state $s$ and following the option $o$.
{\label{definitionintentgeneral}}
}

We note that primitive actions have a ``degenerate" trajectory, consisting of only the next state. Hence, the only reasonable choice there is to define intent based on the next-state distribution, as done in~\cite{khetarpal2020i}. However, options have a whole trajectory, and defining intents on the trajectory distribution provides maximum flexibility. In practice, we expect that most useful intents would be defined in relation with the endpoint of the option, e.g. specifying an intended distribution over the state at the end of the option, or over the joint distribution of the state and duration. Further discussion of special cases is included in the Appendix. Based on this notion of temporally extended intents, {\em  affordances} for options can be defined as follows: 

\ddef{Option Affordances $\AF$}{Given a set of options $\mathcal{O} \subseteq \Omega$ and set of temporally extended intents ${\TEISet}=\cup_{o \in \mathcal{O}} \TEI$, and $\zeta^{\mathcal{I}^{\rightarrow}} \in [0,1]$, we define the affordances $\AF$ associated with $\TEISet$ as a relation $\AF \subseteq {\cal S} \times \mathcal{O}$, such that \label{def:optionaffordances}} 
$\forall (s,o) \in \AF, \TEI \mbox{ is satisfied to at } (s, o) \mbox{ to degree } \zeta_{s,o} \leq \zeta^{\mathcal{I}^{\rightarrow}}$.

Intuitively, we specify temporally extended intents such as ``pick up passenger'', ``drop a passenger at destination'', etc. such that the intent is satisfied to a certain degree. Affordances can then be defined as the subset of state-option pairs that can satisfy the intent to a that degree. Fig.~\ref{fig:illustration_taxi} depicts a cartoon illustration of intents and corresponding option affordances in the classic Taxi environment.

\section{Theoretical Analysis}
\label{sec:valueandplanningloss}
We now analyze the value loss (Sec.~\ref{sec:valueloss}) and planning loss (Sec.~\ref{sec:planninglossboundproof}) induced by {\em temporally extended intents} $\TEISet$ and corresponding {\em temporally abstract affordances} $\AF$. 

\begin{lemma}
\label{lemma1}
Given  a finite set of option $\mathcal{O} \subset \Omega$ and a set of temporally extended intents ${\TEISet}=\cup_{o \in {\cal O}} \TEI$ that are satisfied to degrees $\zeta_{s,o}$, there exist constants ($\zeta^{\mathcal{I}^{\rightarrow}}_P, \zeta^{\mathcal{I}^{\rightarrow}}_R$), such that:
\label{ass:generalzetaconstant}
\begin{align}
     \textstyle \max_{s, o,t, s'} \sum_{\tau(s,t,s')\in {\cal T}(s,t,s')} \Big| P_o(\tau(s,t,s')) -  P_I(\tau(s,t,s')|s, o)) \Big| & \leq \zeta^{\mathcal{I}^{\rightarrow}}_P \mbox{ and } \\
    \textstyle  \max_{s, o}  \Big| r(s,o) - E_{\tau \sim P_I} [G(\tau | s,o )] \Big|  & \leq \zeta^{\mathcal{I}^{\rightarrow}}_R
\end{align}
where $\zeta^{\mathcal{I}^{\rightarrow}}_P = \max_{s,o} \zeta_{s,o}$, $\zeta^{\mathcal{I}^{\rightarrow}}_R = \zeta^{\mathcal{I}^{\rightarrow}}_P || G ||_\infty$, and $G(\tau)$ is the accumulated return on the trajectory $\tau$.
\end{lemma}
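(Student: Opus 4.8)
The plan is to derive both inequalities from the single hypothesis supplied by the intent-satisfaction condition~\eqref{eq:generalintent_option}, namely $d(P_I(\tau\mid s,o),P_o(\tau(s)))\le\zeta_{s,o}$, together with the trajectory rewriting of the option model from Section~\ref{sec:affordances}. Two preliminary facts drive everything. First, $P_o(\tau(s))$ and $P_I(\tau\mid s,o)$ are probability distributions over the \emph{same} set of trajectories issuing from $s$, so (taking $d$ to be the $\ell_1$ total variation, as the stated constants presume) the hypothesis reads $\sum_{\tau}\lvert P_o(\tau)-P_I(\tau\mid s,o)\rvert\le\zeta_{s,o}$. Second, summing the trajectory reward model over terminal states gives $r(s,o)=\sum_{s'}r(s,o,s')=\sum_{\tau}P_o(\tau)\,G(\tau)=E_{\tau\sim P_o}[G(\tau)]$, i.e. the option reward is exactly the expected return under the option's own trajectory law.

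For the transition bound I would use that $\mathcal{T}(s,t,s')\subseteq\mathcal{T}(s)$ for every length $t$ and endpoint $s'$, so restricting the $\ell_1$ sum to this subset can only decrease it:
$$\sum_{\tau(s,t,s')}\bigl\lvert P_o(\tau)-P_I(\tau\mid s,o)\bigr\rvert\ \le\ \sum_{\tau\in\mathcal{T}(s)}\bigl\lvert P_o(\tau)-P_I(\tau\mid s,o)\bigr\rvert\ \le\ \zeta_{s,o}.$$
Since the right-hand side depends only on $(s,o)$, taking the maximum over $s,o,t,s'$ yields $\max_{s,o}\zeta_{s,o}=\zeta^{\mathcal{I}^{\rightarrow}}_P$, establishing the first claim.

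For the reward bound I would substitute $r(s,o)=E_{\tau\sim P_o}[G(\tau)]$ and bound the difference of expectations by a bounded-test-function (H\"older) estimate:
$$\bigl\lvert r(s,o)-E_{\tau\sim P_I}[G(\tau)]\bigr\rvert=\Bigl\lvert\sum_{\tau}\bigl(P_o(\tau)-P_I(\tau\mid s,o)\bigr)G(\tau)\Bigr\rvert\ \le\ \lVert G\rVert_\infty\sum_{\tau}\bigl\lvert P_o(\tau)-P_I(\tau\mid s,o)\bigr\rvert\ \le\ \lVert G\rVert_\infty\,\zeta_{s,o}.$$
Maximizing over $(s,o)$ gives $\zeta^{\mathcal{I}^{\rightarrow}}_R=\lVert G\rVert_\infty\,\zeta^{\mathcal{I}^{\rightarrow}}_P$, the second claim.

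There is no deep obstacle here; the care is entirely in the bookkeeping over the infinite trajectory space. I must verify absolute convergence of the sums over all $t$ and $s'$ so that restricting the $\ell_1$ distance to $\mathcal{T}(s,t,s')$ and interchanging the order of summation in $r(s,o)$ are both legitimate. The one point worth flagging explicitly is the normalization of the total variation: the clean constants $\zeta^{\mathcal{I}^{\rightarrow}}_P=\max_{s,o}\zeta_{s,o}$ and $\zeta^{\mathcal{I}^{\rightarrow}}_R=\lVert G\rVert_\infty\zeta^{\mathcal{I}^{\rightarrow}}_P$ follow under the $\ell_1$ convention $d=\sum_{\tau}\lvert P_o-P_I\rvert$; under the alternative $\tfrac12$-normalization a spurious factor of two would appear, which for the reward bound can still be removed by centering $G$ and invoking $\lvert E_P[G]-E_Q[G]\rvert\le(\sup G-\inf G)\,d(P,Q)$.
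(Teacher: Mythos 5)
Your proof is correct and follows essentially the same route as the paper's: the transition bound is read off directly from the total-variation intent-satisfaction hypothesis (the paper simply states it ``follows immediately,'' whereas you make explicit the monotonicity step $\mathcal{T}(s,t,s')\subseteq\mathcal{T}(s)$ needed for the $\max_{s,o,t,s'}$ form), and the reward bound uses the same H\"older-type estimate $\lvert E_{P_o}[G]-E_{P_I}[G]\rvert\le\lVert G\rVert_\infty\sum_\tau\lvert P_o-P_I\rvert$ after rewriting $r(s,o)$ via the trajectory decomposition. Your remark on the total-variation normalization convention is a fair caveat, and consistent with the $\ell_1$ convention the paper implicitly adopts.
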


The proof is in the Appendix~\ref{proof-lemma1}. We note that the error in the approximate probability distribution is bounded by the degree of intent satisfaction for each option i.e $\zeta_{s,o}$. If intents are far from the true distribution $P$ (i.e. much larger $d$ in Def.~\ref{definitionintentgeneral}) or misspecified, then the bounds above are predominantly governed by the approximation error induced due to the intent specification. Moreover, the approximate reward distribution is also a factor of the error in approximating probability distribution.

\subsection{Value Loss Bound}
\label{sec:valueloss}

A set of {\em temporally extended intents} $\TEISet$ define an intent-induced SMDP $\mathcal{M}_{\TEISet}$, in which the intents can be used to approximate the option transition and reward models. The lemma above establishes this approximation, 
which in turn allows us to compute the value loss incurred when planning in the intent-induced SMDP.

\begin{theorem}[Trajectory-Based Value-Loss Bound]
\label{thm:trajectories_value_loss_analysis}
Given a SMDP $\mathcal{M}$ corresponding to a finite set of options $\mathcal{O}$ and a set of temporally extended intents ${\TEISet}=\cup_{o \in {\cal O}} \TEI$ defined on option trajectories (Def.~\ref{definitionintentgeneral}), the value loss between the optimal policy for the original SMDP $\cal{M}$ and the optimal policy $\pi^*_{\TEISet}$ for the induced SMDP $\cal{M}_{\TEISet}$ is given by:
\begin{align}
\label{eq:trajectories_value_loss_analysis}
    \Big| \Big| V^{\pi^{*}_{\TEISet}} - V^* \Big| \Big|_{\infty}  &\leq \frac{\zeta^{\mathcal{I}^{\rightarrow}}_R}{\Big( 1-\upgamma^{\TEISet} \Big)} + \frac{2 R^{\mathcal{O}}_{max} \max_{s,o} \sum_{t=1}^\infty \gamma^t  |\mathcal{S}| \zeta^{\mathcal{I}^{\rightarrow}}_P %
    }{\Big( 1-\upgamma^{\TEISet} \Big)\Big( 1-\upgamma^{\mathcal{O}} \Big)} 
\end{align}
where $\zeta^{\mathcal{I}^{\rightarrow}}_P$ and $\zeta^{\mathcal{I}^{\rightarrow}}_R$ are defined in Lemma 1, $R^{\mathcal{O}}_{max}=\max_{s,o} r(s,o)$ is the maximum option reward, $\upgamma^{\TEISet} = \max_{s,o} \sum_{s'} \gamma_o^I(s,s')$ and $\upgamma^{\cal O} = \max_{s,o} \sum_{s'} \gamma_o(s,s')$ are the maximum expected discount factor for the intents and options respectively.
\end{theorem}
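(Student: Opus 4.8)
The plan is to reduce the value-loss question to two model-error comparisons and then propagate those errors through the factored Bellman recursions. First I would rewrite both the true and the intent-induced optimality equations in the factored form implied by the trajectory expression for $Q^*$: using the definitions of $r(s,o)$ and $\gamma_o(s,s')$, the inner sums collapse so that $Q^*(s,o) = r(s,o) + \sum_{s'}\gamma_o(s,s') V^*(s')$, and the induced SMDP $\mathcal{M}_{\TEISet}$ obeys the identical recursion with $r$ replaced by $r_I(s,o)=E_{\tau\sim P_I}[G(\tau|s,o)]$ and $\gamma_o$ replaced by the intent discount $\gamma_o^I$. I would record that both operators are contractions, with moduli $\upgamma^{\mathcal{O}}=\max_{s,o}\sum_{s'}\gamma_o(s,s')$ and $\upgamma^{\TEISet}=\max_{s,o}\sum_{s'}\gamma_o^I(s,s')$ respectively, so that $V^*$, the induced optimum $\hat V$ (with $Q$-function $\hat Q$), and the true value $V^{\pi^{*}_{\TEISet}}$ of the induced-optimal policy $\pi=\pi^{*}_{\TEISet}$ are all well defined; in particular $\|V^*\|_\infty \le R^{\mathcal{O}}_{max}/(1-\upgamma^{\mathcal{O}})$.

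Next I would convert Lemma~\ref{lemma1} into the two per-backup quantities that actually appear in the recursion. The reward term is immediate: $|r(s,o)-r_I(s,o)|\le\zeta^{\mathcal{I}^{\rightarrow}}_R$. For the transition term, I would bound the discrepancy in expected discounts by pushing the trajectory-level error through: since $\gamma_o(s,s')-\gamma_o^I(s,s')=\sum_{t}\sum_{\tau(s,t,s')}[P_o(\tau)-P_I(\tau)]\gamma(\tau)$, factoring out $\gamma^t$ and applying the first inequality of Lemma~\ref{lemma1} gives $|\gamma_o(s,s')-\gamma_o^I(s,s')|\le \sum_t\gamma^t\,\zeta^{\mathcal{I}^{\rightarrow}}_P$, and summing over the $|\mathcal{S}|$ possible endpoints yields $\max_{s,o}\sum_{s'}|\gamma_o(s,s')-\gamma_o^I(s,s')|\le |\mathcal{S}|\,\zeta^{\mathcal{I}^{\rightarrow}}_P\max_{s,o}\sum_t\gamma^t$. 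These are exactly the two ingredients appearing in the numerator of the claimed bound.

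I would then use the standard greedy value-loss decomposition $V^*(s)-V^{\pi}(s)=[Q^*(s,\pi^*(s))-\hat Q(s,\pi^*(s))]+[\hat Q(s,\pi^*(s))-\hat Q(s,\pi(s))]+[\hat Q(s,\pi(s))-Q^{\pi}(s,\pi(s))]$, where the middle bracket is nonpositive because $\pi$ is greedy with respect to $\hat Q$. This leaves two comparisons: the optimal-vs-optimal difference $\|Q^*-\hat Q\|_\infty$ and the policy-evaluation difference $\|\hat Q-Q^{\pi}\|_\infty$. For each I would add and subtract a mixed term so that the recursion contracts through the \emph{intent} discount $\gamma_o^I$ (producing the $1/(1-\upgamma^{\TEISet})$ factor) while the residual transition error multiplies a true value function bounded by $R^{\mathcal{O}}_{max}/(1-\upgamma^{\mathcal{O}})$ (producing the $1/(1-\upgamma^{\mathcal{O}})$ factor); the two comparisons together account for the factor of $2$ appearing on the transition contribution. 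Using $|\max_{o'}f-\max_{o'}g|\le\max_{o'}|f-g|$ handles the $\max$ in the optimality equation.

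The step I expect to be the main obstacle is the transition-error propagation in the trajectory setting: unlike the single-step case, the ``next-state'' distribution is an aggregate over variable-length trajectories, so I must be careful that the bound from Lemma~\ref{lemma1}, which is stated per fixed $(s,t,s')$, is summed correctly over both the length $t$ (contributing $\sum_t\gamma^t$) and the endpoint $s'$ (contributing $|\mathcal{S}|$) before it is allowed to interact with $\|V\|_\infty$. I would also need to verify that replacing $\|\hat V\|_\infty$ and $\|Q^{\pi}\|_\infty$ by $R^{\mathcal{O}}_{max}/(1-\upgamma^{\mathcal{O}})$ is legitimate and keeps the two distinct discount moduli in their intended roles rather than collapsing them; this separation is precisely what lets the final bound expose the decoupled contributions of affordance error ($\zeta^{\mathcal{I}^{\rightarrow}}_P,\zeta^{\mathcal{I}^{\rightarrow}}_R$) and temporal abstraction ($\upgamma^{\TEISet},\upgamma^{\mathcal{O}}$).
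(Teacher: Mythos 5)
Your proposal is correct and takes essentially the same route as the paper's proof: the same two comparisons (the optimal-value gap $\| V^*_{\mathcal{M}} - V^*_{\mathcal{M}_{\TEISet}}\|_\infty$ and the policy-evaluation gap for $\pi^*_{\TEISet}$, which your greedy three-bracket decomposition reduces to exactly), the same add-and-subtract of the mixed $P_I$-term so that the self-referential recursion contracts with modulus $\upgamma^{\TEISet}$ while the residual model error multiplies $\|V^*\|_\infty \leq R^{\mathcal{O}}_{max}/(1-\upgamma^{\mathcal{O}})$, and the same aggregation of Lemma~\ref{lemma1} over trajectory lengths and endpoints yielding $|\mathcal{S}|\,\zeta^{\mathcal{I}^{\rightarrow}}_P \max_{s,o}\sum_{t}\gamma^t$. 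One bookkeeping remark: carried through honestly, your decomposition (like the paper's own intermediate bounds for its Terms 1 and 2) yields a reward contribution of $2\zeta^{\mathcal{I}^{\rightarrow}}_R/(1-\upgamma^{\TEISet})$, one per comparison, whereas the stated theorem keeps coefficient $1$ on $\zeta^{\mathcal{I}^{\rightarrow}}_R$ --- this factor is dropped in the paper's final combination step, so the discrepancy is on the paper's side, not yours.
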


Proof is in Appendix~\ref{sec:app-trajectories_value_loss_analysis}. 
Our result is a strict generalization of the results established for primitive actions~\citep{khetarpal2020i}. %
Note that the value loss bound is better for temporally extended options than for primitives, due to the dependence on the maximum expected option discount (See Table~\ref{tab:valueloss-comparison}). Note that in our bounds, $R^{\mathcal{O}}_{max}$ and $R_{max}$ denote the maximum achievable reward for options and primitive actions respectively. Further interesting corollaries are included in the Appendix.
\begin{table}[h]
\centering
\begin{tabular}{l|l|l|}
&\multicolumn{2}{c|}{\textbf{Value Loss Bound}} \\ \hline
\textbf{Actions} & \textbf{Sub-probability Intent } & \textbf{Trajectory based Intent}\\ \hline
\textbf{Primitive}   & $2\zeta^\mathcal{I} \frac{\gamma R_{max}}{(1-\gamma)^{2}}$ & - \\
\textbf{Temporally Extended}   &  $ 2\zeta^{\mathcal{I}^{\rightarrow}} \frac{ \upgamma R^{\mathcal{O}}_{max}}{(1-\upgamma)^2}$  &  \color{friendlyblue} $\frac{\zeta^{\mathcal{I}^{\rightarrow}}_R}{\Big( 1-\upgamma^{\TEISet} \Big)} + \frac{2 R^{\mathcal{O}}_{max} \max_{s,o} \sum_{t=1}^\infty \gamma^t  |\mathcal{S}|\zeta^{\mathcal{I}^{\rightarrow}}_P}{\Big( 1-\upgamma^{\TEISet} \Big)\Big( 1-\upgamma^{\cal O} \Big)}$  \\
\end{tabular}
\caption{\textbf{Value Loss Analysis.} The maximum value loss incurred when considering intents shows that while both primitive ($\mathcal{I}$) and temporally extended intents ($\TEISet$) predominantly depend on the intent approximation error $\zeta$, temporally extended intents can result in gains contingent on the closeness of the intent model and maximum expected discounting of options and intents.} %
\label{tab:valueloss-comparison}
\vskip -0.1in
\end{table}

\subsection{Planning Loss Bound}
\label{sec:planninglossboundproof}
In this section, we analyze the effect of incorporating affordances and use temporally extended intents to build partial option models from data on the speed of planning. Similar results have  previously been established to spell out the role of the planning horizon~\citep{jiang2015dependence} and to plan affordance-based partial models of primitive actions~\citep{khetarpal2020i}.

In practical scenarios, the agent may have limited information about the true model of the world. Moreover, it might be infeasible and intractable to build a full model, especially in real-life applications. To address this, we consider the SMDP $\mathcal{M}_{\TEISet}$ induced by models associated with temporally extended intents and the associated affordances, and quantify the loss incurred when planning with this model. 
\begin{theorem}[Trajectory-Based Planning-Loss Bound]
Let $\TEISet$ be a set of temporally extended intents for a finite set of options $\mathcal{O}$, and $\hat{M}_\AF$ the corresponding approximate SMDP over affordable state-option pairs $\AF$. Then, the loss incurred when using  $\hat{M}_\AF$ to compute a policy $\pi^*_{\hat{\cal M}_\AF}$ and then using this policy in the original MDP ${\cal M}$ (also known as the certainty-equivalence planning loss) can be bounded by:
\medmuskip=0mu \thinmuskip=0mu \thickmuskip=0mu
$$
   \Big| \Big|  V^* - V^{\pi^*_{\hat{\cal M}_\AF}} \Big| \Big|_{\infty} \leq \frac{5 \zeta^{\mathcal{I}^{\rightarrow}}_R}{\left( 1-\upgamma^{\TEISet} \right)}  + \frac{2 R^{\mathcal{O}}_{max}}{\left( 1-\upgamma^{\TEISet} \right) \left( 1-\upgamma^{\cal O} \right)} \Big( 2 \max_{s,o} \sum_{t=1}^\infty \gamma^t  |\mathcal{S}| \zeta^{\mathcal{I}^{\rightarrow}}_P + \sqrt{\frac{1}{2n} \log \frac{2 |\AF| |\Pi_{\TEISet}|}{\delta}} \Big)
$$
\medmuskip=4mu \thinmuskip=4mu \thickmuskip=4mu
with probability at least $1-\delta$, where $\zeta^{\mathcal{I}^{\rightarrow}}_P$ and $\zeta^{\mathcal{I}^{\rightarrow}}_R$ are defined in Lemma~\ref{lemma1}, $R^{\mathcal{O}}_{max}=\max_{s,o} r(s,o)$ is the maximum option reward, $\upgamma^{\TEISet} = \max_{s,o} \sum_{s'} \gamma_o^I(s,s')$ and $\upgamma^{\cal O}= \max_{s,o} \sum_{s'} \gamma_o(s,s')$ are the maximum expected discount factor for the intents and options respectively.
\label{theorem:trajectories_planningvaluelossbound}
\end{theorem}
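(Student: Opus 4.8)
The plan is to follow the standard certainty-equivalence argument (in the style of \citep{jiang2015dependence, khetarpal2020i}), decomposing the planning loss so that Theorem~\ref{thm:trajectories_value_loss_analysis} absorbs the \emph{approximation} error of replacing the true option model by the intent-induced model, while a fresh concentration argument handles the \emph{estimation} error of building $\hat{M}_\AF$ from only $n$ sampled trajectories per affordable pair. I would work with three SMDPs: the true $\mathcal{M}$, the population intent-induced $\mathcal{M}_{\TEISet}$, and the empirical $\hat{M}_\AF$. Writing $V^\pi$, $\tilde V^\pi$, $\hat V^\pi$ for the value of a fixed policy $\pi$ in these three models, and letting $\hat\pi = \pi^*_{\hat{\cal M}_\AF}$ while $\pi^*$ is the true optimum, the first step is the telescoping inequality
\[
  V^{\pi^*} - V^{\hat\pi}
  = \big(V^{\pi^*} - \hat V^{\pi^*}\big) + \big(\hat V^{\pi^*} - \hat V^{\hat\pi}\big) + \big(\hat V^{\hat\pi} - V^{\hat\pi}\big)
  \leq 2\max_{\pi\in\Pi_{\TEISet}}\big\|V^\pi - \hat V^\pi\big\|_\infty,
\]
where the middle term is nonpositive because $\hat\pi$ is optimal in $\hat{M}_\AF$. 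This reduces the theorem to a \emph{uniform} (over the policy class) bound on the per-policy discrepancy between the true and the empirical intent model.

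Second, I would split each per-policy discrepancy through the population model, $\|V^\pi - \hat V^\pi\|_\infty \le \|V^\pi - \tilde V^\pi\|_\infty + \|\tilde V^\pi - \hat V^\pi\|_\infty$. The first summand is exactly the approximation quantity controlled inside the proof of Theorem~\ref{thm:trajectories_value_loss_analysis}: invoking Lemma~\ref{lemma1} to bound the reward gap by $\zeta^{\mathcal{I}^{\rightarrow}}_R$ and the per-pair transition gap by $\zeta^{\mathcal{I}^{\rightarrow}}_P$, then pushing these through the SMDP Bellman operator (whose contraction modulus is the expected trajectory discount $\upgamma^{\TEISet}$) reproduces the terms $\tfrac{\zeta^{\mathcal{I}^{\rightarrow}}_R}{1-\upgamma^{\TEISet}}$ and $\tfrac{2R^{\mathcal{O}}_{max}\max_{s,o}\sum_t\gamma^t|\mathcal{S}|\zeta^{\mathcal{I}^{\rightarrow}}_P}{(1-\upgamma^{\TEISet})(1-\upgamma^{\cal O})}$. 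Tallying how many times each error source is invoked across the decomposition — the two fixed-policy differences plus the suboptimality slack absorbed in the first step — collects the reward contributions into the stated coefficient $5$ and the transition contribution into the coefficient $2$.

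Third, for the estimation summand $\|\tilde V^\pi - \hat V^\pi\|_\infty$ I would treat $\hat{M}_\AF$ as a Monte-Carlo estimate built from $n$ trajectories at each affordable state-option pair. Since each sampled return lies in a bounded range, two-sided Hoeffding bounds the deviation of the empirical option-reward estimate from its mean by $\sqrt{\tfrac{1}{2n}\log\tfrac{2}{\delta'}}$ at confidence $1-\delta'$; applying a union bound over the $|\AF|$ affordable pairs and, crucially, over the finite policy class $\Pi_{\TEISet}$ (so that the data-dependent $\hat\pi$ is covered) sets $\delta' = \delta/(|\AF|\,|\Pi_{\TEISet}|)$ and yields $\sqrt{\tfrac{1}{2n}\log\tfrac{2|\AF||\Pi_{\TEISet}|}{\delta}}$. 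Propagating this uniform reward error through the same two nested SMDP contractions gives the $\tfrac{2R^{\mathcal{O}}_{max}}{(1-\upgamma^{\TEISet})(1-\upgamma^{\cal O})}$ prefactor, and adding it to the approximation terms produces the claimed bound with probability at least $1-\delta$.

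The main obstacle I anticipate is bookkeeping rather than conceptual: I must verify that the intent-induced SMDP Bellman operator is genuinely a $\upgamma^{\TEISet}$-contraction in sup-norm despite the trajectory-dependent discounts $\gamma_o^I(s,s')$, and then thread the two distinct discount moduli $\upgamma^{\TEISet}$ and $\upgamma^{\cal O}$ correctly through the nested applications so the transition error collects the $|\mathcal{S}|\sum_t\gamma^t$ factor exactly once per application, producing precisely the constants $5$ and $2$. A secondary subtlety is justifying the restriction to affordable pairs $\AF$ and to the induced class $\Pi_{\TEISet}$ — specifically that both $\pi^*$ and $\hat\pi$ lie in $\Pi_{\TEISet}$ so that the telescoping step and the union bound are legitimate — which is where the affordance degree $\zeta^{\mathcal{I}^{\rightarrow}}$ from Definition~\ref{def:optionaffordances} must be carried through.
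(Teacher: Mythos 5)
Your first telescoping step is where the argument genuinely breaks, and it is precisely the point at which planning with \emph{partial} models differs from the standard certainty-equivalence setting of \citet{jiang2015dependence}. The inequality $\hat{V}^{\pi^*} - \hat{V}^{\hat\pi} \leq 0$ presupposes that the true optimal policy $\pi^*$ can be evaluated in $\hat{\cal M}_\AF$ and is dominated there by $\hat\pi = \pi^*_{\hat{\cal M}_\AF}$. But $\hat{\cal M}_\AF$ is defined only on the affordable pairs $\AF$, and $\hat\pi$ is optimal only among policies supported on $\AF$ (the planner maximizes over $o \in \AF(s)$ at each state; see Algorithm~\ref{alg:SMDP-QVI}). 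If $\pi^*$ selects an unaffordable option somewhere --- exactly the situation the affordance framework is designed to permit --- then $\hat{V}^{\pi^*}$ is undefined, and no extension of the empirical model to unaffordable pairs restores the nonpositivity of the middle term. You flag this as a ``secondary subtlety'' about $\pi^*$ belonging to $\Pi_{\TEISet}$, but membership in the policy class (which the paper grants by fiat, since $\Pi_{\TEISet}$ is \emph{defined} to contain $\pi^*_{\cal M}$) only services the union bound; it does not make $\pi^*$ admissible inside $\hat{\cal M}_\AF$. The paper's proof is structured to avoid this comparison altogether: it uses the four-term decomposition $\Vert V^*_{\cal M} - V^{\hat\pi}_{\cal M}\Vert_\infty \leq \Vert V^*_{\cal M} - V^{\pi^*_{{\cal M}_{\TEISet}}}_{\cal M}\Vert_\infty + \Vert V^{\pi^*_{{\cal M}_{\TEISet}}}_{\cal M} - V^*_{{\cal M}_{\TEISet}}\Vert_\infty + \Vert V^*_{{\cal M}_{\TEISet}} - V^{\hat\pi}_{{\cal M}_{\TEISet}}\Vert_\infty + \Vert V^{\hat\pi}_{{\cal M}_{\TEISet}} - V^{\hat\pi}_{\cal M}\Vert_\infty$, applying your $2\max$ optimality trick (Lemma~\ref{lemma2smdp-trajectories}) only between the intent-induced SMDP and $\hat{\cal M}_\AF$, where both relevant optimal policies are supported on $\AF$; the bridge from $V^*_{\cal M}$ into the affordable world is paid for by Theorem~\ref{thm:trajectories_value_loss_analysis}, and the two fixed-policy model-change terms by the simulation lemma (Lemma~\ref{lemma:jiangv2smdp-trajectories}).

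This structural difference also exposes that your constant bookkeeping is reverse-engineered rather than derived: under your decomposition, $2\max_\pi\big(\Vert V^\pi - \tilde{V}^\pi\Vert_\infty + \Vert \tilde{V}^\pi - \hat{V}^\pi\Vert_\infty\big)$ would yield a coefficient of $4$ on $\zeta^{\mathcal{I}^{\rightarrow}}_R$ (two simulation-lemma invocations, each contributing $2\zeta^{\mathcal{I}^{\rightarrow}}_R$) and a coefficient of $1$, not $2$, on the transition term inside the parenthesis --- a \emph{tighter} bound that would be fine a fortiori if your first step were valid, but it is not the stated arithmetic. In the paper, the $5$ arises as $1 + 2 + 2$: one $\zeta^{\mathcal{I}^{\rightarrow}}_R$ from the value-loss theorem plus $2\zeta^{\mathcal{I}^{\rightarrow}}_R$ from each of the two Lemma~\ref{lemma:jiangv2smdp-trajectories} applications, and the transition coefficient $2$ arises as $1+1$ from the same two sources. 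Finally, your concentration step is incomplete as described: Hoeffding must be applied not to the empirical reward alone but to the full one-step backup $\hat{R}_I(s,o) + \langle \hat{\gamma}\hat{P}_I(s,o,\cdot), V^{\pi}_{{\cal M}_{\TEISet}}\rangle$ against the fixed target $V^{\pi}_{{\cal M}_{\TEISet}}$ (Lemma~\ref{lemma3smdp-trajectories} reduces the per-policy discrepancy to exactly this quantity), a random variable with range $[0, V_{max}]$ where $V_{max} \leq R^{\mathcal{O}}_{max}/(1-\upgamma^{\cal O})$ by Remark~\ref{remark:generalupperboundonvstar}; that range is where one of the two denominator factors in the prefactor $\frac{2R^{\mathcal{O}}_{max}}{(1-\upgamma^{\TEISet})(1-\upgamma^{\cal O})}$ comes from, not from a second nested contraction as you suggest.
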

The proof is in Appendix~\ref{sec:app-trajectories_planninglossboundproof}. The planning loss result generalizes the result for primitive actions provided in ~\citet{khetarpal2020i}. We note a similar effect of incorporating affordances in partial models for temporally extended actions. The accuracy in approximation of the intent (via ($\zeta^{\mathcal{I}^{\rightarrow}}_P, \zeta^{\mathcal{I}^{\rightarrow}}_R$)) the size of affordable state-option pairs $|\AF|$, and the SMDP policy class 
$\Pi_\TEISet$ will induce a trade-off between approximation of the intents and space of affordances. A key difference in planning with the approximate partial option models $\hat{M}_\AF$ is that the error can be controlled through the maximum expected discount factor for both intent and option models which in turn depends on the minimum expected duration of all affordable options.

\begin{table*}[t]
\centering
\label{tab:param-OC}
\begin{tabular}{l|l|l|}
&\multicolumn{2}{c|}{\textbf{Planning Loss Bound}} \\ \hline
\textbf{Actions} & \textbf{Without Affordances} & \textbf{Affordance-aware} \\ \hline
\textbf{Primitive}   & $\frac{2 R_{max}}{(1-\gamma)^2} \times\Bigg( \sqrt{\frac{1}{2n} \log \frac{2 |S| |A| |\Pi_{S \times A}|}{\delta}}\Bigg)$    & $\frac{2 R_{max}}{(1-\gamma)^2} \times\Bigg(2\gamma \zeta^\mathcal{I} + \sqrt{\frac{1}{2n} \log \frac{2 |\AFI| |\Pi_\mathcal{I}|}{\delta}}\Bigg)$        \\

\textbf{TEA}   & $\frac{2 R^{\mathcal{O}}_{max}}{(1-\upgamma)^2} \left( \sqrt{\frac{1}{2n} \log \frac{2 |S| |\mathcal{O}| |\Pi_{S \times \mathcal{O}}| }{\delta}} \right)$  & \color{friendlyblue}$\frac{2 R^{\mathcal{O}}_{max}}{(1-\upgamma)^2} \left( 2 \upgamma \zeta^{\mathcal{I}^{\rightarrow}} +  \sqrt{\frac{1}{2n} \log \frac{2 |\AF| |\Pi_{\mathcal{I}^{\rightarrow}}|}{\delta}} \right)$    \\
\end{tabular}
\caption{\label{tab:planningloss-comparison}\textbf{On the role of affordances in actions and options.} We decouple the role of the temporal extent of the options and the effects of incorporating affordances. Our analysis establishes improved guarantees for planning with option models. Further gains are obtained when affordances are incorporated, though at the cost of increased approximation error due to intents through $\zeta$. We note that for simplicity, we present the bounds obtained when intents are defined on the distribution of an option's terminal state, a corollary of Theorem 2. The table highlights the trade-offs between \emph{estimation} (via the model learning depending on the data size $n$) and \emph{approximation} (via the specification of intents).}
\end{table*}

Table~\ref{tab:planningloss-comparison} summarizes the effects of using temporally extended models and affordances. First, we note that the planning with affordances introduces a trade-off between \emph{approximation} and \emph{estimation} in both primitive and temporally extended actions. Concretely, the approximation error is induced due to the specification of intents through $\zeta^{\mathcal{I}^{\rightarrow}}$, whereas the estimation error is induced due to learning of the transition and has a dependence on the data size $n$ and the size of the policy class $\Pi_{\TEISet}$.

\section{Empirical Analysis}
\label{sec:experiments}
In this section, we study the impact of using affordances to learn partial option models which are then used for planning, in order to corroborate the theoretical results established in Sec.~\ref{sec:valueandplanningloss}. In Sec.~\ref{sec:experiment1}, we use a hand designed set of affordances to show that it can improve training stability as well as sample efficiency when used to learn a single partial option model, conditioned on a state-option pair. Then, in Sec.~\ref{sec:experiment2} we demonstrate the viability of learning the set of affordances at the same time as the partial option model resulting in a set of affordances that were smaller than those that were hand designed. 
\begin{wrapfigure}{r}{0.4\textwidth}
\vskip -0.1in
\includegraphics[width=0.42\textwidth]{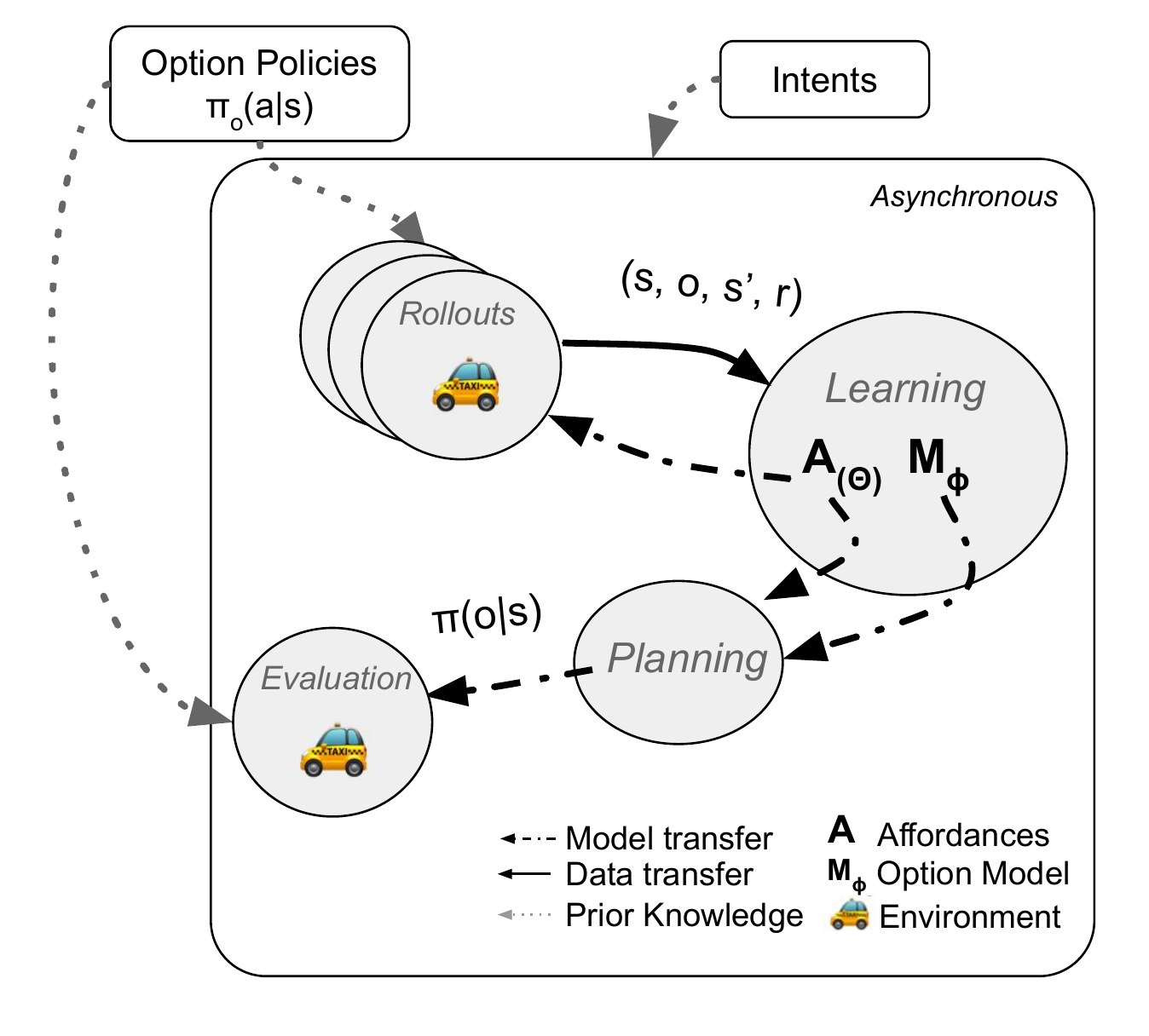}
\caption{\label{fig:pipeline}\textbf{Experimental pipeline.}}
\vskip -0.1in
\end{wrapfigure}
{\bf Environment.} We consider the $5 \times 5$ Taxi domain \citep{dietterich2000hierarchical}. The domain is a grid world with four designated pickup/drop locations, marked as R(ed), B(lue), G(reen), and Y(ellow). See Fig.~\ref{fig:illustration_taxi} for illustration. The agent controls a taxi and faces an episodic problem: the taxi starts in a randomly-chosen square and is given a goal location at which a passenger must be dropped. The passenger is at one of the three other locations. To complete the task, the agent must drive the taxi to the passenger's location, pick them up, go to the destination, and drop the passenger there. The action space consists of six primitive actions: Up, Down, Left, Right, Pickup, and Drop. The agent gets a reward of $-1$ per step, $+20$ for successfully dropping the passenger at the goal and $-10$ for dropping the passenger at the wrong location. There are a total of $25$ (grid positions) $\times 4$  (goal destinations) $\times 5$  (passenger scenarios) $= 500$ states in this environment and the observation is a one-hot vector. 

{\bf Option set $\mathcal{O}$.} We consider a fixed set of \emph{taxi-centric} options, defined as follows: Go to a grid position (25 options); Drop passenger at grid position (25 options); Pickup passenger from grid position (25 options). The options are pre-trained via value iteration for all our experiments. In total there are $75\times500=37500$ state-option pairs.

{\bf Experimental pipeline.
}
We use pre-trained options, $o = \langle I_o = \mathcal{S}, \pi_o(a|s), \beta_o(s) \rangle$, to collect transition data $(s_t, o, T, s_{t+T}, r=\sum_{{i=t}}^Tr_{i})$ where option $o$ was initiated at state $s_t$ and ended in state $s_{t+T}$ after $T$ steps, accumulating a reward of $r$. We execute options until termination or for $T_\text{max}$ steps, whichever comes first. We learn linear models to predict the next state distribution $\hat{P}_{\phi_1}(s' | s, o)$, option duration,  $\hat{L}_{\phi_2}(s,o)$ and reward $\hat{r}_{\phi_3}(s, o)$, where $\phi$ denote parameter vectors. Affordances can be incorporated in model learning by selecting only affordable options during the data collection and to mask the loss of unaffordable state-option transitions:
\begin{equation}
    \label{eqn:masked_model_loss}
    \textstyle \sum_{(s, o, T, s', r)\in\mathcal{D}}A(s,o,s',\TEISet)\big[-\log\hat{P}_{\phi_1}(s'|o,s) + (\hat{L}_{\phi_2}(o,s) - T)^2 + (\hat{r}_{\phi_3}(s,o) - r)^2\big]
\end{equation}
where $A(s,o,s',I)$ is 1 if $(s,o,s')$ is affordable according to the intent $I$ and $0$ otherwise. We use the learned models, $\hat{M}$, in value iteration to obtain a policy over options $\pi_{\mathcal{O}}(o|s_t)$. Affordances can be incorporated into planning by only considering state-option pairs in the affordance set (See Algorithm \ref{alg:SMDP-QVI} in the Appendix). We report the \emph{success rate}, i.e., the proportion of episodes in which the agent successfully drops the passenger at the correct location. Data collection, learning, and evaluation happen asynchronously and simultaneously using the Launchpad framework (Fig~\ref{fig:pipeline}, \citet{yang2021launchpad}).

\subsection{When are intents and affordances most useful?}
\label{sec:experiment1}
In this section we investigate the utility of using affordances on different aspects of the pipeline by considering a fixed set of affordances used either during model learning or planning. We first define three intent sets, $\TEISet$, and their corresponding affordances: %
\begin{enumerate*}
    \item \textbf{Everything}: All options are affordable at every state resulting in 37,500 state-option pairs in this affordance set.
    \item \textbf{Pickup+Drop}:  We build this set of affordances heuristically, by eliminating all options that simply go to a grid position, resulting in 25,000  state-option pairs .
    \item \textbf{Pickup+Drop@Goal}: We create this affordance set of 4,000 state-option pairs that terminate at the four destination positions only. 
\end{enumerate*}
When learning the partial model, using the most restrictive affordance set (\textbf{Pickup+Drop@Goal}) to collect data and mask the loss (*$\rightarrow$~Everything) significantly improves the sample efficiency (Fig.~\ref{fig:heuristic_model_learning_only}). The difference between \textbf{Everything} and \textbf{Pickup+Drop} was insignificant suggesting that the order of magnitude decrease in the number of state-option pairs in the affordance set is important (See also Sec~\ref{sec:experiment2} for more analysis of the affordance set size). Additionally, using any affordance set enables the use of a higher learning rate for learning the model without divergence (Fig.~\ref{fig:heuristic_affordances_learning_rate}). On the other hand, given the same option model, using affordance sets only during planning (Everything$\rightarrow$*) does not create any improvement in the success rate (Fig.~\ref{fig:heuristic_planning_only}): the quality of the model dictates the success rate.

Finally, using the most restrictive affordance set for both model learning and planning (Pickup+Drop@Goal$\rightarrow$Pickup+Drop@Goal) can result in further improvements in the sample efficiency (Fig.~\ref{fig:heuristic_both}) demonstrating a combined benefit of using affordances in more aspects of the pipeline.

\begin{figure*}
    \begin{center}
    \subfigure[{Data collection and model learning with affordances.}]{\label{fig:heuristic_model_learning_only}\includegraphics[width=0.32\textwidth]{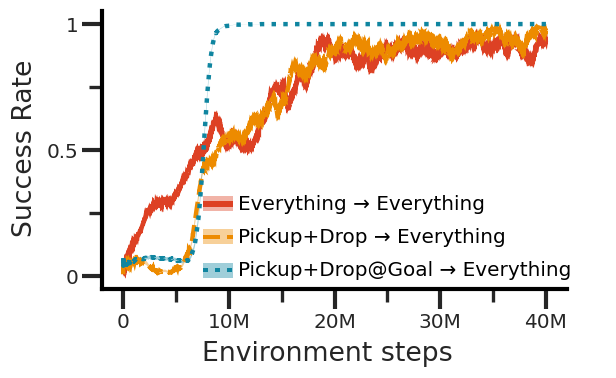}}
    \subfigure[{Planning with affordances.}]{\label{fig:heuristic_planning_only}\includegraphics[width=0.32\textwidth]{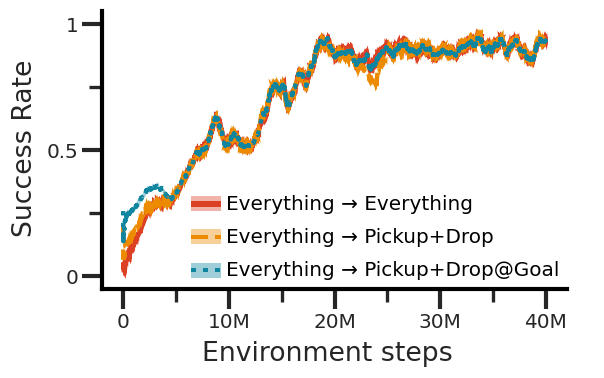}}
    \subfigure[{Data collection, model learning and planning with affordances.}]{\label{fig:heuristic_both}\includegraphics[width=0.32\textwidth]{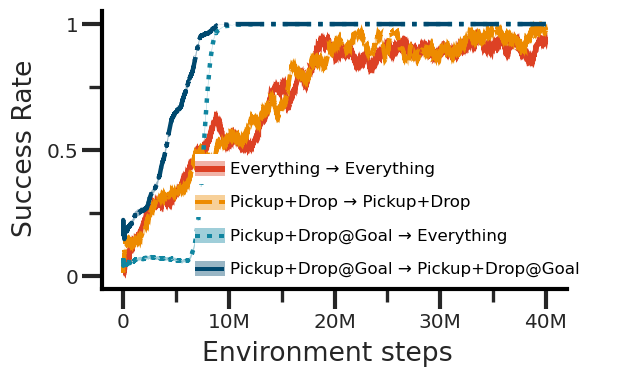}}
    \end{center}
    \vspace{-1\baselineskip}
    \caption{\label{fig:heuristic_affordances}\textbf{The impact of affordance sets on success rate at different parts of the learning pipeline.} (a) The use of affordances improves model learning even in the absence of any affordances during planning (blue). (b) The use of affordances did not impact planning because the underlying quality of the model is the same. (c) When using affordances both during model learning and planning, the best performance is obtained. Curves are smoothed over 4 independent seeds using ggplot's \texttt{stat\_smooth} using a span of 0.1 and confidence interval of 95\%.  %
    }
\end{figure*}

\subsection{Can relevant affordances be learned?}
\label{sec:experiment2}
In this section, we demonstrate the ability to learn affordances at the same time as learning the partial option model.
To do this, we train a classifier, $A_\theta(s, o, s', I)\in[0, 1]$ corresponding to intent $I\in\TEISet$, which predicts if a state-option pair is affordable. \textbf{Pickup+Drop@Goal} is defined by 8 intents: four that are completed when the agent has a passenger in the vehicle at the destinations; and four that are completed when the agent has dropped the passenger at the destinations. We convert $A_\theta(s,o,s',I)$ into an indicator for Eq.~\ref{eqn:masked_model_loss}, by ensuring that at least one of the intents in the intent set is affordable, $A(s,o,s',\TEISet)=\mathbbm{1}[(\max_{I\in\TEISet}(A(s,o,s',I))>k]$ at some threshold value, $k$. When $k =0$, all state and options are affordable. The affordance classifier is learned at the same time as the option model, $\hat{M}$, using the standard cross entropy objective: $-\sum_{I\in\TEISet} c(s,o,s',I) \log A(s,o,s',I)$ where $c(s,o,s',I)$ is the intent completion function indicating if intent $I$ was completed during the transition.
\begin{figure*}[t]
    \begin{center}
    \vspace{-1\baselineskip}
    \subfigure[]{\label{fig:learned_aff_set_size}\includegraphics[width=0.45\textwidth]{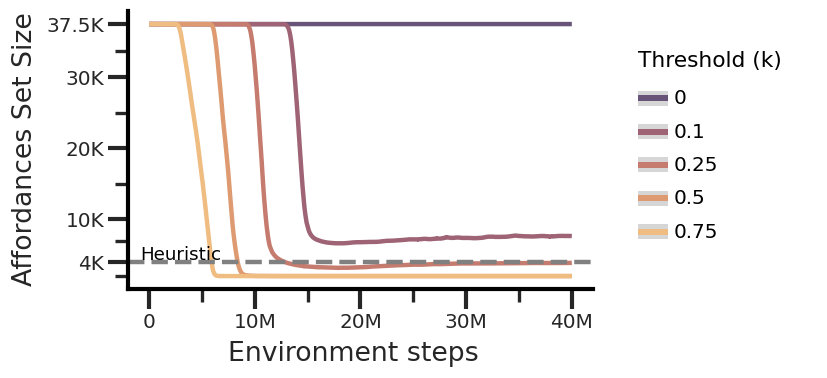}}
    \subfigure[]{\label{fig:learned_prop_succ}\includegraphics[width=0.3\textwidth]{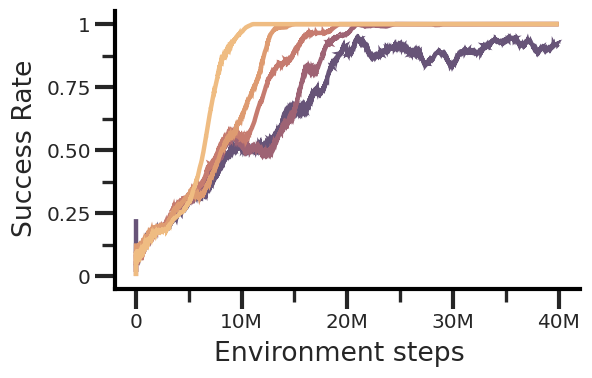}}
    \end{center}
            \vspace{-1\baselineskip}
    \caption{\label{fig:learned_affordances}\textbf{The impact of learning the affordance set for Pickup+Drop@Goal on (a) size of the affordance set and (b) success in the downstream task.} There is a one-to-one correspondence between the threshold, $k$, the affordance set size and the success rate on the taxi task. The learned affordance set for Pickup+Drop@Goal is smaller than the heuristic used in Fig.~\ref{fig:heuristic_both}.}
\end{figure*}

The threshold, $k$, controls the size of the affordance set (Fig.~\ref{fig:learned_aff_set_size}) with larger $k$'s resulting in smaller affordance sets.  The learned affordance set for \textbf{Pickup+Drop@Goal} is 2,000 state-option pairs which smaller than what we heuristically defined (4,000 state-option pairs). Smaller affordance sets result in improved sample efficiency (Fig.~\ref{fig:learned_prop_succ}).
We highlight that this is not necessarily obvious since the learned affordance sets could remove potentially useful state-options pairs and $k$ would be used to control how restrictive the sets are. These results show that affordances can be learned online for a defined set of intents and result in good performance. In particular, there are sample efficiency gains by using more restricted affordance sets.

Our results here demonstrate empirically that learning a partial option model requires much fewer samples as opposed to learning a full model. We also corroborate this with theoretical guarantees on sample and computational complexity of obtaining an $\varepsilon$-estimation of the optimal option value function, given only access to a generative model (See Appendix Sec.~\ref{sec:samplecomplexity}).

\section{Related Work}\label{sec:related}
Affordances are viewed as the action opportunities~\citep{gibson1977theory, chemero2003outline}, emerging out of the agent-environment interaction~\citep{heft1989affordances}, and have been typically studied in AI as possibilities associated with an object~\citep{slocum2000further, fitzpatrick2003learning, lopes2007affordance, montesano2008learning, cruz2016training, cruz2018multi, fulda2017can, song2015learning, abel2014toward}. Affordances have also been formalized in RL without the assumption of objects~\citep{khetarpal2020i}. Our work presents the general case of temporal abstraction~\citep{sutton1999between}.

The process model of behavior and cognition~\citep{pezzulo2016navigating} in the space of affordances is expressed at multiple levels of abstraction. During interactive behavior, action representations at different levels of abstraction can indeed be mapped to findings about the way in which the human brain adaptively selects among predictions of outcomes at different time scales~\citep{cisek2010neural, pezzulo2016navigating}.

In RL, the generalization of one-step action models to option models~\citep{sutton1999between} enables an agent to predict and reason at multiple time scales. \citet{precup1998theoretical} established dynamic programming results for option models which enjoy similar theoretical guarantees as primitive action models. \citet{abel2019expected} proposed expected-length models of options. Our theoretical results can also be extended to expected-length option  models. %

Building agents that can represent and use predictive knowledge requires efficient solutions to cope with the combinatorial explosion of possibilities, especially in large environments. Partial models~\citep{talvitie2009simple} provide an elegant solution to this problem, as they only model part of the observation. Some existing methods focus on predictions for only some of the observations~\citep{oh2017value,amos2018learning,guo2018neural,gregor2019shaping}, but they still model the effects of all the actions and focus on  single-step dynamics~\citep{watters2019cobra}. Recent work by \citet{xu2020deep} proposed a deep RL approach to learn partial models with goals akin to intents, which is complementary to our work.

\section{Conclusions and Limitations}\label{sec:concl}

We presented notions of intents and affordances that can be used together with options. They allow us to define {\em temporally abstract partial models}, which extend option models to be conditioned on affordances. Our theoretical analysis suggests that modelling temporally extended dynamics for only relevant parts of the environment-agent interface provides two-fold benefits: 1) faster planning across different timescales (Sec.~\ref{sec:valueandplanningloss}), and 2) improved sampled efficiency (Appendix Sec.~\ref{sec:samplecomplexity}).However, these benefits can come at the cost of some increase in approximation bias, but this tradeoff can still be favourable. For example, in the low-data regime, intermediate-size affordances (much smaller than the entire state-option space) could really improve the speed of planning. Picking intents judiciously can also induce sample complexity gains, if the approximation error due to the intent is manageable. Our empirical illustration shows that our approach can produce significant benefits.

\textbf{Limitations \& Future Work.} Our analysis assumes that the intents and options are fixed apriori. To learn intents, we envisage an iterative algorithm which alternates between learning intents and affordances, such that intents can be refined over time and the mis-specifications can also be self-corrected~\citep{talvitie2017self}. Our analysis is complimentary to any method for providing or discovering intents. 
Another important future direction is to build partial option models and leverage their predictions in large scale problems, such as~\citep{vinyals2019grandmaster}.
Besides, it would be useful to relate our work to cognitive science models of {\em intentional options}, which can reason about the space of future affordances~\citep{pezzulo2016navigating}. Aligned with future affordances, a promising research avenue is to study the emergence of \emph{new} affordances at the boundary of the agent-environment interaction in the presence of non-stationarity~\citep{chandak2020lifelong}.

\begin{ack}
The authors would like to thank Feryal Behbahani and Dave Abel for a very detailed feedback, Martin Klissarov and Emmanuel Bengio for valuable comments on a draft of this paper, and Joelle Pineau for feedback on ideas presented in this work. A special thank you to Ahmed Touati for discussion and detailed notes~\citep{azar2012sample} presented in RL theory reading group at Mila. 
\end{ack}

\bibliography{references}
\bibliographystyle{apalike}

\appendix

\onecolumn
\renewcommand\thefigure{\thesection\arabic{figure}}  
\setcounter{section}{0}
\setcounter{figure}{0}
\section*{\centering Appendix \\
\underline {Temporally Abstract Partial Models}}

\section{Proofs}
\label{sec:appendix}

\subsection{Lemmas and Remarks}
\label{sec-proofs}

\subsubsection{Proof of Lemma~\ref{ass:generalzetaconstant}}
\label{proof-lemma1}

{\bf Lemma 1: }
Given  a set of temporally extended intents ${\TEISet}=\cup_{o \in \Omega} \TEI$ that are satisfied to degrees $\zeta_{s,o}$, there exist constants ($\zeta^{\mathcal{I}^{\rightarrow}}_P, \zeta^{\mathcal{I}^{\rightarrow}}_R$) such that:
\begin{align*}
     \textstyle \max_{s, o} \sum_{\tau(s)\in {\cal T}(s)} \Big| P_o(\tau(s)) -  P_I(\tau(s))) \Big| & \leq \zeta^{\mathcal{I}^{\rightarrow}}_P \mbox{ and } \\
    \textstyle  \max_{s, o}  \Big| r(s,o) - E_{\tau \sim P_I} [G(\tau | s,o )] \Big|  & \leq \zeta^{\mathcal{I}^{\rightarrow}}_R
\end{align*}

\begin{proof} (Approximate Probability Distributions)
From Def.~\ref{definitionintentgeneral}, $\forall \TEI \in \TEISet$, $\TEI$ is satisfied to a degree, $\zeta_{s,o}$ at state $s\in \cal \mathcal{S}$ and option $o \in \mathcal{O}$ if and only if:
\begin{equation*}
   d(P_I(\tau| s, o), P_o(\tau(s)))\leq \zeta_{s,o},
\end{equation*}
where $d$ is a metric between probability distributions. Let  $\zeta^{\mathcal{I}^{\rightarrow}}_P = \max_{s,o} \zeta_{s,o}$. The result follows immediately.
\end{proof}

\begin{proof} (Approximate Reward Distributions)
Let $\zeta^{\mathcal{I}^{\rightarrow}}_R = \big| \big| G \big| \big|_\infty \zeta^{\mathcal{I}^{\rightarrow}}_P $.
We now consider the maximum error in approximation of rewards due to intent specification as follows:
\begin{align*}
\begin{split}
    &\textstyle  \max_{s, o}  \Big| r(s,o) - E_{\tau \sim P_I} [G(\tau | s,o )] \Big| \\
    &= \max_{s, o}  \Big| \sum_{s'} r(s,o,s') - \sum_{s'} \sum_{\tau} \sum_{t=1}^{\infty} P_I(\tau(s,t,s')|s, o)) G (\tau(s,t,s'))\Big| \\
    &= \max_{s, o}  \Big| \sum_{s'} \sum_{\tau} \sum_{t=1}^{\infty} P_o(\tau(s,t,s')|s, o)) G (\tau(s,t,s')) - \\
    &\sum_{s'} \sum_{\tau} \sum_{t=1}^{\infty} P_I(\tau(s,t,s')|s, o)) G (\tau(s,t,s'))\Big| \\
    &= \max_{s, o}  \Big| \sum_{s'}  \sum_{t=1}^{\infty} \Big( \sum_{\tau} P_o(\tau(s,t,s')|s, o)) - P_I(\tau(s,t,s')|s, o)) \Big) G (\tau(s,t,s'))\Big|\\
    &\leq \Big| \Big| G \Big| \Big|_\infty \zeta^{\mathcal{I}^{\rightarrow}}_P = \zeta^{\mathcal{I}^{\rightarrow}}_R
\end{split}
\end{align*}

\end{proof}

\subsubsection{Remarks}
\label{sec:remark1proof}
\begin{remark}
\label{remark:generalupperboundonvstar}
Given a finite SMDP $\mathcal{M}$, a finite set of options $\mathcal{O}$, the maximum achievable optimal value function $\Big| \Big|V^*\Big| \Big|_\infty$ is upper bounded by $ \frac{R^{\mathcal{O}}_{max}}{ \left(1- \upgamma^{\cal O} \right)}$ where $
\upgamma^{\cal O} = \max_{s,o} \sum_{s'} \gamma_o(s,s')$, and $R^{\mathcal{O}}_{max} = \max_{s,o} R(s,o)$.
\end{remark}

\begin{proof}
To upper bound the optimal value function, we consider $\Big| \Big|Q^*\Big| \Big|_\infty = \max_{s,o} Q^{*}(s,o) = \max_{s} \underbrace{\max_{o} Q^{*}(s,o)}_{V^*}$. Then, $\forall$ $s , o \in \mathcal{S}, \mathcal{O}$ :
\begin{align*}
    Q^{*}(s,o) &=  \sum_{s'} \sum_{t=1}^\infty \sum_{\tau(s,t,s')}  P(\tau(s,t,s')|o) [  G(\tau(s,t,s') + \gamma(\tau(s,t,s')) \max_{o'} Q^*(s', o') ] \\
    &= R(s,o) +  \sum_{s'} \sum_{t=1}^\infty \sum_{\tau(s,t,s')} P(\tau(s,t,s')|o) \gamma(\tau(s,t,s')) \max_{o'} Q^*(s', o')
\end{align*}

Taking the max norm on both sides,
\begin{align*}
    \max_{s,o} Q^{*}(s,o) &= \Big| \Big| R(s,o) +  \sum_{s'} \sum_{t=1}^\infty \sum_{\tau(s,t,s')} P(\tau(s,t,s')|o) \gamma(\tau(s,t,s')) \max_{o'} Q^*(s', o') \Big| \Big|_\infty \\
    &\leq \max_{s, o} R(s,o) + \max_{s,o} \underbrace{ \sum_{s'} \sum_{t=1}^\infty \sum_{\tau(s,t,s')} P(\tau(s,t,s')|o) \gamma(\tau(s,t,s'))}_{ \sum_{s'}\gamma_o(s,s')} \max_{o'} Q^*(s', o') \\
    &\leq R^{\mathcal{O}}_{max} + \Big| \Big|Q^*\Big| \Big|_\infty \max_{s,o} \sum_{s'}\gamma_o(s,s')   \\
    &\implies \Big| \Big|Q^*\Big| \Big|_\infty \leq R^{\mathcal{O}}_{max} \left( 1- \max_{s,o} \sum_{s'} \gamma_o(s,s') \right)^{-1} \\
    &\implies \Big| \Big|V^*\Big| \Big|_\infty \leq R^{\mathcal{O}}_{max} \left( 1- \max_{s,o} \sum_{s'} \gamma_o(s,s') \right)^{-1} = R^{\mathcal{O}}_{max} \left( 1- \upgamma^{\cal O} \right)^{-1}.
\end{align*}
\end{proof}

\label{sec:remark2proof}
\begin{remark}
\label{remark:smdpvmaxupperbound}
Given a finite SMDP $\mathcal{M}$, a finite set of options $\mathcal{O}$, with  $\mathcal{D}$ as the minimum expected duration for which all options execute, $\upgamma$ to be the maximum expected option discount factor, the maximum achievable optimal value function $V_{max}$ is upper bounded by $\frac{R^{\mathcal{O}}_{max}}{(1-\gamma^{\mathcal{D}})} = \frac{R^{\mathcal{O}}_{max}}{(1-\upgamma^{\cal O})}$ , where $R^{\mathcal{O}}_{max}$ is the maximum achievable reward by an option, and $\mathcal{D} = \min_{s,o} \log_\gamma \sum_{s'} p(s' |s, o) $.
\end{remark}
\begin{proof}
Consider the maximum achievable optimal value function in the SMDP $\mathcal{M}$ to be $ V_{max}$.
\begin{align*}
    V_{max} = ||V^{*}_\mathcal{O}||_\infty.
\end{align*}

Then, $\forall$ $s \in \mathcal{S}$:
\begin{align*}
    V^{*}_\mathcal{O}(s) &= \max_{o \in \mathcal{O}} \Big[  R(s,o) + \sum_{s'} p(s'|s,o) V^{*}_\mathcal{O}(s') \Big] \\
    &\leq \max_{o \in \mathcal{O}} \Big[  R(s,o) + \sum_{s'} p(s'|s,o) \max_{s'' \in \mathcal{S}} V^{*}_\mathcal{O}(s'') \Big] \\
    & = \max_{o \in \mathcal{O}} \Big[  R(s,o) + \gamma_o(s) \max_{s'' \in \mathcal{S}} V^{*}_\mathcal{O}(s'') \Big],  \text{\;\; substituting } \gamma_o(s) = \sum_{s'} p(s'|s,o) \\
    & = \max_{o \in \mathcal{O}} \Big[  R(s,o) + \gamma_o(s) || V^{*}_\mathcal{O} ||_\infty \Big] \\
    &\leq \underbrace{\max_{o \in \mathcal{O}}  R(s,o)}_{ \leq R^{\mathcal{O}}_{max}} + \underbrace{\max_{s,o \in \mathcal{S},\mathcal{O}} \gamma_o(s)}_{\leq \gamma_{max}} || V^{*}_\mathcal{O} ||_\infty \\
    &\leq  R^{\mathcal{O}}_{max} + \gamma_{max} || V^{*}_\mathcal{O} ||_\infty, \text{\;\; where } R^{\mathcal{O}}_{max} = \max_{s,o} R(s,o) \\
    &\implies ||V^{*}_\mathcal{O}||_\infty \leq \frac{R^{\mathcal{O}}_{max}}{(1-\gamma_{max})}.
\end{align*}

Note consider the following definition of $\mathcal{D}$:
\begin{align*}
    \mathcal{D}  &= \min_{s,o} \log_\gamma \sum_{s'} p(s' |s, o) = \min_{s,o} \log_\gamma \gamma_o(s) \\
    &=  \log_\gamma \underbrace{\max_{s,o} \gamma_o(s)}_{\gamma_{max}}, \text{\;\; since } \gamma < 1, \log_\gamma \text{is a monotonically decreasing function} \\
    &= \log_\gamma \gamma_{max} \\
    &\implies \gamma_{max} = \gamma^{\mathcal{D}} \implies \gamma^{\mathcal{D}} = \upgamma^{\cal O}
\end{align*} Therefore, $V_{max} \leq \frac{R^{\mathcal{O}}_{max}}{(1-\upgamma^{\cal O})}.$
\end{proof}

\subsection{Proofs - Value Loss Analysis}
\label{appsec:valueloss}

\textbf{Note:} For convenience, throughout our proofs we will be using ${\mathcal{I}}$ instead of $\TEISet$ to denote a set of temporally extended intents. Similarly, we will use $I$ instead of $\TEI$ to denote a temporally extended intent for an option $o$.

\subsubsection{\textbf{Proof of Theorem~\ref{thm:trajectories_value_loss_analysis}}}
\label{sec:app-trajectories_value_loss_analysis}

\begin{proof}
Formally, the value loss is defined as \begin{equation*}
    \Big| \Big| V^{\pi^{*}_{\cal I}}_{\cal M} - V^*_{\cal M} \Big| \Big|_{\infty}= \max_{s \in \cal S} \Big| V^{\pi^{*}_{\cal I}}_{\cal M}(s) - V^*_{\cal M}(s) \Big|
\end{equation*}
We now consider the RHS and expand as follows: \begin{align*}
    \max_{s \in \cal S}  \Big| V^{\pi^{*}_{\cal I}}_{\cal M}(s) - V^*_{\cal M}(s) \Big| \leq \underbrace{\max_{s \in \cal S} \Big| V^*_{\cal M}(s) - V^{*}_{\cal M_{\cal I}}(s) \Big|}_\textbf{Term 1} + \underbrace{\max_{s \in \cal S} \Big| V^{\pi^{*}_{\cal I}}_{\cal M}(s) - V^*_{\cal M_{\cal I}}(s) \Big|}_\textbf{Term 2}
\end{align*}

\textit{Bounding Term 1.} \begin{align*}
    \textstyle \max_{s \in \cal S} \Big| V^*_{\cal M}(s) - V^{*}_{\cal M_{\cal I}}(s) \Big| &= \max_{s \in \cal S} \max_{o \in \cal O} \Big| Q^*(s,o) - Q^*_I(s,o) \Big|
\end{align*}

Expanding the action-value loss from the RHS above, we get:
\begin{align*}
& Q^*(s,o) - Q^*_I(s,o) = \\
& \quad \quad = \sum_{s'} \sum_{t=1}^\infty \sum_{\tau(s,t,s')} P(\tau(s,t,s')|o) [  G(\tau(s,t,s') + \gamma(\tau(s,t,s')) \max_{o'} Q^*(s', o') ] \\ & \quad \quad \quad \quad - \sum_{s'} \sum_{t=1}^\infty \sum_{\tau(s,t,s')} P_I(\tau(s,t,s')|o)[ G(\tau(s,t,s') + \gamma(\tau(s,t,s')) \max_{o'} Q_I^*(s', o')]\\
& \quad \quad = \sum_{s'} \sum_{t=1}^\infty \sum_{\tau(s,t,s')} (P(\tau(s,t,s')|o) - P_I(\tau(s,t,s')|o) G(\tau(s,t,s')\\ 
& \quad \quad \quad \quad + \sum_{s'} \sum_{t=1}^\infty \sum_{\tau(s,t,s')} \gamma(\tau(s,t,s')) \Big( P(\tau(s,t,s')|o)  \max_{o'} Q^*(s', o') - P_I(\tau(s,t,s')|o) \max_{o'} Q_I^*(s', o') \Big)
\\
& \quad \quad = (R(s,o)-R_I(s,o)) + \sum_{s'} \sum_{t=1}^\infty \sum_{\tau(s,t,s')} \gamma(\tau(s,t,s')) \Big( P(\tau(s,t,s')|o)  \max_{o'} Q^*(s', o') \\
& \quad \quad \quad \quad - P_I(\tau(s,t,s')|o) \max_{o'} Q^*(s', o') + P_I(\tau(s,t,s')|o) \max_{o'} Q^*(s', o') - P_I(\tau(s,t,s')|o) \max_{o'} Q_I^*(s', o') \Big)\\
& \quad \quad = (R(s,o)-R_I(s,o)) + \sum_{s'} \sum_{t=1}^\infty \sum_{\tau(s,t,s')} \gamma(\tau(s,t,s')) (P(\tau(s,t,s')|o) - P_I(\tau(s,t,s')|o)) \max_{o'} Q^*(s', o') \\
& \quad \quad \quad \quad + \sum_{s'} \sum_{t=1}^\infty \sum_{\tau(s,t,s')} \gamma(\tau(s,t,s')) P_I(\tau(s,t,s')|o))  \max_{o'} ( Q^*(s', o') - Q_I^*(s', o') )
\end{align*} Taking the max norm and applying triangle inequality, we get:\begin{align*}
\begin{split}
\Big| \Big| Q^* - Q^*_I \Big| \Big|_\infty &= \max_{s,o} \Big[ (R(s,o)-R_I(s,o)) + \\
&\sum_{s'} \sum_{t=1}^\infty \sum_{\tau(s,t,s')} \gamma(\tau(s,t,s')) (P(\tau(s,t,s')|o) - P_I(\tau(s,t,s')|o)) \max_{o'} Q^*(s', o') \\
&+ \sum_{s'} \sum_{t=1}^\infty \sum_{\tau(s,t,s')} \gamma(\tau(s,t,s')) P_I(\tau(s,t,s')|o))  \max_{o'} ( Q^*(s', o') - Q_I^*(s', o') ) \Big] \\ 
&\leq \big| \big| R - R_I \big| \big|_\infty + \max_{s,o} \sum_{s'} \sum_{t=1}^\infty \sum_{\tau(s,t,s')} \gamma(\tau(s,t,s')) \Big( P(\tau(s,t,s')|o) - P_I(\tau(s,t,s')|o) \Big) ||Q^*||_\infty \\ 
&+ \max_{s,o} \underbrace{\sum_{s'} \sum_{t=1}^\infty \sum_{\tau(s,t,s')} \gamma(\tau(s,t,s'))  P_I(\tau(s,t,s')|o))}_{\sum_{s'} \gamma_o^I(s,s')}  \Big| \Big| Q^* - Q_I^* \Big| \Big|_\infty \\
&\leq \big| \big| R - R_I \big| \big|_\infty + \max_{s,o} \sum_{s'} \sum_{t=1}^\infty \sum_{\tau(s,t,s')} \gamma(\tau(s,t,s')) \Big(  P(\tau(s,t,s')|o) - P_I(\tau(s,t,s')|o) \Big) \Big| \Big|Q^*\Big| \Big|_\infty \\ 
&+ \max_{s,o} \sum_{s'} \gamma_o^I(s,s')  \Big| \Big| Q^* - Q_I^* \Big| \Big|_\infty
\end{split}
\end{align*}

Rearranging, we get: \begin{align*}
    \Big| \Big| Q^* - Q^*_I \Big| \Big|_\infty &\leq  \Big( 1-\max_{s,o} \sum_{s'} \gamma_o^I(s,s') \Big)^{-1} \Big[ \big|  \big| R-R_I \big| \big|_\infty + \\  & \quad \quad \max_{s,o} \sum_{s'} \sum_{t=1}^\infty \sum_{\tau(s,t,s')} \gamma(\tau(s,t,s')) \big( P(\tau(s,t,s')|o) - P_I(\tau(s,t,s')|o)) \big) \big| \big|Q^*\big| \big|_\infty  \Big]
\end{align*}

Since $V^*(s) = \max_o Q^*(s,o)$, we can rewrite the above as following:\begin{align*}
    \Big| \Big| V^*_{\cal M} - V^{*}_{\cal M_{\cal I}} \Big| \Big|_\infty &\leq  \Big( 1-\max_{s,o} \sum_{s'} \gamma_o^I(s,s') \Big)^{-1} \Big[ \big|  \big| R-R_I \big| \big|_\infty + \\  & \quad \quad + \max_{s,o} \sum_{s'} \sum_{t=1}^\infty \sum_{\tau(s,t,s')} \gamma(\tau(s,t,s')) \big( P(\tau(s,t,s')|o) - P_I(\tau(s,t,s')|o)) \big) \big| \big|V^*\big| \big|_\infty  \Big]
\end{align*}

\vspace{-.5cm}
\textit{Bounding Term 2.}
We now consider the term 2 and bound the policy evaluation error i.e. $\max_{s \in \cal S} \Big| V^{\pi^{*}_{\cal I}}_{\cal M}(s) - V^{\pi^{*}_{\cal I}}_{\cal M_{\cal I}}(s) \Big|$ 
\begin{align*}
& V^{\pi^{*}_{\cal I}}_{\cal M}(s) - V^{\pi^{*}_{\cal I}}_{\cal M_{\cal I}}(s) = \\
& \quad \quad = \sum_{s'} \sum_{t=1}^\infty \sum_{\tau(s,t,s')} P(\tau(s,t,s')|\pi^{*}_{\cal I}(s)) [  G(\tau(s,t,s')) + \gamma(\tau(s,t,s')) V^{\pi^{*}_{\cal I}}_{\cal M}(s') ] \\ 
& \quad \quad \quad \quad - \sum_{s'} \sum_{t=1}^\infty \sum_{\tau(s,t,s')} P_I(\tau(s,t,s')|\pi^{*}_{\cal I}(s))[ G(\tau(s,t,s')) + \gamma(\tau(s,t,s')) V^{\pi^{*}_{\cal I}}_{\cal M_{\cal I}}(s')]\\
& \quad \quad = \sum_{s'} \sum_{t=1}^\infty \sum_{\tau(s,t,s')} \Big( P(\tau(s,t,s')|\pi^{*}_{\cal I}(s)) - P_I(\tau(s,t,s')|\pi^{*}_{\cal I}(s) \Big) G(\tau(s,t,s'))\\ 
& \quad \quad \quad \quad + \sum_{s'} \sum_{t=1}^\infty \sum_{\tau(s,t,s')} \gamma(\tau(s,t,s')) \Big( P(\tau(s,t,s')|\pi^{*}_{\cal I}(s)) V^{\pi^{*}_{\cal I}}_{\cal M}(s') - P_I(\tau(s,t,s')|\pi^{*}_{\cal I}(s)) V^{\pi^{*}_{\cal I}}_{\cal M_{\cal I}}(s') \Big)
\\
& \quad \quad = (R(s,\pi^{*}_{\cal I}(s))-R_I(s,\pi^{*}_{\cal I}(s))) + \sum_{s'} \sum_{t=1}^\infty \sum_{\tau(s,t,s')} \gamma(\tau(s,t,s')) \Big( P(\tau(s,t,s')|\pi^{*}_{\cal I}(s)) V^{\pi^{*}_{\cal I}}_{\cal M}(s') \\
& \quad \quad \quad \quad - P_I(\tau(s,t,s')|\pi^{*}_{\cal I}(s)) V^{\pi^{*}_{\cal I}}_{\cal M}(s') + P_I(\tau(s,t,s')|\pi^{*}_{\cal I}(s)) V^{\pi^{*}_{\cal I}}_{\cal M}(s') - P_I(\tau(s,t,s')|\pi^{*}_{\cal I}(s)) V^{\pi^{*}_{\cal I}}_{\cal M_{\cal I}}(s') \Big)\\
& \quad \quad = (R(s,\pi^{*}_{\cal I}(s))-R_I(s,\pi^{*}_{\cal I}(s))) + \sum_{s'} \sum_{t=1}^\infty \sum_{\tau(s,t,s')} \gamma(\tau(s,t,s')) (P(\tau(s,t,s')|\pi^{*}_{\cal I}(s)) - P_I(\tau(s,t,s')|\pi^{*}_{\cal I}(s))) V^{\pi^{*}_{\cal I}}_{\cal M}(s') \\
& \quad \quad \quad \quad + \sum_{s'} \sum_{t=1}^\infty \sum_{\tau(s,t,s')} \gamma(\tau(s,t,s')) P_I(\tau(s,t,s')|\pi^{*}_{\cal I}(s))) \Big( V^{\pi^{*}_{\cal I}}_{M}(s') - V^{\pi^{*}_{\cal I}}_{\cal M_{\cal I}}(s') \Big)
\end{align*}

Taking the max over all states, and applying triangle inequality we get:
\begin{align*}
&\max_{s \in \cal S} \Big| V^{\pi^{*}_{\cal I}}_{\cal M}(s) - V^{\pi^{*}_{\cal I}}_{\cal M_{\cal I}}(s) \Big| = \\
& \max_{s} \Big| (R(s,\pi^{*}_{\cal I}(s))-R_I(s,\pi^{*}_{\cal I}(s))) \\
& + \sum_{s'} \sum_{t=1}^\infty \sum_{\tau(s,t,s')} \gamma(\tau(s,t,s')) (P(\tau(s,t,s')|\pi^{*}_{\cal I}(s)) - P_I(\tau(s,t,s')|\pi^{*}_{\cal I}(s))) V^{\pi^{*}_{\cal I}}_{\cal M}(s') \\
& + \sum_{s'} \sum_{t=1}^\infty \sum_{\tau(s,t,s')} \gamma(\tau(s,t,s')) P_I(\tau(s,t,s')|\pi^{*}_{\cal I}(s))) \Big( V^{\pi^{*}_{\cal I}}_{M}(s') - V^{\pi^{*}_{\cal I}}_{\cal M_{\cal I}}(s') \Big) \Big| \\ 
&\leq \big| \big| R - R_I \big| \big|_\infty \\
&+ \max_s  \sum_{s'} \sum_{t=1}^\infty \sum_{\tau(s,t,s')} \gamma(\tau(s,t,s')) |P(\tau(s,t,s')|\pi^{*}_{\cal I}(s)) - P_I(\tau(s,t,s')|\pi^{*}_{\cal I}(s))) \Big| \Big|V^{\pi^{*}_{\cal I}}_{\cal M}\Big| \Big|_\infty \\ 
&+ \max_s \sum_{s'} \sum_{t=1}^\infty \sum_{\tau(s,t,s')} \gamma(\tau(s,t,s')) P_I(\tau(s,t,s')|\pi^{*}_{\cal I}(s)))  \Big| \Big| V^{\pi^{*}_{\cal I}}_{M}(s') - V^{\pi^{*}_{\cal I}}_{\cal M_{\cal I}} \Big| \Big|_\infty
\end{align*}

Rearranging the terms, we get: 
\begin{align*}
    \Big| \Big| V^{\pi^{*}_{\cal I}}_{\cal M} - V^{\pi^{*}_{\cal I}}_{\cal M_{\cal I}} \Big| \Big|_\infty &\leq \Big( 1-\max_{s,o} \sum_{s'} \gamma_o^I(s,s') \Big)^{-1} \Big[ \big|  \big| R-R_I \big| \big|_\infty + \\  & \quad \quad + \max_{s,o} \sum_{s'} \sum_{t=1}^\infty \sum_{\tau(s,t,s')} \gamma(\tau(s,t,s')) \big| P(\tau(s,t,s')|o) - P_I(\tau(s,t,s')|o)| \big) \big| \big|V^*\big| \big|_\infty  \Big]
\end{align*}

Plugging the bounds for the two terms in our original loss, and plugging the upper bound on the optimal value function from Remark~\ref{remark:generalupperboundonvstar}, we get:
\begin{align*}
    \Big| \Big| V^{\pi^{*}_{\cal I}}_{\cal M} - V^*_{\cal M} \Big| \Big|_{\infty} &\leq \Big( 1-\max_{s,o} \sum_{s'} \gamma_o^I(s,s') \Big)^{-1} \Big|  \Big| R-R_I \Big| \Big|_\infty + \frac{2 R^{\mathcal{O}}_{max} \Big( 1-\max_{s,o} \sum_{s'} \gamma_o^I(s,s') \Big)^{-1}}{\Big( 1-\max_{s,o} \sum_{s'} \gamma_o(s,s') \Big)} \times \\
    &\max_{s,o} \sum_{s'} \sum_{t=1}^\infty \sum_{\tau(s,t,s')} \gamma(\tau(s,t,s')) \Big| P(\tau(s,t,s')|o) - P_I(\tau(s,t,s')|o)) \Big|
\end{align*}

Further, substituting Lemma~\ref{ass:generalzetaconstant}, we get the final result as follows:
\begin{align*}
    \Big| \Big| V^{\pi^{*}_{\cal I}}_{\cal{M}} - V^*_{\cal{M}} \Big| \Big|_{\infty}  &\leq \frac{\zeta^{\mathcal{I}}_R}{\Big( 1-\upgamma^{\cal I} \Big)} + \frac{2 R^{\mathcal{O}}_{max} \max_{s,o} \sum_{t=1}^\infty \gamma^t  |\mathcal{S}| \zeta^{\mathcal{I}}_P %
    }{\Big( 1-\upgamma^{\cal I} \Big)\Big( 1-\upgamma \Big)} 
\end{align*}
Recall that ${\cal I}$ was used to denote $\TEISet$, the set of temporally extended intents, throughout the proof.
\end{proof}

\subsubsection{Corollary 1. SMDP - Multi-Time-Model of Intent - Value Loss Bound}
\label{sec:app-smdpvalueloss}

A special case of our formulation is to model the consequences of following a specific course of action based on final state representations at the SMDP level. 

More precisely, the multi-time-model of an option intent must characterize both the target state distribution resulting upon the option's completion, and the intended temporal scale at which the option operates i.e. $\TEI: \cal{S} \to \text{SDist}(\cal{S})$, where $\text{SDist}$ stands for the set of all sub-probability distributions over ${\cal S}$. The intent-induced transition model would then take the role of the transition dynamics reflected by the option model (assuming rewards are the same and known). For this case, we require a metric between sub-probability distributions and assume that,

\begin{assumption}
\label{ass:zetaconstant}
For each state-option pair, the total variation between the intended distribution $P_I$ and the true distribution $P$ is bounded by a constant $\zeta_{s,o}$, i.e.
\begin{align}
\label{eq:mtmzetaconstant}
    \textstyle \sum_{s'} \Big| P_{I}(s' | s, o) - p(s'| s, o) \Big| \leq \zeta_{{s,o}}.
\end{align}
The degree of satisfaction of the intent is the maximum over all $(s,o)$ pairs, i.e. $\max_{s,o}  \zeta_{{s,o}} = \zeta^{\mathcal{I}}.$
\end{assumption}

\begin{corollary}
\label{corollary-smdp-valueloss} [Multi-Time-Model of Intent- Value Loss.]
Given a SMDP $\mathcal{M}$ corresponding to a set of options $\mathcal{O}$ and a set of temporally extended multi-time-model of intents, the value loss between the optimal policy for the original SMDP $\cal{M}$ and the optimal policy $\pi^*_{\TEISet}$ for the induced SMDP $\cal{M}_{\TEISet}$ is given by:
\begin{equation}
\label{eq:corollarysmdpvalueloss}
    \Big| \Big| V^{\pi^{*}_{\TEISet}}_{\cal{M}} - V^*_{\cal{M}} \Big| \Big|_{\infty} \leq   2\zeta^{\TEISet} \frac{ \upgamma R^{\mathcal{O}}_{max}}{(1-\upgamma)^2}, 
\end{equation}
where $\zeta^{\TEISet}$ is the degree of satisfaction of the intents (Eq.~\ref{eq:mtmzetaconstant}), $R^{\mathcal{O}}_{max}=\max_{s,o} r(s,o)$ is the maximum option reward, and $\upgamma$ is the maximum expected option discount factor.
\end{corollary}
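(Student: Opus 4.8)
The plan is to obtain this corollary as a specialization of the trajectory-based value-loss bound (Theorem~\ref{thm:trajectories_value_loss_analysis}), instantiated to the case where each temporally extended intent is specified directly at the SMDP level as a sub-probability distribution $P_I(\cdot\mid s,o)$ over terminal states rather than over full trajectories. Under this specialization two simplifications occur: since rewards are assumed known and identical in $\mathcal{M}$ and $\mathcal{M}_{\TEISet}$, the reward-error constant $\zeta^{\mathcal{I}^{\rightarrow}}_R$ of Lemma~\ref{lemma1} is zero, so the first summand of Theorem~\ref{thm:trajectories_value_loss_analysis} drops out; and the trajectory-to-transition conversion factor $\max_{s,o}\sum_{t}\gamma^t|\mathcal{S}|$ collapses, because Assumption~\ref{ass:zetaconstant} already controls the multi-time transition directly via $\max_{s,o}\sum_{s'}|P_I(s'\mid s,o)-p(s'\mid s,o)|\le\zeta^{\TEISet}$. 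I expect the expression to reduce to the stated $2\upgamma\,\zeta^{\TEISet} R^{\mathcal{O}}_{max}/(1-\upgamma)^2$, with $\upgamma^{\TEISet}$ and $\upgamma^{\mathcal{O}}$ both identified with the common maximum expected option discount $\upgamma$.

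Alternatively, and probably more cleanly, I would give a self-contained derivation mirroring the proof of Theorem~\ref{thm:trajectories_value_loss_analysis} but working directly with SMDP transitions. First I would split the loss by the triangle inequality into $\|V^*_{\mathcal{M}}-V^*_{\mathcal{M}_{\TEISet}}\|_\infty$ (difference of optimal values) plus the policy-evaluation error $\|V^{\pi^*_{\TEISet}}_{\mathcal{M}}-V^{\pi^*_{\TEISet}}_{\mathcal{M}_{\TEISet}}\|_\infty$. For each term I would expand the SMDP Bellman backup $Q(s,o)=R(s,o)+\sum_{s'}p(s'\mid s,o)V(s')$ for the true and intent-induced models, cancel the identical reward terms, and insert the add-and-subtract term $\sum_{s'}P_I(s'\mid s,o)V^*(s')$. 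This yields a transition-mismatch summand $\sum_{s'}(p-P_I)V^*$, controlled by $\zeta^{\TEISet}\|V^*\|_\infty$ through Assumption~\ref{ass:zetaconstant}, and a residual summand $\sum_{s'}P_I(V^*-V^*_I)$ that behaves as a $\upgamma$-contraction since $\max_{s,o}\sum_{s'}P_I(s'\mid s,o)\le\upgamma$.

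Rearranging the contraction gives each of the two terms a factor $(1-\upgamma)^{-1}$, and substituting the value bound $\|V^*\|_\infty\le R^{\mathcal{O}}_{max}/(1-\upgamma)$ from Remark~\ref{remark:generalupperboundonvstar} supplies the second $(1-\upgamma)^{-1}$; the two symmetric terms then combine into the leading factor of $2$. The policy-evaluation term is handled identically, replacing $\max_o$ by evaluation under $\pi^*_{\TEISet}$, so both contributions share the same structural bound.

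The main obstacle I anticipate is the discount bookkeeping, specifically extracting the extra $\upgamma$ in the numerator. In the naive recursion above the transition-mismatch term contributes only $\zeta^{\TEISet}\|V^*\|_\infty$, which would give $2\zeta^{\TEISet}R^{\mathcal{O}}_{max}/(1-\upgamma)^2$ \emph{without} the $\upgamma$. The additional factor arises from factoring the expected option discount out of the multi-time transition, so that the mismatch of the (undiscounted) terminal-state distributions is effectively scaled by $\upgamma$ before being multiplied by $\|V^*\|_\infty$, exactly paralleling the single $\gamma$ appearing in the primitive-action bound $2\zeta^{\mathcal{I}}\gamma R_{max}/(1-\gamma)^2$. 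Being consistent about whether $\zeta^{\TEISet}$ bounds the discounted or undiscounted intent distribution, and legitimately collapsing $\upgamma^{\TEISet}$ and $\upgamma^{\mathcal{O}}$ into a single $\upgamma$, is precisely the delicate step that must be pinned down to recover the stated constant.
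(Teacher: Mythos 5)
Your first route is exactly the paper's own proof: it specializes the intermediate bound from the proof of Theorem~\ref{thm:trajectories_value_loss_analysis} by setting $\|R-R_I\|_\infty=0$ (rewards known and shared), bounding the discounted trajectory-mismatch term $\max_{s,o}\sum_{s'}\sum_{t}\sum_{\tau(s,t,s')}\gamma(\tau(s,t,s'))\,\big|P(\tau(s,t,s')|o)-P_I(\tau(s,t,s')|o)\big|$ by $\upgamma\,\zeta^{\TEISet}$ via Assumption~\ref{ass:zetaconstant}, identifying $\upgamma^{\TEISet}=\upgamma^{\mathcal{O}}=\upgamma$, and substituting $\|V^*\|_\infty\le R^{\mathcal{O}}_{max}/(1-\upgamma)$ from Remark~\ref{remark:generalupperboundonvstar}; your self-contained SMDP-level derivation is just this same argument unrolled. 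You also correctly pinpointed the one genuinely delicate step --- extracting the extra $\upgamma$ in the numerator, which depends on whether $\zeta^{\TEISet}$ bounds the discounted or undiscounted intent distribution --- a step the paper itself handles only by assertion under an underbrace, so the proposal is correct and essentially the same approach.
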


\begin{proof}
We now show that our general result in Theorem~\ref{thm:trajectories_value_loss_analysis} can be reduced to a specific case of considering the multi-time-option model of intents.

We first assume here that rewards are known and given which results in the term $ \big| \big| R-R_I \big| \big|_\infty = 0 $, and the second term can be simplified further as follows:
\begin{align*}
\Big| \Big| Q^* - Q^*_I \Big| \Big|_\infty &\leq  \frac{\big| \big|Q^*\big| \big|_\infty }{1-\max_{s,o} \sum_{s'} \gamma_o^I(s,s')} \max_{s,o} \sum_{s'} \sum_{t=1}^\infty \sum_{\tau(s,t,s')} \gamma(\tau(s,t,s')) \big|  P(\tau(s,t,s')|o) - P_I(\tau(s,t,s')|o) \big|
\end{align*}

Plugging Remark~\ref{remark:generalupperboundonvstar}, we get:
\begin{align*}
    || V^{\pi^{*}_{\mathcal{I}}}_{\cal{M}} - V^*_{\cal{M}}||_\infty \leq \frac{2 R^{\mathcal{O}}_{max}}{(1- \upgamma)^2} \underbrace{ \max_{s,o} \sum_{s'} \sum_{t=1}^\infty \sum_{\tau(s,t,s')} \gamma(\tau(s,t,s')) \Big|  P(\tau(s,t,s')|o) - P_I(\tau(s,t,s')|o) \Big|}_{ \leq \upgamma \zeta^{\mathcal{I}^{\rightarrow}}}
\end{align*}

Simplifying terms, we get the final result
\begin{align*}
      \Big| \Big| V^{\pi^{*}_{\mathcal{I}}}_{\cal{M}} - V^*_{\cal{M}} \Big| \Big|_{\infty} \leq   2\zeta^{\mathcal{I}} \frac{ \upgamma R^{\mathcal{O}}_{max}}{(1-\upgamma)^2}
\end{align*}
\end{proof}

\subsection{Proofs - Planning Loss Analysis}
\label{appsec:planloss}

\ddef{Policy class $\Pi_{\TEISet}$}{Given affordance set $ \AF$, let  $\mathcal{M}_{\TEISet}$ be the set of SMDPs over the state-options pairs in $\AF$, let 
\begin{equation*}
\Pi_{\TEISet} = \{ \pi^{*}_M \} \cup \{ \pi : \exists \bar M \in  \mathcal{M}_{\TEISet} \text{ s.t. }  \pi \text{ is optimal in } \bar M  \}.
\end{equation*}}

\subsubsection{\textbf{Proof of Theorem~\ref{theorem:trajectories_planningvaluelossbound}}. Planning Loss - Trajectories Based Intent.}
\label{sec:app-trajectories_planninglossboundproof}

\begin{proof}
To prove this theorem we will be using the lemmas below: Lemma~\ref{lemma2smdp-trajectories}, Lemma~\ref{lemma3smdp-trajectories}, and Lemma~\ref{lemma4smdp-trajectories}, and \ref{lemma:jiangv2smdp-trajectories}. 

\textbf{Note:} For convenience, throughout our proofs we will be using ${\mathcal{I}}$ instead of $\TEISet$ to denote a set of temporally extended intents. Similarly, we will use $I$ instead of $\TEI$ to denote a temporally extended intent for an option $o$.

\lemma{For any SMDP $\hat{\cal M}_\AFnoarrow$,  which is an approximate model of the SMDP given by the intent collection $\mathcal{I}$\footnote{We overload notation and throughout our proofs, for convenience we interchangeably use ${\mathcal{I}}$ and $\mathcal{I}$ to denote set of temporally extended intents.}, we have
\begin{equation}
    \Big|\Big| V^*_{\cal M_{\cal I}} - V^{\pi^*_{\hat{\cal M}_\AFnoarrow}}_{\cal M_{\cal I}} \Big|\Big|_{\infty} \leq  2 \max_{\pi \in \Pi_{\mathcal{I}}} ||V^{\pi}_{\cal M_{\cal I}} - V^{\pi}_{\hat{\cal M}_\AFnoarrow}||_{\infty}.
\end{equation}
\label{lemma2smdp-trajectories}
}
\begin{proof}
$\forall s \in \mathcal{S}$,
Let us consider:
\begin{align*}
    &  V^*_{\cal M_{\cal I}}(s) - V^{\pi^*_{\hat{\cal M}_\AFnoarrow}}_{\cal M_{\cal I}}(s) \\
    &= \Big( V^*_{\cal M_{\cal I}}(s) - V^{\pi^*_{\cal M_{\cal I}}}_{\hat{\cal M}_\AFnoarrow}(s) \Big) + \underbrace{\Big( V^{\pi^*_{\cal M_{\cal I}}}_{\hat{\cal M}_\AFnoarrow}(s) - V^{*}_{\hat{\cal M}_\AFnoarrow}(s) \Big)}_{\leq 0} + \Big( V^{*}_{\hat{\cal M}_\AFnoarrow}(s) - V^{\pi^*_{\hat{\cal M}_\AFnoarrow}}_{\cal M_{\cal I}}(s) \Big)\\
    &\leq \Big( V^*_{\cal M_{\cal I}}(s) - V^{\pi^*_{\cal M_{\cal I}}}_{\hat{\cal M}_\AFnoarrow}(s) \Big) - \Big( V^{*}_{\hat{\cal M}_\AFnoarrow}(s) - V^{\pi^*_{\hat{\cal M}_\AFnoarrow}}_{\cal M_{\cal I}}(s) \Big) \\
    &\leq 2 \max_{\pi \in \Big\{ \pi^{*}_{\hat{\cal M}_\AFnoarrow}, \pi^{*}_{\cal M_{\cal I}}  \Big\} } \Big| V^{\pi}_{\cal M_{\cal I}}(s)  - V^{\pi}_{\hat{\cal M}_\AFnoarrow}(s) \Big|
\end{align*}
Taking a max over all states on both sides of the inequality and noticing that the set of all policies is a trivial super set of $\Big\{ \pi^{*}_{\hat{\cal M}_\AFnoarrow}, \pi^{*}_{\cal M_{\cal I}}  \Big\}$, we get the equation in Lemma 2 above. Moreover since, our definition of $\Pi_{\mathcal{I}}$ is a superset with the optimal policies included, we can further say the following: 
\begin{equation*}
    \Big|\Big| V^*_{\cal M_{\cal I}} - V^{\pi^*_{\hat{\cal M}_\AFnoarrow}}_{\cal M_{\cal I}} \Big|\Big|_{\infty} \leq  2 \max_{\pi \in \Pi_{\mathcal{I}}} ||V^{\pi}_{\cal M_{\cal I}} - V^{\pi}_{\hat{\cal M}_\AFnoarrow}||_{\infty}.
\end{equation*}
\end{proof}

\lemma{ For any SMDP $\hat{\cal M}_\AFnoarrow$ bounded by $[0, R^{\mathcal{O}}_{max}]$ with corresponding value function bounded by $V_{max}$ which is an approximate of the SMDP estimated from data experienced in the world for a set of intents ${\cal I}$,
\begin{equation}
    \Big|\Big|V^{\pi}_{\cal M_{\cal I}} - V^{\pi}_{\hat{\cal M}_\AFnoarrow}\Big|\Big|_{\infty} \leq \frac{1}{\Big( 1- \upgamma^{\mathcal{I}} \Big)} \max_{s, o} \Big| (\hat{R}_{I} (s,o) +  \langle \hat{\gamma}(s,o,;) \hat{P_{I}}(s,o,;), V^{\pi}_{\cal M_{\cal I}} \rangle) -  V^{\pi}_{\cal M_{\cal I}}  \Big|.
\end{equation}
\label{lemma3smdp-trajectories}
}

\begin{proof}
Given any policy over options $\pi$, define state-value function $V_0, V_1, \dots V_m$ such that $V_0 = V^{\pi}_{\cal M_{\cal I}}$,

From this point onward, we use $\AFnoarrow(o)$ and $\AFnoarrow(s)$ to denote affordable states and affordable options respectively. Recall that $\AFnoarrow \subseteq {\cal S} \times \mathcal{O}$.

$\forall s \in \AFnoarrow(o)$, 
\begin{equation*}
    V_m(s) = \sum_{o \in \AFnoarrow(s)} \pi(o|s) \Big( \hat{R}(s,o) +  \langle \hat{P_{I}}(s,o,;), V_{m-1} \rangle \Big)
\end{equation*}

Now, rewriting the above in new format:
\[
V_m(s) = \sum_{o} \pi(o|s) \Bigg[ \sum_{s'} \sum_{t=1}^\infty \sum_{\tau(s,t,s')} \hat{P_{I}}(\tau(s,t,s')|o)[ G(\tau(s,t,s')) + \gamma(\tau(s,t,s')) V_{m-1}(s')] \Bigg]
\]

Therefore:
\begin{align}
\begin{split}
    ||V_m - V_{m-1}||_{\infty} &= \max_s \left[  \sum_{o \in \AFnoarrow(s)} \pi(o|s) \sum_{s'} \sum_{t=1}^\infty \sum_{\tau(s,t,s')} \hat{P_{I}}(\tau(s,t,s')|o) \gamma(\tau(s,t,s')) (V_{m-1}(s') - V_{m-2}(s')) \right] \\
    &\leq \max_s \sum_{o \in \AFnoarrow(s)} \pi(o|s) \sum_{s'} \sum_{t=1}^\infty \sum_{\tau(s,t,s')} \hat{P_{I}}(\tau(s,t,s')|o) \gamma(\tau(s,t,s')) ||V_{m-1} - V_{m-2}||_\infty \\
    &= \max_s \sum_{o \in \AFnoarrow(s)} \pi(o|s) \sum_{s'} \gamma_o^I(s,s') ||V_{m-1} - V_{m-2}||_\infty
\end{split}
\end{align}

Since $\mathrm{E}[\sum_{s'} \gamma_o^I(s,s')]  \leq \max_{s,o} \sum_{s'} \gamma_o^I(s,s')$, therefore 
\begin{align*}
    ||V_m - V_{m-1}||_{\infty} \leq \underbrace{\max_{s,o} \sum_{s'} \gamma_o^I(s,s')}_{\upgamma^{\mathcal{I}}} || V_{m-1} - V_{m-2} ||_\infty
\end{align*}

Therefore, 
$$
||V_m - V_0||_\infty \sum_{k=0}^{m-1} ||V_{k+1} - V_k||_\infty \leq  ||V_1 - V_0||_\infty \sum_{k=1}^{m-1} (\upgamma^{\mathcal{I}})^{k-1}.$$

Taking the limit $m \rightarrow \infty $, $V_m \rightarrow V_{\hat{\cal M}_\AFnoarrow}^{\pi}$, we have:
\begin{equation*}
    ||V_{\hat{\cal M}_\AFnoarrow} - V_0||_{\infty} \leq \frac{1}{\Big( 1-\upgamma^{\mathcal{I}} \Big)} ||V_1 - V_0||_\infty
\end{equation*}
where notice that $V_0 = V^{\pi}_{\cal M_{\cal I}}$ and 
$$V_1 = \sum_{o \in \AFnoarrow(s)} \pi(o|s) \Big( \hat{R}_{I} + \langle \gamma(s,o,;) \hat{P_{I}}(s,o;), V_{M}^{\pi} \rangle \Big).$$

Therefore,  
\begin{align*}
    & \Big|\Big|V^{\pi}_{\cal M_{\cal I}} - V^{\pi}_{\hat{\cal M}_\AFnoarrow}\Big|\Big|_{\infty} \\
    & \leq \frac{1}{\Big( 1-\upgamma^{\mathcal{I}} \Big)} \max_{s} \Big| \sum_{o \in \AFnoarrow(s)} \pi(o|s) (\hat{R}_{I} (s,o) +  \langle \gamma(s,o,;) \hat{P_{I}}(s,o,;), V^{\pi}_{\cal M_{\cal I}} \rangle) -  V^{\pi}_{\cal M_{\cal I}}  \Big| \\
    & \leq \frac{1}{\Big( 1-\upgamma^{\mathcal{I}} \Big)} \max_{s,o} \Big| (\hat{R}_{I} (s,o) +  \langle \gamma(s,o,;) \hat{P_{I}}(s,o,;), V^{\pi}_{\cal M_{\cal I}} \rangle) -  V^{\pi}_{\cal M_{\cal I}}  \Big|.
\end{align*}
\end{proof}

Next, we turn to Lemma 4.

\lemma{ For any SMDP $\hat{\cal M}_\AFnoarrow$ with value function bounded by $V_{max}$ which is an approximate of the SMDP estimated from data experienced in the world for a set of intents ${\cal I}$, The following holds with probability at least $1-\delta$:
\begin{equation*}
\left\| V^{*}_{\cal M_{\cal I}} - V^{\pi^*_{\hat{\cal M}_\AFnoarrow}}_{\cal M_{\cal I}} \right\|_{\infty} \leq \frac{{2 R^{\mathcal{O}}_{max}}}{\Big( 1-\upgamma^{\mathcal{I}} \Big)\Big( 1-\upgamma^{\cal O} \Big)} \sqrt{\frac{1}{2n} \log \frac{2 |\AFnoarrow| |\Pi_{\mathcal{I}}|}{\delta}}.
\end{equation*}
\label{lemma4smdp-trajectories}}

\begin{proof}
Using Lemma~\ref{lemma2smdp-trajectories} (L2) and Lemma~\ref{lemma3smdp-trajectories}( L3), we have
\begin{equation*}
\begin{split}
    \left\| V^{\pi^*_{\cal M_{\cal I}}}_{\cal M_{\cal I}} - V^{\pi^*_{\hat{\cal M}_\AFnoarrow}}_{\cal M_{\cal I}} \right\|_{\infty} \leq  2 \max_{\pi \in \Pi_{\mathcal{I}}} \left\| V^{\pi}_{\cal M_{\cal I}} - V^{\pi}_{\hat{\cal M}_\AFnoarrow} \right\|_{\infty} \text{L2.}\\
    \leq \frac{2}{\Big( 1-\upgamma^{\mathcal{I}} \Big)} \max_{ \substack{ \pi \in \Pi_{\mathcal{I}} \\ 
    s \times o \in \AFI}}  \Big| (\hat{R}_{I} (s,o) +  \langle \gamma(s,o,;) \hat{P_{I}}(s,o,;), V^{\pi}_{\cal M_{\cal I}} \rangle) -  V^{\pi}_{\cal M_{\cal I}}  \Big| \text{L3.}
\end{split}
\end{equation*} 
Since $ (\hat{P_{I}}(s,o,;), V^{\pi}_{\cal M_{\cal I}} \rangle) -  V^{\pi}_{\cal M_{\cal I}})$ is the average of the IID samples the agent obtains by interacting with the environment, bounded in $[0, V_{max}]$ with mean $V^{\pi}_{\cal M_{\cal I}}$ (for any $s, o, \pi$ tuple i.e. state, option and policy over options tuple). Then according to Hoeffdings inequality,

\begin{equation*}
    \forall t \geq 0, \; P\Big( \Big|\sum_{o \in \AFnoarrow(s)} (\hat{R}_{I} (s,o) +  \langle \gamma(s,o,;) \hat{P_{I}}(s,o,;), V^{\pi}_{\cal M_{\cal I}} \rangle) - V^{\pi}_{\cal M_{\cal I}} \Big| > t \Big) \leq 2 \exp \left\{ \frac{-2 n t^{2}}{(V_{max})^{2}} \right\}
\end{equation*}
To obtain a uniform bound over all $s, o, \pi$ tuples, we equate the RHS to $\frac{\delta}{|\AFnoarrow(o)||\AFnoarrow(s)|\Pi_{\mathcal{I}}|}$ and the result follows as shown below.

\begin{equation*}
\begin{split}
    2 \exp \left\{ \frac{-2 n t^{2}}{(V_{max})^{2}}\right\} &= \frac{\delta}{|\AFnoarrow(o)||\AFnoarrow(s)||\Pi_{\mathcal{I}}|} \\
    \frac{-2 n t^{2}}{(V_{max})^{2}} &= \log \frac{\delta}{2|\AFnoarrow(o)||\AFnoarrow(s)||\Pi_{\mathcal{I}}|} \\
    \frac{2 n t^{2}}{(V_{max})^{2}} &= \log \frac{2|\AFnoarrow(o)||\AFnoarrow(s)||\Pi_{\mathcal{I}}|}{\delta} \\
    t^{2} &= V_{max} \frac{1}{2n} \log \frac{2 |\AFnoarrow(o)||\AFnoarrow(s)||\Pi_{\mathcal{I}}|}{\delta} \\
    t &= V_{max} \sqrt{\frac{1}{2n} \log \frac{2 |\AFnoarrow(o||\AFnoarrow(s)||\Pi_{\mathcal{I}}|}{\delta}}
\end{split}
\end{equation*}

We express the state-option pairs in affordances as the size of affordances. Formally, the size of affordances for a intent can be expressed as $|\AFnoarrow|$. Plugging this back, and using Remark~\ref{remark:generalupperboundonvstar}, we get the final result.
\end{proof}

\begin{lemma}
\label{lemma:jiangv2smdp-trajectories}
Given any policy over options $\pi$, we have 
\begin{equation}
    \Big| \Big| V^{\pi}_{\cal M} - V^{\pi}_{\cal M_{\cal I}}\Big| \Big|_\infty \leq \frac{1}{(1 - \upgamma^{\mathcal{I}})} \Big( 2\zeta^{\mathcal{I}}_R + \Big| \Big| V^{\pi}_{\cal M} \Big| \Big|_\infty  \max_{s,o} \sum_{t=1}^\infty \gamma^t  |\mathcal{S}| \zeta^{\mathcal{I}}_P \Big)
\end{equation}
\end{lemma}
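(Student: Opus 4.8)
The plan is to reuse, essentially verbatim, the \textit{Bounding Term 2} computation from the proof of Theorem~\ref{thm:trajectories_value_loss_analysis}, but with an arbitrary fixed policy over options $\pi$ in place of $\pi^*_{\mathcal{I}}$ and with $V^{\pi}_{\mathcal{M}}$ serving as the reference value function. First I would write both policy-evaluation equations in trajectory form, namely $V^{\pi}_{\mathcal{M}}(s) = \sum_{o}\pi(o|s)\sum_{s'}\sum_{t\geq 1}\sum_{\tau(s,t,s')} P(\tau(s,t,s')|o)\,[\,G(\tau(s,t,s')) + \gamma(\tau(s,t,s'))\,V^{\pi}_{\mathcal{M}}(s')\,]$ and the analogous one for $V^{\pi}_{\mathcal{M}_{\mathcal{I}}}$ with $P_I$ in place of $P$, and subtract them pointwise.

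Second, inside the discounted sum I would add and subtract the cross term $P_I(\tau(s,t,s')|o)\,V^{\pi}_{\mathcal{M}}(s')$. This splits the pointwise difference into three pieces: (i) the reward mismatch, which collapses to $R(s,o)-R_I(s,o)$ since $\sum_{s',t,\tau}(P-P_I)\,G(\tau) = r(s,o) - E_{\tau\sim P_I}[G(\tau|s,o)]$; (ii) a transition-mismatch term $\sum_{s',t,\tau}\gamma(\tau)\,(P-P_I)\,V^{\pi}_{\mathcal{M}}(s')$; and (iii) the recursive term $\sum_{s',t,\tau}\gamma(\tau)\,P_I\,(V^{\pi}_{\mathcal{M}}(s') - V^{\pi}_{\mathcal{M}_{\mathcal{I}}}(s'))$.

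Third, I would take $\|\cdot\|_\infty$ over states and apply the triangle inequality. Piece (i) is bounded by $\zeta^{\mathcal{I}}_R$ directly from Lemma~\ref{lemma1} (the coefficient $2$ in the statement is a conservative constant absorbing this reward contribution). Piece (ii) is bounded by $\|V^{\pi}_{\mathcal{M}}\|_\infty\,\max_{s,o}\sum_{s',t,\tau}\gamma(\tau)\,|P-P_I|$; here the key maneuver is to recognize that $\gamma(\tau)=\gamma^t$ on length-$t$ trajectories, factor it out, and apply the per-triple bound $\sum_\tau|P_o(\tau(s,t,s'))-P_I(\tau(s,t,s')|s,o)|\leq\zeta^{\mathcal{I}}_P$ from Lemma~\ref{lemma1}, summed over the $|\mathcal{S}|$ possible endpoints, to obtain $\max_{s,o}\sum_{t\geq1}\gamma^t\,|\mathcal{S}|\,\zeta^{\mathcal{I}}_P$. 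Piece (iii) factors as $\big(\max_{s,o}\sum_{s'}\gamma_o^I(s,s')\big)\,\|V^{\pi}_{\mathcal{M}}-V^{\pi}_{\mathcal{M}_{\mathcal{I}}}\|_\infty = \upgamma^{\mathcal{I}}\,\|V^{\pi}_{\mathcal{M}}-V^{\pi}_{\mathcal{M}_{\mathcal{I}}}\|_\infty$, because $\sum_{s',t,\tau}\gamma(\tau)P_I(\tau)=\sum_{s'}\gamma_o^I(s,s')$ by definition of the expected intent discount. Moving piece (iii) to the left-hand side and dividing by $(1-\upgamma^{\mathcal{I}})$ then yields the claimed inequality.

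The main obstacle is the bookkeeping in step (ii): Lemma~\ref{lemma1} controls the trajectory-probability discrepancy only for each fixed triple $(s,t,s')$, so I must be careful to recover a quantity that is summable in $t$ by pulling out $\gamma^t$ and deliberately incurring the $|\mathcal{S}|$ factor from summing over endpoints, rather than naively applying $\zeta^{\mathcal{I}}_P$ to the full multi-index sum. The other delicate point is recognizing, in piece (iii), that the intent-weighted discounted sum over next states collapses exactly to $\sum_{s'}\gamma_o^I(s,s')$, so that the recursion contracts with coefficient $\upgamma^{\mathcal{I}}<1$; both of these require invoking the trajectory-model definitions precisely.
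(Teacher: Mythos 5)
Your proposal is correct and follows essentially the same route as the paper's proof: your pointwise add-and-subtract of the cross term $P_I(\tau(s,t,s')|o)\,V^{\pi}_{\cal M}(s')$ is exactly the paper's Bellman-operator decomposition $\|({\cal T}^\pi_{\cal M}-{\cal T}^\pi_{\cal M_{\cal I}})V^{\pi}_{\cal M}\|_\infty + \|{\cal T}^\pi_{\cal M_{\cal I}}(V^{\pi}_{\cal M}-V^{\pi}_{\cal M_{\cal I}})\|_\infty$, with the same per-triple application of Lemma~\ref{lemma1}, the same deliberate $|\mathcal{S}|$ endpoint sum after factoring out $\gamma^t$, and the same collapse of the intent-weighted discounted sum to $\upgamma^{\mathcal{I}}$ before rearranging. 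Your bookkeeping is in fact marginally tighter --- you obtain coefficient $1$ on $\zeta^{\mathcal{I}}_R$, whereas the paper's $2\zeta^{\mathcal{I}}_R$ arises from carrying a reward-difference term through the same-model contraction step where it actually cancels --- and, as you correctly note, the stated bound then follows a fortiori.
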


\begin{proof}
We will use the following Bellman operator:
\begin{align*}
    {\cal T}^\pi_{{\cal M}}f &= \sum_{o} \pi(o|s) \Big[  \sum_{s'} \sum_{t=1}^\infty \sum_{\tau(s,t,s')} P(\tau(s,t,s')|o) [  G(\tau(s,t,s') + \gamma(\tau(s,t,s')) f(s')]  \Big] 
\end{align*}

\begin{align*}
&({\cal T}^\pi_{{\cal M}_1} - {\cal T}^\pi_{{\cal M}_2}) f(s) \\
& \quad = \sum_o \pi(o |s) \Big[ \Big(R_1(s,o) - R_2(s,o) \Big) +  \sum_{s'} \sum_{t=1}^\infty \sum_{\tau(s,t,s')} \gamma^t f(s') \Big( P_1(\tau(s,t,s')|o) - P_2(\tau(s,t,s')|o) \Big) \Big] \\
& \quad = \sum_o \pi(o |s)  \Big(R_1(s,o) - R_2(s,o) \Big) +  \sum_o \pi(o |s) \sum_{s'} \sum_{t=1}^\infty \gamma^t \sum_{\tau(s,t,s')}  f(s') \Big( P_1(\tau(s,t,s')|o) - P_2(\tau(s,t,s')|o) \Big) \\
& \quad \leq \zeta^{\mathcal{I}}_R + \Big| \Big| f\Big| \Big|_\infty  \max_{s,o} \sum_{t=1}^\infty \gamma^t \sum_{s'} \sum_{\tau(s,t,s')}  \Big( P_1(\tau(s,t,s')|o) - P_2(\tau(s,t,s')|o)\Big) \\
& \quad \leq \zeta^{\mathcal{I}}_R + \Big| \Big| f\Big| \Big|_\infty  \max_{s,o} \sum_{t=1}^\infty \gamma^t \sum_{s'} \Big[ \sum_{\tau(s,t,s')} \Big(  P_1(\tau(s,t,s')|o) -  P_2(\tau(s,t,s')|o)\Big) \Big] \\
& \quad \leq \zeta^{\mathcal{I}}_R + \Big| \Big| f\Big| \Big|_\infty  \sum_{t=1}^\infty \gamma^t  |\mathcal{S}| \zeta^{\mathcal{I}}_P
\end{align*}

and 
\begin{align*}
&{\cal T}^\pi_{\cal M} f_1(s) - {\cal T}^\pi_{\cal M} f_2(s) = \\
& \quad = \sum_o \pi(o |s)  \Big(R_1(s,o) - R_2(s,o) \Big) +  \sum_o \pi(o |s) \sum_{s'} \sum_{t=1}^\infty \sum_{\tau(s,t,s')} \gamma^t P_{\cal M}(\tau(s,t,s') \Big( f_1(s') - f_2(s') \Big)\\
& \quad \leq \zeta^{\mathcal{I}}_R  + \Big| \Big| f_1 - f_2 \Big| \Big|_\infty  \max_{s,o} \sum_{s'} \gamma_o^{\cal M}(s,s')
\end{align*}

Now, the following holds for the initial value error we are interested to bound:
\begin{align*}
& ||V^{\pi}_{\cal M} - V^{\pi}_{\cal M_{\cal I}}||_{\infty} \leq  ||V^{\pi}_{\cal M} - {\cal T}^\pi_{\cal M_{\cal I}} V^{\pi}_{\cal M}||_{\infty} + ||{\cal T}^\pi_{\cal M_{\cal I}} V^{\pi}_{\cal M} - V^{\pi}_{\cal M_{\cal I}}||_{\infty} \\
& \quad \quad = ||{\cal T}^\pi_{\cal M} V^{\pi}_{\cal M} - {\cal T}^\pi_{\cal M_{\cal I}} V^{\pi}_{\cal M}||_{\infty} + ||{\cal T}^\pi_{\cal M_{\cal I}} V^{\pi}_{\cal M} - {\cal T}^\pi_{\cal M_{\cal I}} V^{\pi}_{\cal M_{\cal I}}||_{\infty} \\
& \quad \quad = ||({\cal T}^\pi_{\cal M} - {\cal T}^\pi_{\cal M_{\cal I}}) V^{\pi}_{\cal M}||_{\infty} + ||{\cal T}^\pi_{\cal M_{\cal I}} (V^{\pi}_{\cal M}  - V^{\pi}_{\cal M_{\cal I}})||_{\infty} \\
& \quad \quad \leq
\zeta^{\mathcal{I}}_R + \Big| \Big| V^{\pi}_{\cal M} \Big| \Big|_\infty  \max_{s,o} \sum_{t=1}^\infty \gamma^t  |\mathcal{S}| \zeta^{\mathcal{I}}_P + \zeta^{\mathcal{I}}_R + \max_{s,o} \sum_{s'} \gamma_o^{I}(s,s')||V^{\pi}_{\cal M} - V^{\pi}_{\cal M_{\cal I}} ||_{\infty}
\end{align*}

Unfolding the above to infinity, we obtain in the limit the following:
$$ ||V^{\pi}_{\cal M} - V^{\pi}_{\cal M_{\cal I}}||_{\infty} \leq \frac{1}{(1 - \max_{s,o} \sum_{s'} \gamma_o^{I}(s,s'))} \Big( 2\zeta^{\mathcal{I}}_R + \Big| \Big| V^{\pi}_{\cal M} \Big| \Big|_\infty  \max_{s,o} \sum_{t=1}^\infty \gamma^t  |\mathcal{S}| \zeta^{\mathcal{I}}_P \Big) $$

Therefore,
$$ ||V^{\pi}_{\cal M} - V^{\pi}_{\cal M_{\cal I}}||_{\infty} \leq \frac{1}{(1 - \upgamma^{\mathcal{I}})} \Big( 2\zeta^{\mathcal{I}}_R + \Big| \Big| V^{\pi}_{\cal M} \Big| \Big|_\infty  \max_{s,o} \sum_{t=1}^\infty \gamma^t  |\mathcal{S}| \zeta^{\mathcal{I}}_P \Big) $$ 

\end{proof}

\textbf{Plugging Lemmas Back.}
Now the following holds for the original LHS of the planning loss bound we are after.
\begin{align*}
\Big| \Big|  V^*_{\cal M} - V^{\pi^*_{\hat{\cal M}_\AFnoarrow}}_{\cal M} \Big| \Big|_{\infty} &\leq \Big| \Big|  V^*_{\cal M} - V^{\pi^*_{\cal M_{\cal I}}}_{\cal M} \Big| \Big|_{\infty} + \Big| \Big|  V^{\pi^*_{\cal M_{\cal I}}}_{\cal M} - V^{*}_{\cal M_{\cal I}} \Big| \Big|_{\infty} + \\
&\Big| \Big| V^{*}_{\cal M_{\cal I}} - V^{\pi^*_{\hat{\cal M}_\AFnoarrow}}_{\cal M_{\cal I}} \Big| \Big|_{\infty} + \Big| \Big| V^{\pi^*_{\hat{\cal M}_\AFnoarrow}}_{\cal M_{\cal I}} - V^{\pi^*_{\hat{\cal M}_\AFnoarrow}}_{\cal M} \Big| \Big|_{\infty}
\end{align*}

Theorem~\ref{thm:trajectories_value_loss_analysis} applies to the first term, Lemma~\ref{lemma:jiangv2smdp-trajectories} to the second and forth term, and Lemma~\ref{lemma4smdp-trajectories} for the third term. Finally,

\begin{align*}
\Big| \Big|  V^*_{\cal M} - V^{\pi^*_{\hat{\cal M}_\AFnoarrow}}_{\cal M} \Big| \Big|_{\infty} & \leq \frac{1}{\Big( 1-\upgamma^{\mathcal{I}} \Big)} \zeta^{\mathcal{I}}_R + \frac{2 R^{\mathcal{O}}_{max}}{\Big( 1-\upgamma^{\mathcal{I}} \Big) \Big( 1-\upgamma^{\cal O} \Big)} \max_{s,o} \sum_{t=1}^\infty \gamma^t  |\mathcal{S}| \zeta^{\mathcal{I}}_P + \\
& \frac{2}{(1 - \upgamma^{\mathcal{I}})} \Big( 2\zeta^{\mathcal{I}}_R + \frac{R^{\mathcal{O}}_{max}}{\left( 1- \upgamma^{\cal O} \right)} \max_{s,o} \sum_{t=1}^\infty \gamma^t  |\mathcal{S}| \zeta^{\mathcal{I}}_P \Big) + \\
&\frac{{2 R^{\mathcal{O}}_{max}}}{\Big( 1-\upgamma^{\mathcal{I}}\Big) \Big( 1-\upgamma^{\cal O} \Big)} \sqrt{\frac{1}{2n} \log \frac{2 |\AFnoarrow| |\Pi_{\mathcal{I}}|}{\delta}}
\end{align*}

Rearranging terms, we get:
\begin{align*}
\Big| \Big|  V^*_{\cal M} - V^{\pi^*_{\hat{\cal M}_\AFnoarrow}}_{\cal M} \Big| \Big|_{\infty} & \leq \frac{5 \zeta^{\mathcal{I}}_R}{\Big( 1-\upgamma^{\mathcal{I}} \Big)}  + \frac{2 R^{\mathcal{O}}_{max}}{\Big( 1-\upgamma^{\mathcal{I}} \Big) \Big( 1-\upgamma^{\cal O} \Big)} \Big( 2 \max_{s,o} \sum_{t=1}^\infty \gamma^t  |\mathcal{S}| \zeta^{\mathcal{I}}_P + \sqrt{\frac{1}{2n} \log \frac{2 |\AFnoarrow| |\Pi_{\mathcal{I}}|}{\delta}} \Big)
\end{align*}

\end{proof}

\subsubsection{\textbf{Corollary 3. SMDP - Multi-Time-Model of Intent : Planning Loss}}
\label{sec:app-smdpplanningloss}

Analogous to the value loss analysis, we obtain the special case of planning loss bound for multi-time-model of an option intent as follows:

\begin{corollary} [Multi-Time-Model of Intent- Planning Loss.]
\label{corollary-smdp-planningloss}
Let $\mathcal{M}$ be any SMDP, $\TEISet$ a set of temporally extended multi-time-model of intents, $\mathcal{O}$ a set of options, and $\hat{M}_\AF$ the corresponding approximate SMDP over affordable state-option pairs $\AF$. Then, the certainty equivalence planning loss with $\hat{M}_\AF$ is
\begin{align*}
    \Big|\Big| V^*_\mathcal{M} - V^{\pi^{*}_{\hat{\mathcal{M}}_\AF}}_\mathcal{M} \Big|\Big|_{\infty} \leq \frac{2 R^{\mathcal{O}}_{max}}{(1-\upgamma^{\cal O})^2} \Bigg( 2 \upgamma \zeta^{\mathcal{I}^{\rightarrow}} +  \sqrt{\frac{1}{2n} \log \frac{2 |\AF| |\Pi_{\TEISet}|}{\delta}}   \Bigg)
\end{align*} with probability at least $1-\delta$, where $\zeta^{\mathcal{I}^{\rightarrow}}$ is the degree of satisfaction of the intents (Eq.~\ref{ass:zetaconstant}), $R^{\mathcal{O}}_{max}=\max_{s,o} r(s,o)$ is the maximum option reward, and $\upgamma^{\cal O}= \max_{s,o} \sum_{s'} \gamma_o(s,s')$ is the maximum expected discount factor for both intent and option model.
\end{corollary}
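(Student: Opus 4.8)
The plan is to bound the certainty-equivalence planning loss by inserting three intermediate value functions and splitting $\big\|V^*_{\cal M} - V^{\pi^*_{\hat{\cal M}_\AF}}_{\cal M}\big\|_\infty$ into four pieces via the triangle inequality:
\begin{align*}
\Big\| V^*_{\cal M} - V^{\pi^*_{\hat{\cal M}_\AF}}_{\cal M} \Big\|_\infty
&\leq \Big\| V^*_{\cal M} - V^{*}_{{\cal M}_{\TEISet}} \Big\|_\infty
+ \Big\| V^{\pi^*_{{\cal M}_{\TEISet}}}_{\cal M} - V^{*}_{{\cal M}_{\TEISet}} \Big\|_\infty \\
&\quad + \Big\| V^{*}_{{\cal M}_{\TEISet}} - V^{\pi^*_{\hat{\cal M}_\AF}}_{{\cal M}_{\TEISet}} \Big\|_\infty
+ \Big\| V^{\pi^*_{\hat{\cal M}_\AF}}_{{\cal M}_{\TEISet}} - V^{\pi^*_{\hat{\cal M}_\AF}}_{\cal M} \Big\|_\infty,
\end{align*}
where ${\cal M}_{\TEISet}$ is the intent-induced SMDP and $\hat{\cal M}_\AF$ its data-estimated, affordance-restricted approximation. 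The first term is exactly the value loss controlled by Theorem~\ref{thm:trajectories_value_loss_analysis}; the second and fourth terms each compare the value of one fixed policy between the true SMDP ${\cal M}$ and the intent-induced SMDP ${\cal M}_{\TEISet}$; and the third term is the intra-${\cal M}_{\TEISet}$ suboptimality of the policy planned in the empirical model.

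For the second and fourth terms I would invoke Lemma~\ref{lemma:jiangv2smdp-trajectories}, which, through a telescoping comparison of the option-level Bellman operators ${\cal T}^\pi_{\cal M}$ and ${\cal T}^\pi_{{\cal M}_{\TEISet}}$ followed by a contraction of modulus $\upgamma^{\TEISet}$, bounds each by $\tfrac{1}{1-\upgamma^{\TEISet}}\big(2\zeta^{\mathcal{I}^{\rightarrow}}_R + \|V^\pi_{\cal M}\|_\infty \max_{s,o}\sum_{t=1}^\infty\gamma^t|\mathcal{S}|\zeta^{\mathcal{I}^{\rightarrow}}_P\big)$, using Lemma~\ref{lemma1} to replace the raw trajectory-distribution and reward discrepancies by $\zeta^{\mathcal{I}^{\rightarrow}}_P$ and $\zeta^{\mathcal{I}^{\rightarrow}}_R$. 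For the third term I would chain the standard doubling suboptimality bound of Lemma~\ref{lemma2smdp-trajectories}, namely $\|V^*_{{\cal M}_{\TEISet}} - V^{\pi^*_{\hat{\cal M}_\AF}}_{{\cal M}_{\TEISet}}\|_\infty \leq 2\max_{\pi\in\Pi_{\TEISet}}\|V^\pi_{{\cal M}_{\TEISet}}-V^\pi_{\hat{\cal M}_\AF}\|_\infty$, with the Bellman-residual bound of Lemma~\ref{lemma3smdp-trajectories} and a Hoeffding concentration argument over the affordable pairs, yielding Lemma~\ref{lemma4smdp-trajectories}'s $\tfrac{2R^{\mathcal{O}}_{max}}{(1-\upgamma^{\TEISet})(1-\upgamma^{\cal O})}\sqrt{\tfrac{1}{2n}\log\tfrac{2|\AF||\Pi_{\TEISet}|}{\delta}}$. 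Throughout, I would invoke Remark~\ref{remark:generalupperboundonvstar} to replace $\|V^\pi_{\cal M}\|_\infty$ and $V_{max}$ by $R^{\mathcal{O}}_{max}/(1-\upgamma^{\cal O})$.

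Finally I would add the four bounds and collect coefficients. The reward-approximation term $\zeta^{\mathcal{I}^{\rightarrow}}_R/(1-\upgamma^{\TEISet})$ appears once from Theorem~\ref{thm:trajectories_value_loss_analysis} and with coefficient two from each of the second and fourth terms (via Lemma~\ref{lemma:jiangv2smdp-trajectories}), summing to $5\zeta^{\mathcal{I}^{\rightarrow}}_R/(1-\upgamma^{\TEISet})$; the transition-approximation $\zeta^{\mathcal{I}^{\rightarrow}}_P$ contributions combine into the $2\max_{s,o}\sum_{t=1}^\infty\gamma^t|\mathcal{S}|\zeta^{\mathcal{I}^{\rightarrow}}_P$ factor, and the estimation term contributes the Hoeffding square-root, both prefixed by $\tfrac{2R^{\mathcal{O}}_{max}}{(1-\upgamma^{\TEISet})(1-\upgamma^{\cal O})}$, which reproduces the stated bound and holds with probability at least $1-\delta$.

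I expect the main obstacle to be the third term, i.e.\ establishing Lemma~\ref{lemma4smdp-trajectories} rigorously. In particular: (i) the doubling argument of Lemma~\ref{lemma2smdp-trajectories} must be handled carefully because $\pi^*_{\hat{\cal M}_\AF}$ is itself random (data-dependent), which forces the union bound to range over the entire policy class $\Pi_{\TEISet}$ rather than a single fixed policy; and (ii) the empirical Bellman backup $\langle \hat\gamma\,\hat P_I(s,o,\cdot),\,V^\pi_{{\cal M}_{\TEISet}}\rangle$ must be cast as an average of $n$ i.i.d.\ bounded samples so that Hoeffding's inequality applies uniformly, with the affordance restriction shrinking the union-bound cardinality from $|\mathcal{S}||\mathcal{O}|$ to $|\AF|$. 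The secondary difficulty is the bookkeeping of which discount, $\upgamma^{\TEISet}$ versus $\upgamma^{\cal O}$, governs each contraction, since conflating them would corrupt the final constants.
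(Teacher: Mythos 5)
You have, in effect, reconstructed the paper's proof of the \emph{general} trajectory-based planning loss bound (Theorem~\ref{theorem:trajectories_planningvaluelossbound}): the four-term decomposition, Lemma~\ref{lemma:jiangv2smdp-trajectories} for the two model-comparison terms, and the chain of Lemmas~\ref{lemma2smdp-trajectories}--\ref{lemma4smdp-trajectories} with Hoeffding plus a union bound over $\AF \times \Pi_{\TEISet}$ for the empirical term; that matches the paper's proof of that theorem, and the paper's proof of this corollary does indeed start from it. But your closing claim that summing the four pieces ``reproduces the stated bound'' is incorrect: your bookkeeping yields
\[
\frac{5\,\zeta^{\mathcal{I}^{\rightarrow}}_R}{1-\upgamma^{\TEISet}}
+\frac{2 R^{\mathcal{O}}_{max}}{\left(1-\upgamma^{\TEISet}\right)\left(1-\upgamma^{\cal O}\right)}
\left(2\max_{s,o}\sum_{t=1}^{\infty}\gamma^{t}\,|\mathcal{S}|\,\zeta^{\mathcal{I}^{\rightarrow}}_P
+\sqrt{\tfrac{1}{2n}\log\tfrac{2|\AF||\Pi_{\TEISet}|}{\delta}}\right),
\]
which is Theorem~\ref{theorem:trajectories_planningvaluelossbound}, not the corollary. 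The corollary's bound has no reward-error term at all, has the squared denominator $(1-\upgamma^{\cal O})^{2}$ rather than the mixed product, and carries $2\upgamma\,\zeta^{\mathcal{I}^{\rightarrow}}$ in place of $2\max_{s,o}\sum_{t}\gamma^{t}|\mathcal{S}|\,\zeta^{\mathcal{I}^{\rightarrow}}_P$. These differences are not cosmetic, and the general bound does not imply the corollary term-by-term: in particular $|\mathcal{S}|\sum_{t}\gamma^{t}\zeta_P$ does not reduce to $\upgamma\zeta$ without additional structure.

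What is missing is precisely the specialization that defines the multi-time-model setting, which is the entire content of the paper's (short) proof of this corollary. First, rewards are assumed known, so $\zeta^{\mathcal{I}^{\rightarrow}}_R=0$ and the $5\zeta_R$ term vanishes. Second, the intent and option maximum expected discounts are identified, $\upgamma^{\TEISet}=\upgamma^{\cal O}$, producing the squared denominator. Third---and this is the step requiring an actual argument rather than substitution---for multi-time-model intents the discrepancy is measured in total variation between \emph{sub-probability} distributions over the terminal state (Assumption~\ref{ass:zetaconstant}, Eq.~\ref{eq:mtmzetaconstant}), so the discounted trajectory discrepancy is bounded directly as
\[
\max_{s,o}\sum_{s'}\sum_{t=1}^{\infty}\sum_{\tau(s,t,s')}\gamma(\tau(s,t,s'))\,
\Big|P(\tau(s,t,s')|o)-P_I(\tau(s,t,s')|o)\Big|\;\leq\;\upgamma\,\zeta^{\mathcal{I}^{\rightarrow}},
\]
since the discounting is absorbed into the sub-probability models themselves; this replaces the crude $|\mathcal{S}|\sum_{t}\gamma^{t}\zeta_P$ estimate wherever it enters through your Lemma~\ref{lemma:jiangv2smdp-trajectories} and value-loss contributions. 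With these three substitutions the general bound collapses to the corollary's statement; without them, your derivation proves a different (and in this regime looser) inequality than the one you were asked to establish.
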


\begin{proof}
We now show that the trajectories-based planning loss bound can be reduced to the special case where intents were defined via sub-probability distributions incorporating both time and final state.

First, we consider the trajectories-based planning loss bound:
\begin{align*}
    \Big| \Big|  V^*_{\cal M} - V^{\pi^*_{\hat{\cal M}_\AFnoarrow}}_{\cal M} \Big| \Big|_{\infty} & \leq \frac{5 \zeta^{\mathcal{I}}_R}{\Big( 1-\upgamma^{\mathcal{I}} \Big)} + \frac{{2 R^{\mathcal{O}}_{max}}}{\Big( 1-\upgamma^{\mathcal{I}} \Big) \Big( 1-\upgamma^{\cal O} \Big)} \Big( 2 \max_{s,o} \sum_{t=1}^\infty \gamma^t  |\mathcal{S}| \zeta^{\mathcal{I}}_P + \sqrt{\frac{1}{2n} \log \frac{2 |\AFnoarrow| |\Pi_{\mathcal{I}}|}{\delta}} \Big)
\end{align*}

We plug our assumption that rewards are known and given which results in the constant $ \zeta^{\mathcal{I}}_R = 0 $, option and intent discount factors are assumed to be the same i.e. $\upgamma^{\cal O} = \upgamma^{\mathcal{I}} $, and the second term can be simplified further as follows:

\begin{align}
\label{eq:corollarysmdpplanningloss}
\Big| \Big|  V^*_{\cal M} - V^{\pi^*_{\hat{\cal M}_\AFnoarrow}}_{\cal M} \Big| \Big|_{\infty} & \leq \frac{{2 R^{\mathcal{O}}_{max}}}{\Big( 1-\upgamma^{\cal O} \Big)^2} \times \Big( 2 \underbrace{\max_{s,o}\sum_{\tau(s,t,s')} \gamma(\tau(s,t,s')) \Big|  P(\tau(s,t,s')|o) - P_I(\tau(s,t,s')|o) \Big|}_{ \leq \upgamma \zeta^{\mathcal{I}}} + \\ &\sqrt{\frac{1}{2n} \log \frac{2 |\AFnoarrow| |\Pi_{\mathcal{I}}|}{\delta}} \Big) \nonumber\\
&\leq \frac{{2 R^{\mathcal{O}}_{max}}}{\Big( 1-\upgamma^{\cal O} \Big)^2} \times \Big( 2 \upgamma \zeta^{\mathcal{I}} + \sqrt{\frac{1}{2n} \log \frac{2 |\AFnoarrow| |\Pi_{\mathcal{I}}|}{\delta}} \Big)
\end{align}
\end{proof}

\subsection{Intent expression on end-state}

Consider the definition of $Q^*(s,o)$ from Sec.~\ref{sec:affordances} and note that it can be re-written in our notation as:
\[
Q^*(s,o) = \sum_{s'} (r(s,o,s') + \gamma_o(s,s') \max_{o'} Q^*(s',o')) 
\]
Note that $\gamma_o(s,s') \leq \gamma$. With this notation, it is clear that the previous results from Sec.~\ref{appsec:valueloss} and Sec.~\ref{appsec:planloss} on value loss and planning loss from~\cite{khetarpal2020i} apply readily. In particular, if options only take a single step, we recover exactly their bounds, as the reward difference upper bound $\zeta^{\mathcal{I}}_R$ will be 0 and the above inequality becomes equality i.e. $\gamma_o(s,s') = \gamma$.

\section{Details of Experiments}
\label{sec:appendix:experimental_deets}

\subsubsection{Implementation Details}
\label{sec:appendix:experimental_deets:technical}
We use the environment implementation from OpenAI Gym\footnote{\href{https://github.com/openai/gym/blob/master/gym/envs/toy_text/taxi.py}{https://github.com/openai/gym/blob/master/gym/envs/toy\_text/taxi.py}}. We build upon open source code released by \cite{khetarpal2020i} significantly scaling it up using \href{https://github.com/deepmind/launchpad}{Launchpad} \citep{yang2021launchpad}. We implemented three nodes:
\begin{enumerate*}
    \item Data collection (Rollout): Runs options, $\pi_o(a|s)$, in the environment to collect transition data. 
    \item Model (and affordance) learning (Trainer): Uses the data from the Rollout node to train the option models and affordance models where relevant.
    \item Planning and evaluation (Evaluation): Uses the trained options models to perform value iteration and obtain a policy over options, $\pi_{\mathcal{O}}(o_t|s_t)$. The policy over options, $\pi_{\mathcal{O}}(o_t|s_t)$, and options, $\pi_o(a|s)$, are then evaluated over 1000 episodes to record the proportion that successfully dropped the passenger.
\end{enumerate*}

We used a shared internal cluster and each run used $\approx3$ cpus for $\approx 48$ hours. We used linear networks for all models. We initialize the affordance classifier to output 1 by shifting the input to the final sigmoid by 2, i.e. $A_\theta(s,o,s',I) = \textit{sigmoid}(f_\theta(s,o,s',I)+2)$, where $f_\theta$ is a linear model.

\begin{figure*}[t]
    \begin{center}
    \includegraphics[width=0.8\textwidth]{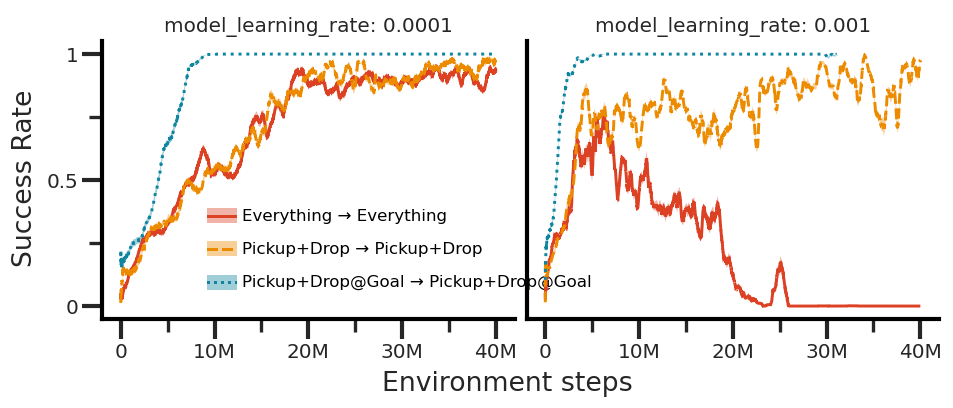}
    \end{center}
    \caption{\label{fig:heuristic_affordances_learning_rate}\textbf{
    A higher learning rate can be used to learn the model when using affordances}. Right shows divergence when using an unrestricted affordance set (Everything) for a higher learning rate compared to using any affordances.}
\end{figure*}

\subsubsection{Hyperparameter Settings}
\label{sec:appendix:experimental_deets:hyperparameters}
Given the simplicity and purpose of our experiments we only did a hyperparameter sweep over the learning rate (0.001, 0.0001).  We chose the maximum option length to be a 100 to allow options to terminate naturally. We set the hidden size of the models to be 0 (i.e. linear models). Each experiment was repeated for 4 independent seeds. We use the color-blind friendly palette from \citet{PNW_COLORBLIND} for our figures.

\section{Sample Complexity Analysis - Multi-Time-Model of Intent}
\label{sec:samplecomplexity}
Classical methods for planning in RL assume access to the complete knowledge of the MDP. However, in large domains, this is an infeasible assumption. A common approach is to consider sample-based models in which the transitions are estimated by sampling the model, with the number of calls to this sampler referred to as the sample complexity. In practise, a model $\hat{P}$ is estimated to approximate the transition model which is then used for planning (See Sec~\ref{sec:planninglossboundproof}).

We then ask the question of how difficult is to build an approximate model for everything in an environment. It is intuitive to see that modelling one-time step dynamics would require samples in the order of magnitude of the size of the state-action space (See Table~\ref{tab:samplecomplexity-comparison}). To mitigate this, we propose constructing temporally abstract partial models. Specifically, we examine the sample complexity of obtaining an $\varepsilon$ estimation of the optimal action-value function given only access to a generative model~\citep{kearns1999finite, kakade2003sample, azar2012sample}.

Consider a SMDP $\mathcal{M}$ where a deterministic policy over options is a map $\pi: \mathcal{S} \rightarrow \mathcal{O}$ that maps a state into an option. The value function of a policy $\pi$ is a vector $V^{\pi} \in \mathbb{R}^{|\mathcal{S}|}$, defined as follows, $\forall s \in \mathcal{S }$:
\begin{equation*}
   \textstyle  V^{\pi}(s) :=  \textstyle \sum_{o \in \mathcal{O}} \pi(o|s)\left[ r(s,o) + \sum_{s'} p(s' | s,o) V^{\pi}(s') \right],
\end{equation*}
where $p(s' | s,o)  = \sum_{k=1}^{\infty} p(s',k) \gamma^{k}$.

Analogously, the option value function $Q^{\pi} \in \mathbb{R}^{|\mathcal{S} \times \mathcal{O}|}$, for a policy $\pi$ is defined as follows, $\forall s \in \mathcal{S} \times \mathcal{O}$
\begin{equation*}
      Q^{\pi}(s, o) :=  r(s,o) + (P_o \cdot V^{\pi})(s, o),
\end{equation*}
where $$ P_o =\sum_{s'} p(s' | s,o), \;\; 
V^{\pi}(s')=\sum_{o' \in \mathcal{O'}} \pi(o'|s') Q^{\pi}(s', o')$$

As described earlier, we assume access to a generative model, which can provide us with samples at the SMDP level $\{ s', \tau\} \sim P(\cdot|s, o)$. Similar to previously described setting, we consider a set of {\em temporally extended intents} $\TEISet$, with the assumption that each option $o$ has an intent associated with it $I_o$, resulting in an induced SMDP $\mathcal{M}_\mathcal{I}$, with corresponding option models denoted by $P^{I}_o$. Let $\hat{\cal M}_\AF$ be the approximate SMDP over affordable state-option pairs denoted by $\AF$, with $\hat{P}^{I}_o$ as the corresponding options model.
\begin{table*}[th]
\centering
\label{tab:param-OC}
\begin{tabular}{l|l|l|}
&\multicolumn{2}{c|}{\textbf{Sample Complexity}} \\ \hline
\textbf{Actions} & \textbf{Without Affordances} & \textbf{Affordance-aware} \\ \hline
\textbf{Primitive}   & $\mathcal{O} \Big( \frac{|\mathcal{S} |\mathcal{A}|}{(1-\gamma)^4 \varepsilon^2} \Big) $   & $\mathcal{O} \Big( \frac{ |\AFI|}{(1-\gamma)^{4} {\varepsilon}^2} \Big)$      \\
\textbf{Temporally Extended}   & $\mathcal{O} \Big( \frac{|\mathcal{S}| |\mathcal{O}|}{(1-\upgamma)^4 \varepsilon^2} \Big) $   &  $\mathcal{O} \Big( \frac{ |\AFnoarrow|}{(1-\upgamma)^{4} {\varepsilon}^2} \Big)$ \\
\end{tabular}
\caption{\textbf{Comparison of Sample Complexity} - provides evidence on the role of temporal abstraction and affordances in obtaining an $\varepsilon$ estimation of the optimal value function. Incorporating affordances results in potential improvements in sample complexity in both primitive and temporally extended actions, although at the cost of approximation error induced via intents.  Here $\upgamma$ is the maximum expected discount factor for both intent and option model.}
\label{tab:samplecomplexity-comparison}
\end{table*}

We then define $\hat{P}^{I}_o$, our empirical model for each option $o \in \mathcal{O}$ be defined as follows. $\forall o \in \mathcal{O}$:
\begin{align*}
    \hat{P}_I(s'|s,o) &= \frac{\texttt{count}(s,o,s')}{N} = \frac{\sum_{i=1}^N \mathrm{1}\{ s^{'}_i=s'\}\gamma^{\tau_i}} {N}, \\
    &\texttt{where } \{ s'_i, \tau_i\} \sim P(\cdot|s,o) \forall 1 \leq i \leq N.
\end{align*}
where $\texttt{count}$ is the number of times the state-option pair $(s, o)$ pair transitions to $s'$. Note that $\mathcal{M}_\mathcal{I}$ and $\hat{\cal M}_\AF$ are equivalent to the SMDP $\cal M$ in reward\footnote{Note that here we assume the reward function is known and deterministic and therefore is identical to the true SMDP.}, except the estimated transition dynamics instead of the true transition kernel per option i.e. $P_o$. 

To derive an $\varepsilon$ optimal estimate of the optimal value function in the SMDP, we here consider the \textit{SMDP Q-value iteration (QVI)}~\citep{sutton1999between} analogous to the primitive case of Q-value iteration, but only for state-option pairs that are affordable. See~\ref{sec:SMDP-QVI} for details.

\begin{theorem}
Let $\mathcal{M}$ be a SMDP, $\TEISet$ a set of temporally extended intents corresponding to a set of options $\mathcal{O}$. If $\hat{\cal M}_\AF$ is the corresponding approximate SMDP over affordable state-option pairs $\AF$, and $Q_k$ is returned by SMDP Q-value iteration at the $k^{th}$ epoch, with inputs including the approximate SMDP as the generative model, and number of samples $m$, where 
\[
    m = \mathcal{O} \left( \frac{ |\AF|}{(1-\upgamma)^{4} {\varepsilon}^2} \right),
\]
then with probability greater than $1-\delta$, the following holds for $\varepsilon \geq \frac{2 \zeta^{\mathcal{I}^{\rightarrow}} \upgamma}{(1-\upgamma)^2}$, and for all $s$, $o$:
\[
    ||  Q_k -  Q^{*}  ||_\infty \leq \varepsilon,
\]
where $\zeta^{\mathcal{I}^{\rightarrow}}$ is the degree of satisfaction of the intents, $\upgamma$ is the maximum expected discount factor of an option, $k = \frac{\log \Big( \frac{\varepsilon (1-\upgamma)^2 - 2 \zeta^{\mathcal{I}^{\rightarrow}} \upgamma}{2(1-\upgamma)} \Big)}{\log \upgamma}$, and $Q^*$ is the optimal option value function in the underlying SMDP $\mathcal{M}$.
\label{theorem:partialmodelsamplecomplexity}
\end{theorem}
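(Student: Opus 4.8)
The plan is to follow the standard template for model-based sample-complexity analysis with a generative model (as in Kearns--Singh and Azar et al.), adapted to the SMDP/affordance setting, by splitting the error into three independently controllable sources. Writing $\hat{Q}^*_\AF$ for the optimal option-value function of the empirical SMDP $\hat{\cal M}_\AF$, $Q^*_\AF$ for that of the true-dynamics SMDP restricted to the affordable pairs $\AF$, and $Q^*$ for the optimum of the full SMDP $\cal M$, I would begin with the triangle inequality
\[
\|Q_k - Q^*\|_\infty \le \underbrace{\|Q_k - \hat{Q}^*_\AF\|_\infty}_{\text{iteration}} + \underbrace{\|\hat{Q}^*_\AF - Q^*_\AF\|_\infty}_{\text{estimation}} + \underbrace{\|Q^*_\AF - Q^*\|_\infty}_{\text{approximation}} .
\]
The approximation term is exactly the value loss incurred by planning with the intent-induced, affordance-restricted model, so I would bound it directly by the multi-time-model value-loss result (Corollary~\ref{corollary-smdp-valueloss}), giving $\le \frac{2\zeta^{\mathcal{I}^{\rightarrow}}\upgamma}{(1-\upgamma)^2}$ (with $R^{\mathcal{O}}_{max}$ normalized to one, consistent with the theorem's hypothesis). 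This is an irreducible floor on the error, which is precisely what forces the standing assumption $\varepsilon \ge \frac{2\zeta^{\mathcal{I}^{\rightarrow}}\upgamma}{(1-\upgamma)^2}$.

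For the iteration term I would use that SMDP Q-value iteration is a $\upgamma$-contraction in $\|\cdot\|_\infty$, where $\upgamma = \max_{s,o}\sum_{s'}\gamma_o(s,s')$ is the expected discount; starting from $Q_0 = 0$ and invoking the bound $V_{max} \le R^{\mathcal{O}}_{max}/(1-\upgamma)$ of Remark~\ref{remark:generalupperboundonvstar}, this gives $\|Q_k - \hat{Q}^*_\AF\|_\infty \le \upgamma^k/(1-\upgamma)$. I would then split the leftover budget $\varepsilon - \frac{2\zeta^{\mathcal{I}^{\rightarrow}}\upgamma}{(1-\upgamma)^2}$ evenly between the iteration and estimation terms; setting $\frac{\upgamma^k}{1-\upgamma}$ equal to half of it and solving for $k$ reproduces exactly the stated value $k = \log\!\big(\frac{\varepsilon(1-\upgamma)^2 - 2\zeta^{\mathcal{I}^{\rightarrow}}\upgamma}{2(1-\upgamma)}\big)/\log\upgamma$.

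The estimation term is the technical crux and where I expect the real work. The key component-wise inequality is that, since $\hat{Q}^*_\AF$ and $Q^*_\AF$ are fixed points of Bellman operators differing only in their transition kernels, one obtains $\|\hat{Q}^*_\AF - Q^*_\AF\|_\infty \le \frac{1}{1-\upgamma}\max_{(s,o)\in\AF}\big|\langle (\hat{P} - P)(s,o,\cdot),\, V^*_\AF\rangle\big|$, where crucially $V^*_\AF$ is the true restricted optimal value and hence independent of the samples. I would then apply Hoeffding's inequality to $\langle(\hat{P}-P)(s,o,\cdot),V^*_\AF\rangle = \frac{1}{N}\sum_i \gamma^{\tau_i}V^*_\AF(s'_i) - \mathbb{E}[\,\cdot\,]$, whose summands lie in $[0,V_{max}]$, and take a union bound over the $|\AF|$ affordable pairs, obtaining $\le V_{max}\sqrt{\tfrac{1}{2N}\log\tfrac{2|\AF|}{\delta}}$ with probability $\ge 1-\delta$. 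This yields estimation error $\le \frac{R^{\mathcal{O}}_{max}}{(1-\upgamma)^2}\sqrt{\tfrac{1}{2N}\log\tfrac{2|\AF|}{\delta}}$; bounding this below the remaining half-budget and inverting for the per-pair sample count $N$, then multiplying by $|\AF|$ for the total, gives $m = \mathcal{O}\!\big(\tfrac{|\AF|}{(1-\upgamma)^4\varepsilon^2}\big)$ up to the logarithmic factor. Summing the three bounds with this budget allocation yields $\|Q_k-Q^*\|_\infty\le\varepsilon$, while the single union bound over $\AF$ supplies the $1-\delta$ confidence. The main obstacles are getting the $(1-\upgamma)$ exponents right through the component-wise propagation (the factor from inverting the contraction times the $V_{max}$ factor is what produces the $(1-\upgamma)^4$), and ensuring the value function entering the concentration step is sample-independent so that Hoeffding applies cleanly.
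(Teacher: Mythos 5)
Your proposal is correct and takes essentially the same route as the paper's own proof: the identical three-term triangle-inequality decomposition (a $\upgamma$-contraction argument for SMDP-QVI starting from $Q_0=0$, a Hoeffding-plus-union bound over the $|\AF|$ affordable pairs applied to a sample-independent optimal value function, and an intent-approximation floor), with budget arithmetic that reproduces exactly the stated $k$ and $m = \mathcal{O}\big( |\AF| / ((1-\upgamma)^4 \varepsilon^2) \big)$. The only cosmetic deviation is that you bound the approximation term by invoking Corollary~\ref{corollary-smdp-valueloss} (cost $2\zeta^{\mathcal{I}^{\rightarrow}}\upgamma/(1-\upgamma)^2$), whereas the paper derives a direct Bellman-operator bound of $\zeta^{\mathcal{I}^{\rightarrow}}\upgamma R^{\mathcal{O}}_{max}/(1-\upgamma)^2$ for its Term~(C) and then allocates $\varepsilon/2$ jointly to the iteration-plus-approximation terms --- an arithmetically equivalent allocation yielding the same $k$.
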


The proof is in Appendix~\ref{sec:appendix-partialmodelamplecomplexity}. The approximation error in the intended distribution $\zeta^{\mathcal{I}^{\rightarrow}}$ predominantly governs how good an estimate of the optimal option value function can be made for a given set of intents $\TEISet$. Our results suggests that we can only guarantee approximations of $Q^*$ up to the lower bound on $\varepsilon$ i.e. $\frac{2 \zeta^{\mathcal{I}^{\rightarrow}} \upgamma}{(1-\upgamma)^2}$. %

Following through the proof of Theorem~\ref{theorem:partialmodelsamplecomplexity}, it is easy to show that the number of samples $m$ required to obtain an $\varepsilon$ estimation of the optimal $Q$-value function without incorporating affordances is proportional to the size of the state-option space as shown in Theorem~\ref{theorem:smdpsamplecomplexity}.

\begin{theorem}
Let $\mathcal{M}$ be a SMDP with a set of options $\mathcal{O}$. If $\hat{\cal M}$ is the corresponding approximate SMDP, and $Q_k$ is returned by SMDP Q-value iteration at the $k^{th}$ epoch, with inputs including the approximate SMDP as the generative model, and number of samples $m$, where 
\begin{align*}
    m = \mathcal{O} \Big( \frac{|\mathcal{S}| |\mathcal{O}|}{(1-\upgamma)^4 \varepsilon^2} \Big), 
\end{align*} then with probability greater than $1-\delta$, the following holds for all $s$ and $o$: 
\begin{equation*}
\textstyle   || Q_k -  Q^{*}  ||_\infty \leq \varepsilon,
\end{equation*}
where $\upgamma$ is the maximum expected option discount factor, $k = \frac{\log(\varepsilon (1-\upgamma))}{\log \upgamma}$, and $Q^*$ is the optimal option value function in the underlying SMDP $\mathcal{M}$.
\label{theorem:smdpsamplecomplexity}
\end{theorem}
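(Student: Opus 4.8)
The plan is to mirror the classical generative-model analysis of value iteration (as in~\cite{azar2012sample}), specialized to the SMDP Bellman operator, and to read off the present statement as the ``$\zeta^{\mathcal{I}^{\rightarrow}}=0$, $|\AFnoarrow|=|\mathcal{S}||\mathcal{O}|$'' instance of the already-established Theorem~\ref{theorem:partialmodelsamplecomplexity}. First I would introduce $\hat{Q}^*$, the optimal option-value function of the empirical SMDP $\hat{\cal M}$, and split the target error by the triangle inequality into an \emph{optimization} term $\|Q_k - \hat{Q}^*\|_\infty$ and an \emph{estimation} term $\|\hat{Q}^* - Q^*\|_\infty$.

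For the optimization term, the key fact is that the SMDP Bellman optimality operator $(\mathcal{T}Q)(s,o) = r(s,o) + \sum_{s'} p(s'|s,o)\max_{o'}Q(s',o')$ is a sup-norm contraction with modulus $\upgamma^{\cal O}=\max_{s,o}\sum_{s'}\gamma_o(s,s')<1$, since $\sum_{s'}p(s'|s,o)\le\upgamma^{\cal O}=\upgamma$. Hence QVI contracts geometrically, $\|Q_k-\hat{Q}^*\|_\infty \le \upgamma^{k}\,\|Q_0 - \hat{Q}^*\|_\infty \le \upgamma^{k} V_{max}$, and using the bound $V_{max}\le R^{\mathcal{O}}_{max}/(1-\upgamma)$ from Remark~\ref{remark:smdpvmaxupperbound}, the stated choice $k=\log(\varepsilon(1-\upgamma))/\log\upgamma$, which forces $\upgamma^{k}\propto\varepsilon(1-\upgamma)$, makes this term of order $\varepsilon$.

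The estimation term is the crux. Because the reward is known and shared between $\mathcal{M}$ and $\hat{\cal M}$, the reward difference vanishes and only the transition estimate contributes. I would use the component-wise decomposition $Q^*-\hat{Q}^* = (P-\hat P)V^* + \hat P(V^*-\hat V^*)$; taking sup norms and absorbing the second term through the $\upgamma$-contraction yields $\|\hat{Q}^* - Q^*\|_\infty \le \frac{1}{1-\upgamma}\max_{s,o}\big|(P-\hat P)V^*(s,o)\big|$. The point is that $V^*$ is a \emph{fixed} vector, so $\hat P V^*(s,o)=\frac{1}{N}\sum_{i=1}^N \gamma^{\tau_i}V^*(s_i')$ is an average of $N$ i.i.d.\ variables bounded in $[0,V_{max}]$ with mean $PV^*(s,o)$. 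Applying Hoeffding's inequality together with a union bound over all $|\mathcal{S}||\mathcal{O}|$ state-option pairs gives, with probability at least $1-\delta$, $\max_{s,o}|(P-\hat P)V^*(s,o)| \le V_{max}\sqrt{\tfrac{1}{2N}\log\tfrac{2|\mathcal{S}||\mathcal{O}|}{\delta}}$, whence $\|\hat{Q}^* - Q^*\|_\infty \le \frac{R^{\mathcal{O}}_{max}}{(1-\upgamma)^2}\sqrt{\tfrac{1}{2N}\log\tfrac{2|\mathcal{S}||\mathcal{O}|}{\delta}}$. Requiring this to be at most $\varepsilon/2$ and solving for the per-pair count gives $N=\mathcal{O}\big(\tfrac{1}{(1-\upgamma)^4\varepsilon^2}\log\tfrac{|\mathcal{S}||\mathcal{O}|}{\delta}\big)$; the two factors of $(1-\upgamma)^{-2}$ (one from the contraction, one hidden in $V_{max}$) account for the $(1-\upgamma)^{-4}$, and the total budget $m=N\,|\mathcal{S}||\mathcal{O}|$ yields the claimed $\mathcal{O}\big(\tfrac{|\mathcal{S}||\mathcal{O}|}{(1-\upgamma)^4\varepsilon^2}\big)$.

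The main obstacle is handling the estimation term cleanly: the natural quantity $\hat P\hat V^*$ is \emph{not} an average of i.i.d.\ variables, since $\hat V^*$ itself depends on the samples. The standard resolution—concentrating against the fixed target $V^*$ first and then closing the loop through the contraction—is what I would be careful to justify. A second point warranting care is that the sub-probability (discounted) transition $\hat P$ forces the bounded summand to be $\gamma^{\tau_i}V^*(s_i')$ rather than $V^*(s_i')$, which is precisely the mechanism that replaces $\gamma$ by the expected option discount $\upgamma$ throughout the argument.
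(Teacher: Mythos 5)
Your proposal matches the paper's proof essentially step for step: the same triangle-inequality split into an optimization term $\|Q_k-\hat{Q}^*\|_\infty$ (handled by the $\upgamma$-contraction of SMDP-QVI, with the same choice of $k$) and an estimation term $\|\hat{Q}^*-Q^*\|_\infty$ (handled via the decomposition $(P-\hat{P})V^* + \hat{P}(V^*-\hat{V}^*)$, Hoeffding's inequality against the \emph{fixed} vector $V^*$, and a union bound over all $|\mathcal{S}||\mathcal{O}|$ state-option pairs), followed by $V_{max}\le R^{\mathcal{O}}_{max}/(1-\upgamma)$ from Remark~\ref{remark:smdpvmaxupperbound} and an $\varepsilon/2$-split to solve for $N$ and $k$. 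Your two cautionary notes---concentrating against $V^*$ rather than the data-dependent $\hat{V}^*$, and taking the bounded summand to be $\gamma^{\tau_i}V^*(s_i')$ in accordance with the discounted (sub-probability) empirical model---are exactly the right subtleties, and on the second point you are in fact slightly more careful than the paper's own write-up, which states the summand as $V^*(s_i')$.
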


For a complete proof, See Appendix~\ref{sec:appendix-smdpsamplecomplexity}. To summarize, Table~\ref{tab:samplecomplexity-comparison} decouples the role of temporal abstraction and the effect of incorporating affordances. Predicting and reasoning across multiple timescales naturally results in a growing set of action choices leading to a large number of samples. Larger gains can be established when considering both temporal abstractions and affordance information, with a carefully designed set of intents.

\subsection{Proofs - Sample Complexity Analysis}
\label{sec:appendix-samplecomplexity}

\textbf{Note:} We again overload notation and throughout our proofs, for convenience we interchangeably use ${\mathcal{I}}$ and $\TEISet$ to denote set of temporally extended intents. Similarly, for convenience we interchangeably use $I$ and $\TEI$ to denote a temporally extended intent for an option $o$.

\subsubsection{SMDP Q-Value Iteration (QVI)}
\label{sec:SMDP-QVI}
To derive an $\varepsilon$ optimal estimate of the optimal option-value function in the SMDP, we here consider the \textit{SMDP Q-value iteration} (SMDP-QVI)~\citep{sutton1999between} process as detailed in algorithm below.

\begin{algorithm}[h]
\begin{algorithmic}[1]
\STATE $V_0 = 0$, $Q_0 = 0$
\FOR{epoch $k=1 \dots K$ }
\FOR{$(s,o) \in \AFnoarrow$, }
\STATE $Q_k(s,o) = r(s,o) + (\hat{P}^{I}_o V_{k-1})(s,o)$
\STATE $V_k(s) = \max_{o \in \AFnoarrow(s)} Q_k(s,o)$
\ENDFOR
\ENDFOR
\STATE Output $Q_k$
\end{algorithmic}
\caption{{\bf Model-based SMDP Q-Value Iteration (SMDP-QVI)}}
\label{alg:SMDP-QVI}
\end{algorithm}

\subsubsection{\textbf{Proof of Theorem~\ref{theorem:partialmodelsamplecomplexity}} - Sample complexity of Temporally Abstract Partial Model.}
\label{sec:appendix-partialmodelamplecomplexity}

\begin{proof}
We here consider the transition models in the ground SMDP $\mathcal{M}$, the intent induced SMDP $\mathcal{M}_\mathcal{I}$, and the approximate SMDP $\hat{\cal M}_\AFnoarrow$ over affordable state-option pairs are denoted by $P_o$, $P^{I}_o$, and $\hat{P}^{I}_o$ respectively.

We here consider $\Big| \Big|  Q_k -  Q^{*}  \Big| \Big|_\infty$.

Adding and subtracting $\hat{Q}^{*}_{\hat{\cal M}_\AFnoarrow}$ and $Q^{*}_{\mathcal{M}_\mathcal{I}}$ we get,
\begin{align*}
      Q_k -  Q^{*} = \underbrace{Q_k - \hat{Q}^{*}_{\hat{\cal M}_\AFnoarrow}}_\text{Term (A)} +  \underbrace{ \hat{Q}^{*}_{\hat{\cal M}_\AFnoarrow} - Q^{*}_{\mathcal{M}_\mathcal{I}}}_\text{Term (B)} +  \underbrace{Q^{*}_{\mathcal{M}_\mathcal{I}} - Q^{*}}_\text{Term (C)}
\end{align*}

\textbf{Bounding Term (A)}
\begin{align*}
    \Big| \Big|  Q_k -  \hat{Q}^{*}_{\hat{\cal M}_\AFI}  \Big| \Big|_\infty &= \max_{(s,o) \in \AFI} \Big[ r(s,o) + (\hat{P}^{I}_o V_{k-1})(s,o) - ( r(s,o) + (\hat{P}^{I}_o \hat{V}^{*})(s,o) ) \Big] \\
    &= \max_{(s,o) \in \AFI} \Big|(\hat{P}^{I}_o ( V_{k-1} - \hat{V}^{*} )) (s,o) \Big| \\
    &\leq \upgamma \Big| \Big| V_{k-1} - \hat{V}^{*}  \Big|\Big|_\infty \\
    &\leq \upgamma \max_{s \in \AFI(o)} \Big|  \max_{o \in \AFI(s)}  Q_{k-1}(s,o) -  \max_{o \in \AFI(s)}  \hat{Q}^{*}_{\hat{\cal M}_\AFI}(s,o) \Big| \\
    &\leq \upgamma \max_{(s,o) \in \AFI} \Big|   Q_{k-1}(s,o) -  \hat{Q}^{*}_{\hat{\cal M}_\AFI}(s,o) \Big| \\
    &= \upgamma \Big| \Big| Q_{k-1} - \hat{Q}^{*}_{\hat{\cal M}_\AFI}  \Big|\Big|_\infty
\end{align*}

Unrolling the above $k$ times, we get;
\begin{align*}
    \Big| \Big|  Q_k -  \hat{Q}^{*}_{\hat{\cal M}_\AFI}  \Big| \Big|_\infty \leq (\upgamma)^k \Big| \Big| Q_{0} - \hat{Q}^{*}_{\hat{\cal M}_\AFI} \Big|\Big|_\infty \leq \frac{(\upgamma)^k }{(1-\upgamma)}
\end{align*}

\textbf{Bounding Term (B)}
\begin{align*}
    \Big( \hat{Q}^{*}_{\hat{\cal M}_\AFI} - Q^{*}_{\mathcal{M}_\mathcal{I}}  \Big) (s,o) &= (\hat{P}^{I}_o \hat{V}^{*})(s,o) - (P^{I}_o V^{*})(s,o) \\
    &= \underbrace{(\hat{P}^{I}_o V^{*} -  P^{I}_o V^{*})(s,o)} +  \underbrace{(\hat{P}^{I}_o \hat{V}^{*})(s,o) - (\hat{P}^{I}_o V^{*})(s,o)}    \text{ Adding and Subtracting $\hat{P}^{I}_o V^{*}$} \\
    &= \Big(\Big( \hat{P}^{I}_o - P^{I}_o \Big) V^{*}\Big)(s,o) - \Big(\ \hat{P}^{I}_o \Big( V^{*} - \hat{V}^{*} \Big) \Big) (s,o)\\
    &= \Big( \Big( \hat{P}^{I}_o - P^{I}_o \Big) V^{*} \Big)(s,o) - \\
    &\sum_{s' \in \AFI(o)} \hat{P}^{I}_o(s'|s,o)  \Big( \max_{o' \in \AFI(s)} Q^{*}_{\mathcal{M}_\mathcal{I}}(s',o') - \max_{o' \in \AFI(s)} \hat{Q}^{*}_{\hat{\cal M}_\AFI}(s',o')\Big)
\end{align*}

Considering the max over all state-options, we have;
\begin{align*}
    \Big| \Big|  \hat{Q}^{*}_{\hat{\cal M}_\AFI} - Q^{*}_{\mathcal{M}_\mathcal{I}}  \Big| \Big|_\infty \leq  \Big|\Big| \Big( \hat{P}^{I}_o - P^{I}_o \Big) V^{*} \Big|\Big| + \upgamma \Big| \Big|  \hat{Q}^{*}_{\hat{\cal M}_\AFI} - Q^{*}_{\mathcal{M}_\mathcal{I}}  \Big| \Big|_\infty
\end{align*}

Finally;
\begin{align*}
    \Big| \Big|  \hat{Q}^{*}_{\hat{\cal M}_\AFI} - Q^{*}_{\mathcal{M}_\mathcal{I}}  \Big| \Big|_\infty &\leq  \frac{1}{(1-\upgamma)} \Big|\Big| \Big( \hat{P}^{I}_o - P^{I}_o \Big) V^{*}_{\mathcal{M}_\mathcal{I}} \Big|\Big|
\end{align*}

Now let's fix a state option pair $(s,o) \in \AFI$
\begin{align*}
    \Big( \hat{P}^{I}_o - P^{I}_o \Big) V^{*}_{\mathcal{M}_\mathcal{I}} &= \frac{1}{N} \sum_{i=1}^N V^{*}_{\mathcal{M}_\mathcal{I}}(s'_{i}) - \mathrm{E}_{s' \in P^{I}_o(\cdot|s, o)} [V^{*}_{\mathcal{M}_\mathcal{I}}(s')] \\
    &= \frac{1}{N} \Big( S_N - \mathrm{E}[S_N] \Big)
\end{align*}

where $S_N = \sum_{i=1}^N X_i$ and $X_i = V^{*}(s^{'}_{i})$, $X_i$ are independent variable and $|X_i| \leq V_{max}$.

We now consider the Hoeffdings inequality:
\begin{align*}
        P \Big( \frac{1}{N}  (S_N - \mathrm{E}[S_N]) \geq t \Big)  \leq 2 \exp \Big( \frac{-N^2 t^2}{N V^2_{max}} \Big) = 2 \exp \Big( \frac{-Nt^2}{V^2_{max}} \Big)
\end{align*}

Applying Hoeffdings, we get;
\begin{align*}
        P \Big( \max_{s,o \in \AFI} \Big| ( \hat{P}^{I}_o - P^{I}_o ) V^{*}_{\mathcal{M}_\mathcal{I}} (s,o)\Big| \geq t \Big)  &= P \Big( \exists (s,o \in \AFI) s.t. \Big| ( \hat{P}^{I}_o - P^{I}_o ) V^{*}_{\mathcal{M}_\mathcal{I}} (s,o)\Big|  \geq t \Big)\\
        &\leq \sum_{\AFI} Pr \Big( \Big| ( \hat{P}^{I}_o - P^{I}_o \Big) V^{*}_{\mathcal{M}_\mathcal{I}} (s,o)\Big| \geq t \Big) \text{// Union Bound} \\
        &= 2|\AFI(o)| |\AFI(s)| \exp \Big( \frac{-Nt^2}{V^2_{max}} \Big) \\
        &= 2 |\AFI| \exp \Big( \frac{-Nt^2}{V^2_{max}} \Big)
\end{align*}

We assume that the failure probability $\delta \geq 0$, We then solve for $t$ by equating the RHS to $\delta$ as follows:
\begin{align*}
    2 |\AFI| \exp \Big( \frac{-Nt^2}{V^2_{max}} \Big) &= \delta \\
    \exp \Big( \frac{-Nt^2}{V^2_{max}} \Big) &= \frac{\delta}{2 |\AFI|} \\
   \frac{-Nt^2}{V^2_{max}} &= \log \frac{\delta}{2 |\AFI|} \\
   t^2 &=  \frac{V^2_{max}}{N} \log  \frac{2 |\AFI|}{\delta} \\
   t &= V_{max} \sqrt{\frac{1}{N} \log  \frac{2 |\AFI|}{\delta}}
\end{align*}

Plugging this back in Term (B) $ \Big| \Big|  \hat{Q}^{*}_{\hat{\cal M}_\AFI} - Q^{*}_{\mathcal{M}_\mathcal{I}}  \Big| \Big|_\infty \leq  \frac{1}{(1-\upgamma)} \Big|\Big| \Big( \hat{P}_o - P_o \Big) V^{*} \Big|\Big|$, we get:
\begin{align*}
   \Big| \Big|  \hat{Q}^{*}_{\hat{\cal M}_\AFI} - Q^{*}_{\mathcal{M}_\mathcal{I}}  \Big| \Big|_\infty &\leq \frac{V_{max}}{(1-\upgamma)}   \sqrt{\frac{1}{N} \log  \frac{2 |\AFI| }{\delta}}
\end{align*}

Based on Remark~\ref{remark:smdpvmaxupperbound}, 
\begin{align*}
   \Big| \Big|  \hat{Q}^{*}_{\hat{\cal M}_\AFI} - Q^{*}_{\mathcal{M}_\mathcal{I}}  \Big| \Big|_\infty &\leq \frac{R^{\mathcal{O}}_{max}}{(1-\upgamma)^2}   \sqrt{\frac{1}{N} \log  \frac{2 |\AFI| }{\delta}}
\end{align*}

\textbf{Bounding Term (C)} $\Big| \Big| Q^{*}_{\mathcal{M}_\mathcal{I}} - Q^{*} \Big| \Big|_\infty$

We first define the following optimality bellman operator:
\begin{align*}
    Q^{*}_\mathcal{M} &= \mathcal{T} Q^{*}_\mathcal{M} \\
    \text{where} \Big( \mathcal{T}f \Big) &:= R(s,o) + \langle P(s,o), V_f \rangle \\
    \text{where} V_f(\cdot) &:= \max_{o \in \mathcal{O}} f(\cdot, o)
\end{align*}

Our aim here is to bound $\Big| \Big| Q^*_{M_1} - Q^*_{ M_2} \Big| \Big|_{\infty}$ for any two SMDP models $M_1$ and $M_2$. 

Let ${\cal T}_{1}$ and ${\cal T}_{2}$ be the Bellman operator of the SMDPs $M_1$ and $M_2$ respectively. Therefore,
\begin{align*}
    \Big| \Big| Q^*_{M_1} - {\cal T}_{2} Q^*_{ M_1} \Big| \Big|_{\infty} &=  \Big| \Big| {\cal T}_{1} Q^*_{ M_1} - {\cal T}_{2} Q^*_{M_1} \Big| \Big|_{\infty} \\
    &= \max_{(s, o) \in S \times \mathcal{O}} \Big| \langle P_1(s,o), V^*_{ M_1} \rangle  -  \langle P_2(s,o), V^*_{ M_1} \rangle      \Big| \\
    &= \max_{(s, o) \in S \times \mathcal{O}} \Big| \mathbb{E}_{s' \sim P_1(s,o)} [V^*_{M_1}(s')] - \mathbb{E}_{s' \sim P_2(s,o)} [V^*_{M_1}(s')]\Big| \\
    &\leq \Big |\Big |d^{\mathrm{F}}_{M_1, M_2}\Big |\Big |_{\infty}
\end{align*}

Therefore, 
\begin{align*}
    \Big| \Big| Q^*_{M_1} - Q^*_{ M_2} \Big| \Big|_{\infty} &=  \Big| \Big| Q^*_{M_1} - {\cal T}_{2} Q^*_{ M_1} + {\cal T}_{2} Q^*_{ M_1} -  {\cal T}_{2} Q^*_{ M_2} \Big| \Big|_{\infty} \\
    &\leq \Big |\Big |d^{\mathrm{F}}_{M_1, M_2}\Big |\Big |_{\infty} + \Big| \Big| {\cal T}_{2} Q^*_{ M_1} -  {\cal T}_{2} Q^*_{ M_2} \Big| \Big|_{\infty}
\end{align*}

Bounding the second term of the last step i.e. $\Big| \Big| {\cal T}_{2} Q^*_{ M_1} -  {\cal T}_{2} Q^*_{ M_2} \Big| \Big|_{\infty}$;
\begin{align*}
    \Big| {\cal T}_{2} f_1(s,o) -  {\cal T}_{2} f_2(s,o)  \Big|  &= \Big | \big( r(s,o) + \langle P_2(s,o) V_{f_1}(s) \rangle \big) -  \big( r(s,o) +  \langle P_2(s,o) V_{f_2}(s) \rangle       \big) \Big | \\
    &= \Big | \langle P_2(s,o) V_{f_1}(s) \rangle \big) -  \langle P_2(s,o) V_{f_2}(s) \rangle  \Big | \\
    &\leq \max_{ (s,o) \in \mathcal{S} \times \mathcal{O}} \Big| \mathbb{E}_{s' \sim P_2(s,o)} [V_{f_1}(s')] - \mathbb{E}_{s' \sim P_2(s,o)} [V_{f_2}(s')]\Big| \\
    &= \max_{ (s,o) \in \mathcal{S} \times \mathcal{O}} \sum_{s'} P_2(s'|s,o)  \Big| V_{f_1}(s') - V_{f_2}(s')\Big| \\
    &\leq \upgamma \Big| \Big|  V^*_{M_1} - V^*_{ M_2}   \Big| \Big|_{\infty}
\end{align*}

Therefore,
\begin{align*}
    \Big| \Big| Q^{*}_{\mathcal{M}_\mathcal{I}} - Q^{*} \Big| \Big|_\infty \leq \Big |\Big |d^{\mathrm{F}}_{M_1, M_2}\Big |\Big |_{\infty} + \upgamma \Big| \Big|  V^*_{\mathcal{M}_\mathcal{I}} - V^*   \Big| \Big|_{\infty}
\end{align*}

where note that the second term in the last step is bounded as following, 
\begin{align*}
    \max_{s} \Big|  V^*_{M_1} - V^*_{ M_2}   \Big|  &= \max_s \Big|  \max_{o} Q^*_{M_1}(s,o) - \max_{o} Q^*_{ M_2}(s,o)   \Big| \\
    &\leq \max_s \Big|  \max_{o} ( Q^*_{M_1}(s,o) - Q^*_{ M_2}(s,o))   \Big| \\
    &\leq \max_{s,o} \Big|  Q^*_{M_1}(s,o) - Q^*_{ M_2}(s,o)  \Big| \\
    &= \Big| \Big| Q^*_{M_1} - Q^*_{ M_2} \Big| \Big|_{\infty}
\end{align*}

Therefore,
\begin{align*}
    \Big| \Big| Q^{*}_{\mathcal{M}_\mathcal{I}} - Q^{*} \Big| \Big|_\infty &\leq \Big |\Big |d^{\mathrm{F}}_{M_1, M_2}\Big |\Big |_{\infty} + \upgamma \Big| \Big|  V^*_{\mathcal{M}_\mathcal{I}} - V^*   \Big| \Big|_{\infty} \\
    &\leq \Big |\Big |d^{\mathrm{F}}_{M_1, M_2}\Big |\Big |_{\infty} + \upgamma \Big| \Big|  Q^*_{\mathcal{M}_\mathcal{I}} - Q^*   \Big| \Big|_{\infty} \\
    &\leq \frac{1}{(1-\upgamma)} \Big |\Big |d^{\mathrm{F}}_{{\mathcal{M}_\mathcal{I}}, \mathcal{M}}\Big |\Big |_{\infty} \\
    &\leq \frac{ \zeta^{\mathcal{I}}  \upgamma R^{\mathcal{O}}_{max}}{(1-\upgamma)^2}.
\end{align*}

We conclude, 
\begin{align*}
     \Big| \Big| Q_k -  Q^{*} \Big| \Big|_\infty &\leq \Big| \Big| Q_k -  \hat{Q}^{*}_{\hat{\cal M}_\AF} \Big| \Big|_\infty +  \Big| \Big| \hat{Q}^{*}_{\hat{\cal M}_\AF} - Q^{*}_{\mathcal{M}_\mathcal{I}} \Big| \Big|_\infty +  \Big| \Big| Q^{*}_{\mathcal{M}_\mathcal{I}} - Q^{*} \Big| \Big|_\infty \\
     &\leq \frac{(\upgamma)^k }{(1-\upgamma)} + \frac{1}{(1-\upgamma)^2}   \sqrt{\frac{1}{N} \log  (2 |\AFI|)} +\frac{ \zeta^{\mathcal{I}}  \upgamma R^{\mathcal{O}}_{max}}{(1-\upgamma)^2}
\end{align*}

To obtain an $\varepsilon$ estimation of the optimal $Q$-value function in the SMDP, we distribute the error across Term A, B, and C such that ;
\begin{align*}
    \Big| \Big| Q_k -  Q^{*}  \Big| \Big|_\infty &\leq \underbrace{\text{Term (A)} + \text{Term (C)}}_{\leq \varepsilon/2} + \underbrace{\text{Term (B)}}_{\leq \varepsilon/2}
\end{align*}

By choosing $k = \frac{\log \Big( \frac{\varepsilon (1-\upgamma)^2 - 2 \zeta \upgamma}{2(1-\upgamma)} \Big)}{\log \upgamma}$ and $N=\frac{4}{(1-\upgamma)^{4} \varepsilon^2} \log( 2|\AF|)$, we get $\Big| \Big| Q_k -  Q^{*}  \Big| \Big|_\infty \leq \varepsilon/2 + \varepsilon/2$

Note that this choice of $k$ holds if and only if:
\begin{align*}
    \varepsilon (1-\upgamma)^2  &\geq 2 \zeta^{\mathcal{I}} \upgamma \\
    \varepsilon &\geq \frac{2 \zeta^{\mathcal{I}} \upgamma}{(1-\upgamma)^2}
\end{align*}

Therefore, the total number of samples needed to get an $\varepsilon$-estimation of the optimal option value function is;
\begin{align*} 
    N  |\mathcal{S}| |\mathcal{O}| = \mathcal{O} \Big( \frac{ |\AF|}{(1-\upgamma)^{4} {\varepsilon}^2} \Big)
\end{align*}
\end{proof}

\subsubsection{\textbf{Proof of Theorem~\ref{theorem:smdpsamplecomplexity}} - Sample complexity of Temporally Abstract Full Model.}
\label{sec:appendix-smdpsamplecomplexity}

\begin{proof}
We here consider $\Big| \Big|  Q_k -  Q^{*}  \Big| \Big|_\infty$, and $Q^*$ is the optimal option value function in the underlying SMDP $\mathcal{M}$.

Adding and subtracting $\hat{Q}^{*}$ we get,
\begin{align*}
      Q_k -  Q* = \underbrace{Q_k - \hat{Q}^*}_\text{Term (A)} +  \underbrace{\hat{Q}^* - Q^*}_\text{Term (B)}
\end{align*}

\textbf{Bounding Term (A)}
\begin{align*}
    \Big| \Big|  Q_k -  \hat{Q}^*  \Big| \Big|_\infty &= \max_{(s,o) \in \mathcal{S} \times \mathcal{O}} \Big[ r(s,o) + \hat{P}_o V_{k-1}(s,o) - ( r(s,o) + \hat{P}_o \hat{V}^{*}(s,o) ) \Big] \\
    &= \max_{(s,o) \in \mathcal{S} \times \mathcal{O}} \Big|\hat{P}_o ( V_{k-1} - \hat{V}^{*} ) (s,o) \Big| \\
    &\leq \gamma^{\mathcal{D}} \Big| \Big| V_{k-1} - \hat{V}^{*}  \Big|\Big|_\infty \\
    &\leq \gamma^{\mathcal{D}} \max_{s \in \mathcal{S}} \Big|  \max_{o \in \mathcal{O}}  Q_{k-1}(s,o) -  \max_{o \in \mathcal{O}}  \hat{Q}^{*}(s,o) \Big| \\
    &\leq \gamma^{\mathcal{D}} \max_{(s,o) \in \mathcal{S} \times \mathcal{O}} \Big|   Q_{k-1}(s,o) -  \hat{Q}^{*}(s,o) \Big| \\
    &= \gamma^{\mathcal{D}} \Big| \Big| Q_{k-1} - \hat{Q}^{*}  \Big|\Big|_\infty
\end{align*}

Unrolling the above $k$ times, we get;
\begin{align*}
    \Big| \Big|  Q_k -  \hat{Q}^*  \Big| \Big|_\infty \leq (\gamma^{\mathcal{D}})^k \Big| \Big| Q_{0} - \hat{Q}^{*} \Big|\Big|_\infty \leq \frac{(\gamma^{\mathcal{D}})^k }{(1-\gamma^{\mathcal{D}})}
\end{align*}

\textbf{Bounding Term (B)}
\begin{align*}
    \Big( \hat{Q}^{*} - Q^{*}  \Big) (s,o) &= \hat{P}_o \hat{V}^{*}(s,o) - P_o V^{*}(s,o) \\
    &= \hat{P}_o V^{*}(s,o) - P_o V^{*}(s,o) - \hat{P}_o \hat{V}^{*}(s,o)  - \hat{P}_o V^{*}(s,o)\text{ Adding and Subtracting $\hat{P}_o V^{*}$} \\
    &= \Big( \hat{P}_o - P_o \Big) V^{*}(s,o) - \hat{P}_o  \Big( \hat{V}^{*} - V^{*} \Big) (s,o)\\
    &= \Big( \hat{P}_o - P_o \Big) V^{*}(s,o) - \sum_{s' \in \mathcal{S}} \hat{P}_o(s'|s,o)  \Big( \max_{o' \in \mathcal{O}} \hat{Q}^{*}(s',o') - \max_{o' \in \mathcal{O}} Q^{*}(s',o') \Big)
\end{align*}

Therefore;
\begin{align*}
    \Big| \Big|  \hat{Q}^{*} - Q^{*}  \Big| \Big|_\infty \leq  \Big|\Big| \Big( \hat{P}_o - P_o \Big) V^{*} \Big|\Big| + \gamma^{\mathcal{D}} \Big|\Big| \hat{Q}^{*} - Q^{*}  \Big| \Big|_\infty 
\end{align*}

Finally;
\begin{align*}
    \Big| \Big|  \hat{Q}^{*} - Q^{*}  \Big| \Big|_\infty &\leq  \frac{1}{(1-\gamma^{\mathcal{D}})} \Big|\Big| \Big( \hat{P}_o - P_o \Big) V^{*} \Big|\Big|
\end{align*}

Now let's fix a state option pair $(s,o) \in \mathcal{S} \times \mathcal{O}$
\begin{align*}
    \Big( \hat{P}_o - P_o \Big) V^{*} &= \frac{1}{N} \sum_{i=1}^N V^{*}(s'_{i}) - \mathrm{E}_{s' \in P_o(\cdot|s, o)} [V^{*}(s')] \\
    &= \frac{1}{N} \Big( S_N - \mathrm{E}[S_N] \Big)
\end{align*}

where $S_N = \sum_{i=1}^N X_i$ and $X_i = V^{*}(s^{'}_{i})$, $X_i$ are independent variable and $|X_i| \leq V_{max}$.

We now consider the Hoeffdings inequality:
\begin{align*}
        P \Big( \frac{1}{N}  (S_N - \mathrm{E}[S_N]) \geq t \Big)  \leq 2 \exp \Big( \frac{-N^2 t^2}{N V^2_{max}} \Big) = 2 \exp \Big( \frac{-Nt^2}{V^2_{max}} \Big)
\end{align*}

Applying Hoeffdings, we get;
\begin{align*}
        P \Big( \max_{\mathcal{S}, \mathcal{O}} \Big| ( \hat{P}_o - P_o ) V^{*} (s,o)\Big| \geq t \Big)  &= P \Big( \exists (s,o) s.t. \Big| ( \hat{P}_o - P_o ) V^{*} (s,o)\Big|  \geq t \Big)\\
        &\leq \sum_{\mathcal{S}, \mathcal{O}} Pr \Big( \Big| ( \hat{P}_o - P_o \Big) V^{*} (s,o)\Big| \geq t \Big) \text{// Union Bound} \\
        &= 2|\mathcal{S}| |\mathcal{O}| \exp \Big( \frac{-Nt^2}{V^2_{max}} \Big)
\end{align*}

We assume that the failure probability $\delta \geq 0$, We then solve for $t$ by equating the RHS to $\delta$ as follows:
\begin{align*}
    2|\mathcal{S}| |\mathcal{O}| \exp \Big( \frac{-Nt^2}{V^2_{max}} \Big) &= \delta \\
    \exp \Big( \frac{-Nt^2}{V^2_{max}} \Big) &= \frac{\delta}{2 |\mathcal{S}| |\mathcal{O}|} \\
   \frac{-Nt^2}{V^2_{max}} &= \log \frac{\delta}{2|\mathcal{S}| |\mathcal{O}|} \\
   t^2 &=  \frac{V^2_{max}}{N} \log  \frac{2|\mathcal{S}| |\mathcal{O}|}{\delta} \\
   t = V_{max} \sqrt{\frac{1}{N} \log  \frac{2|\mathcal{S}| |\mathcal{O}|}{\delta}}
\end{align*}

Plugging this back in Term (B) $ \Big| \Big|  \hat{Q}^{*} - Q^{*}  \Big| \Big|_\infty \leq  \frac{1}{(1-\gamma^{\mathcal{D}})} \Big|\Big| \Big( \hat{P}_o - P_o \Big) V^{*} \Big|\Big|$, we get:
\begin{align*}
   \Big| \Big|  \hat{Q}^{*} - Q^{*}  \Big| \Big|_\infty &\leq \frac{V_{max}}{(1-\gamma^{\mathcal{D}})}   \sqrt{\frac{1}{N} \log  \frac{2|\mathcal{S}| |\mathcal{O}|}{\delta}}
\end{align*}

Therefore;
\begin{align*}
    \Big| \Big| Q_k -  Q^{*}  \Big| \Big|_\infty &\leq  \Big| \Big| Q_k - \hat{Q}^{*} \Big| \Big|_\infty  +  \Big| \Big|  \hat{Q}^{*}  - Q^{*}  \Big| \Big|_\infty  \\
   &\leq \frac{(\gamma^{\mathcal{D}})^k }{(1-\gamma^{\mathcal{D}})} + \frac{V_{max}}{(1-\gamma^{\mathcal{D}})}  \sqrt{\frac{1}{N} \log  \frac{2|\mathcal{S}| |\mathcal{O}|}{\delta}} \\
   &\leq \frac{(\gamma^{\mathcal{D}})^k }{(1-\gamma^{\mathcal{D}})} + \frac{R_{max}}{(1-\gamma^{\mathcal{D}})^2}  \sqrt{\frac{1}{N} \log  \frac{2|\mathcal{S}| |\mathcal{O}|}{\delta}}
\end{align*}

To obtain an $\varepsilon$ estimation of the optimal $Q$-value function in the SMDP, we distribute the error uniformly;
\begin{align*}
    \Big| \Big| Q_k -  Q^{*}  \Big| \Big|_\infty &\leq  \varepsilon/2 + \varepsilon/2
\end{align*}

Equating each term to $\varepsilon/2$ and solving for $k$ and $N$ results in $k = \frac{\log(\varepsilon (1-\gamma^{\mathcal{D}}))}{\log \gamma^{\mathcal{D}}}$ and $N=\frac{4}{(1-\gamma^{\mathcal{D}})^{4} \varepsilon^2} \log( 2|\mathcal{S}| |\mathcal{O}|)$
Therefore;
\begin{align*} 
    N  |\mathcal{S}| |\mathcal{O}| = \mathcal{O} \Big( \frac{ |\mathcal{S}| |\mathcal{O}|}{(1-\gamma^{\mathcal{D}})^{4} {\varepsilon}^2} \Big)
\end{align*}
\end{proof}

\end{document}